\documentclass{article}
\usepackage[T1]{fontenc}
\usepackage[utf8]{inputenc}
\usepackage[a4paper, margin=0.8in]{geometry}

\usepackage{graphicx}
\usepackage{subcaption}

\usepackage[colorlinks,urlcolor=blue,citecolor=blue,linkcolor=blue]{hyperref}

\usepackage{amsmath}
\usepackage{amssymb}
\usepackage{mathtools}
\usepackage{amsthm}

\usepackage[capitalize,noabbrev]{cleveref}
\usepackage{bbm}
\usepackage{overpic}
\usepackage{enumitem}
\setlist[itemize,1]{leftmargin=1.6em}
\setlist[enumerate,1]{leftmargin=1.8em}
\usepackage{nicematrix} 
\usepackage{stmaryrd} 
\usepackage{natbib}
\usepackage{tabularx}
\usepackage{pifont}
\usepackage[dvipsnames]{xcolor}

\usepackage{changes}
\let\highlight\relax
\usepackage{url}            
\usepackage{booktabs}       
\usepackage{amsfonts}       
\usepackage{nicefrac}       
\usepackage{microtype}      
\usepackage{soul}

\sethlcolor{lightgray!25}

\usepackage{authblk}
\usepackage{graphicx}
\usepackage{subcaption}
\usepackage{booktabs}
\usepackage{array}
\usepackage{amsmath}
\usepackage{amssymb}
\usepackage{mathtools}
\usepackage{amsthm}
\usepackage{bm}
\usepackage{thmtools}
\usepackage{thm-restate}
\usepackage{etoc}
\hypersetup{
    colorlinks = true,
    linkcolor = blue,
    citecolor=blue,
    urlcolor=black,
}
\definecolor{mblue}{rgb}{0.0,0.0,1.0}
\definecolor{mgreen}{rgb}{0.0,0.501960784314,0.0}
\definecolor{mred}{rgb}{1.0,0.0,0.0}
\usepackage[capitalize]{cleveref}
\usepackage{algorithm}
\usepackage[noend]{algpseudocode}

\usepackage{bbm}
\usepackage{xcolor}
\usepackage{parskip}
\usepackage{setspace}
\usepackage{comment}
\usepackage{caption}
\usepackage{enumitem}
\usepackage{wrapfig}

\newtheorem{theorem}{Theorem}[section]
\newtheorem{example}{Example}[section]

\newtheorem{corollary}{Corollary}[section]

\newtheorem{lemma}{Lemma}[section]

\newtheorem{assumption}{Assumption}[section]

\newcommand{\highlight}[1]{\textcolor{blue!70!black}{#1}}
\usepackage{xcolor}

\definecolor{supportblue}{RGB}{24,90,160}
\definecolor{goodgreen}{RGB}{20,125,70}
\definecolor{neutralgray}{RGB}{95,95,95}
\definecolor{badred}{RGB}{180,55,45}
\definecolor{projectpurple}{RGB}{115,65,160}

\newcommand{\towardsupport}[1]{\textcolor{goodgreen}{\textbf{#1}}}
\newcommand{\neutralscale}[1]{\textcolor{neutralgray}{\textbf{#1}}}
\newcommand{\awayfromsupport}[1]{\textcolor{badred}{\textbf{#1}}}

\DeclareMathOperator*{\argmin}{arg\,min}
\usepackage{changepage}

\usepackage{tikz}
\usetikzlibrary{arrows.meta,positioning}

\newcommand\pData{p_{\text{data}}}
\newcommand\pUnif{p_{\text{unif}}}
\newcommand\pMask{p_{\text{mask}}}
\newcommand\Pm{\mathbb{P}}
\newcommand\supp{\text{supp}}

\newcommand\proj{\text{proj}}

\newcommand\R{\mathbb{R}}

\newcommand\indicator[1]{\mathbbm{1}\left\{ #1 \right\}}
\newcommand\Simplex[1]{\Delta_{#1}}

\newcommand\Xc{\mathcal{X}}

\DeclareMathOperator*{\E}{\mathbb{E}}

\newcommand\Dkl{D_{\text{KL}}}

\newcommand\pHat{\hat{p}}

\newcommand{\bc}[1]{\left\{{#1}\right\}}
\newcommand{\br}[1]{\left({#1}\right)}

\makeatletter
\providecommand*{\bigcupdot}{%
  \mathop{%
    \vphantom{\bigcup}%
    \mathpalette\@bigcupdot{}%
  }%
}
\newcommand*{\@bigcupdot}[2]{%
  \ooalign{%
    $\m@th#1\bigcup$\cr
    \sbox0{$#1\bigcup$}%
    \dimen@=\ht0 %
    \advance\dimen@ by -\dp0 %
    \sbox0{\scalebox{1.5}{$\m@th#1\cdot$}}%
    \advance\dimen@ by -\ht0 %
    \dimen@=.5\dimen@
    \hidewidth\raise\dimen@\box0\hidewidth
  }%
}
\makeatother

\newcommand{\metricplot}[2]{%
  \includegraphics[width=\linewidth]{#1/#2}%
}

\title{Support Before Frequency in Discrete Diffusion}

\author{
Adrian M\"uller$^{1}$ \quad
Antoine Gonon$^{2}$ \quad
Zebang Shen$^{1}$ \quad
Ya-Ping Hsieh$^{1,3}$ \quad
Niao He$^{1}$
\and
\small $^{1}$ Department of Computer Science, ETH Z\"urich, Switzerland
\quad
\small $^{2}$ Institute of Mathematics, EPFL, Switzerland
\and
\small $^{3}$ Institute of Statistical Science, Academia Sinica, Taiwan
}

\begin{document}

\maketitle

\begin{abstract}
Discrete diffusion models are increasingly competitive for language modeling,
yet it remains unclear how their denoising objectives organize learning.
Although these objectives target the full data distribution, we show that the
exact reverse process induces a hierarchy between coarse \emph{support}
information and finer \emph{frequency} information. For uniform and absorbing
(a.k.a. masking) diffusion, we prove that, in the small-noise regime of the
final denoising steps, each single-token reverse edit decomposes into a leading
scale, determined by whether it moves toward the data \emph{support} (e.g.,
grammatically valid sentences), and a finer coefficient, determining
\emph{relative probabilities} within the same scale.
Thus, recovering validity structure only requires learning
the correct order of magnitude of reverse probabilities, whereas recovering data frequencies requires
coefficient-level estimation. The separation is mechanism-dependent: uniform diffusion exhibits a
trichotomy into validity-improving, validity-preserving, and validity-worsening edits, while absorbing
diffusion places its leading-order mass on validity-improving moves. Experiments on a masked language
diffusion model and synthetic regular-language tasks support these predictions: support-localization
emerges earlier than within-support frequency ranking, and the contrast between uniform and absorbing
diffusion matches the predicted rate separation. Together, our results suggest that discrete diffusion
models learn data support before data frequencies.
\end{abstract}

\section{Introduction}

Since the influential work of \citet{austin2021structured}, 
discrete Diffusion Language Models (DLMs) have gained significant traction as a promising alternative to auto-regressive models due to their potential for faster inference through parallelization \citep{austin2021structured,wu2025fast,arriola2025block,israel2025accelerating} and improved controllability of generation \citep{li2022diffusion}. These advantages have
driven rapid progress in DLM design and scaling, with recent models beginning to
close the gap with state-of-the-art autoregressive language models
\citep{googledeepmind2025geminidiffusion,labs2025mercury,song2025seed}; see
\citet{li2025survey} for a survey.

\paragraph{Central hypothesis.}

Most successful DLMs rely on reversing either \emph{uniform} or \emph{absorbing}
(masking) diffusion processes \citep{austin2021structured}. While a large body
of work studies how to parameterize, train, and sample from these models, our
focus is different: we ask \textbf{{what structure of the data
distribution is exposed first by the reverse denoising problem}}. Our central
thesis is the following.

\noindent\makebox[\linewidth][c]{%
\setlength{\fboxsep}{3pt}%
\colorbox{yellow!15}{%
\begin{minipage}{0.94\linewidth}
\textbf{\highlight{Support-before-Frequency Hypothesis:}}
DLMs first learn where admissible sequences are, and only later refine the
relative probabilities among admissible sequences.
\end{minipage}%
}%
}

Concretely, write $D\coloneqq\supp(\pData)$
for the support of the population data distribution, i.e., the set of sequences
with nonzero probability under \(\pData\). Intuitively, \(D\) is the set of
admissible sequences, such as context-free languages for programming code generation, 
while the frequencies are the probabilities \(\pData(x)\) assigned to
strings \(x\in D\). In these terms, our hypothesis says that DLMs first recover
support information about \(D\), in a sense made precise below, before they
accurately calibrate the probabilities \(\pData(x)\) within \(D\).

\paragraph{Small-noise expansion of DLMs.}

Our hypothesis is motivated by a small-noise analysis of the \emph{exact}
reverse kernel, corresponding to the final denoising steps of the diffusion
process. This kernel is the population object targeted by standard discrete
diffusion training objectives \citep{austin2021structured}. Thus, its low-noise structure may expose an information hierarchy shared across training paradigms that approximate the true reverse process.

For a string \(x\), let $d$ be the Hamming distance $d(x,z)\coloneqq |\{i: x_i\neq z_i\}|$, and let
\(
\proj_D(x)\coloneqq\{z\in D:d(x,z)=d(x,D)\}
\)
denote the set of nearest in-support strings to \(x\). Let \(\sigma\) be the
noise level, and consider the low-noise limit \(\sigma\to0^+\). 
For a one-token reverse edit from a current string to a candidate string, our main
theorems (\Cref{thm:uniform-rate-separation,thm:absorbing-rate-separation}) show
that the reverse edit probability has the schematic form (up to normalization)
\begin{equation}
\tag{$\star$}
\label{eq:intro-expansion}
   \text{reverse edit probability}
\;\approx\;
\underbrace{\phantom{\frac{j_D}{j_D}}\Gamma_{\mathsf{corr}}(\Delta d)\phantom{\frac{1}{1}}}_{\text{corruption gate}}
\cdot
\underbrace{\phantom{\frac{j_D}{j_D}}\sigma^{\Delta d}\phantom{\frac{1}{1}}}_{\text{\textbf{scale}: support signal}}
\cdot
\underbrace{
\frac{\pData(\proj_D(\text{candidate}))}
{\pData(\proj_D(\text{current}))}
}_{\text{\textbf{coefficient}: frequency signal}},
\end{equation}
where $\Delta d
=
d(\text{candidate},D)-d(\text{current},D)$, 
and the factor
\(\Gamma_{\mathsf{corr}}\) records the effect of the corruption mechanism:
\[
\Gamma_{\mathsf{corr}}(\Delta d)
\coloneqq
\begin{cases}
1, & \mathsf{corr}=\mathsf{unif}, \\[1mm]
\mathbf 1\{\Delta d=-1\}, & \mathsf{corr}=\mathsf{mask}.
\end{cases}
\]

At first sight, \eqref{eq:intro-expansion} appears to mix support and frequency
information in a single leading expression. 
The key point, however, is that \emph{these two pieces enter at different resolutions, and that the corruption mechanism determines which scales are active through the gate function $\Gamma_{\mathrm{corr}}$.} This leads to the two concrete predictions about practical DLMs, which we test in both synthetic and real-data experiments.

\begin{enumerate}[
    label=\textbf{\arabic*.},
    leftmargin=1.0em,
    itemsep=0.25em,
    topsep=0.1em,
    parsep=0pt,
    partopsep=0pt
]
    \item \textbf{\highlight{Prediction 1: support before frequency.}}
    At low noise, recovering \emph{support-improving directions} only requires
    the model to learn the rough \textbf{order-of-magnitude} in \(\sigma\) of the corresponding denoising proposal, to determine whether it moves towards the support. As \(\sigma \to 0\), these different orders become increasingly separated, making the support structure  easier to detect from coarse scaling information alone. By contrast, recovering the \emph{frequencies} of the data distribution requires accurate
    estimation of the \textbf{coefficients} within a fixed order of magnitude.
    Thus, a trained model may acquire a support-like denoising field before it
    matches fine within-support frequencies.

    We make this intuition precise in \Cref{cor:coarse-score-projection}: if the
    learned reverse probabilities approximate the true reverse probabilities up
    to multiplicative error
    \(
    o(\sigma^{-1/2}),
    \)
    then thresholding the scaled reverse probabilities exactly recovers the
    support-improving directions. Importantly, this condition can hold even when
    the learned transition kernel remains far from the true kernel in additive
    metrics such as total variation or KL; see \Cref{ex:multiplicative-not-tv} for an illustration.

    \item \textbf{\highlight{Prediction 2: masking is closer to a support projector.}}
    The same expansion predicts a qualitative contrast between corruption
    mechanisms. For \emph{uniform diffusion}, \towardsupport{support-improving},
    \neutralscale{support-preserving}, and \awayfromsupport{support-worsening} edits appear at three distinct
    scales:
    \[
    {\mathbf{\color{goodgreen}\sigma^{-1}}}, \mathbf{{\color{neutralgray}\qquad 1}}, \qquad {\mathbf{\color{badred}\sigma}}.
    \]
    For \emph{absorbing}, or \emph{masked}, diffusion, the structure is sharper:
    only support-improving unmasking moves contribute at leading order, while
    non-improving moves vanish in the small-noise limit. 
\end{enumerate}

\begin{figure}[t]
    \centering
    \includegraphics[width=.72\linewidth]{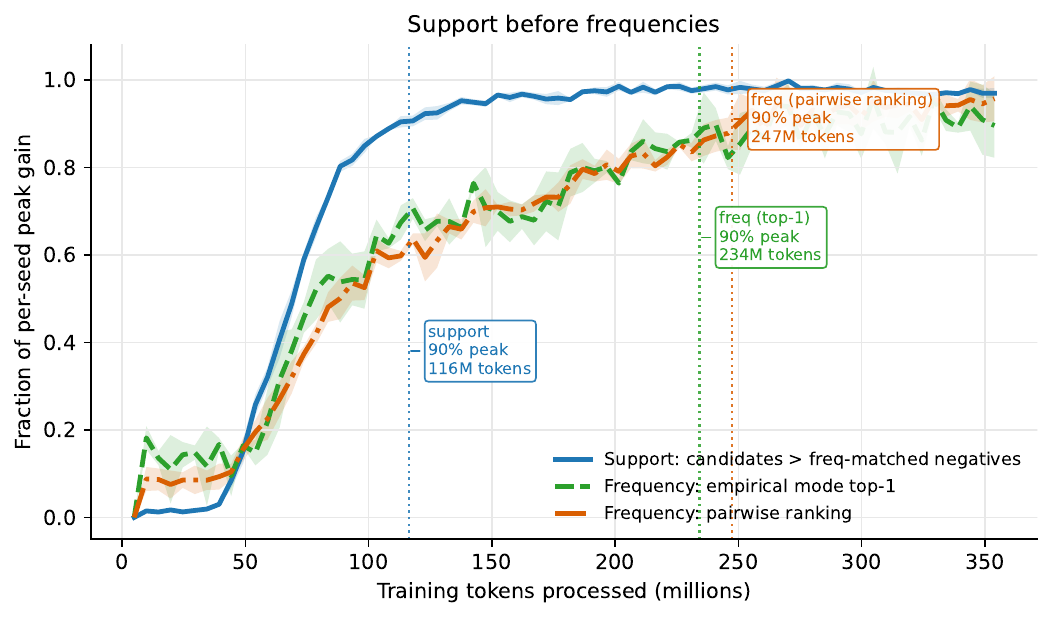}
    \caption{\textbf{Support before frequencies in a web-trained masked DLM.}
    We train a masked DLM on FineWeb and evaluate support and frequency proxies inspired by the separation in \Cref{thm:absorbing-rate-separation} (\Cref{sec:xp-supp-then-freq}). The support-localization proxy reaches its peak gain earlier than
    the frequency-ranking proxies. Curves show means over three seeds with
    \(\pm1\) standard-deviation bands;
    transition markers use the first checkpoint reaching \(90\%\) of each seed's
    peak gain.}
    \label{fig:real-dlm-support-frequency}
\end{figure}

\Cref{sec:xp-supp-then-freq} tests
\textbf{\highlight{Prediction 1}} by designing probes that disentangle
\emph{support localization} from \emph{frequency ranking} for a masked DLM
trained on FineWeb; see \Cref{fig:real-dlm-support-frequency}. This setting
captures a widely used DLM family: an absorbing-mask process trained with a
weighted cross-entropy objective, which coincides with several common
formulations, including D3PM-style posterior prediction, SUBS/mean
parameterizations, and the induced-score view; see
\Cref{app:dlm-parameterizations}. Across seeds, the support-localization probe
reaches its peak gain substantially earlier than the frequency-ranking probes.
This temporal separation supports our hypothesis: although the training
objective can in principle learn support identification and frequency refinement
simultaneously, support identification emerges and stabilizes before the model
refines relative probabilities within the support.

\Cref{sec:xp-abs-vs-unif} tests \textbf{\highlight{Prediction 2}}, namely the predicted contrast between \emph{uniform} and \emph{absorbing} diffusion. To isolate the projection effect, we use controlled synthetic experiments on \emph{regular languages}, where membership in the data support can be evaluated exactly. We then apply a theory-guided \emph{inference-time thresholding} procedure, requiring no additional training, to isolate the leading \(\sigma^{-1}\)-scale component of the learned reverse scores. As predicted, this intervention improves support recovery for uniform diffusion, but gives little additional benefit for absorbing diffusion. This matches the exact reverse-kernel expansion: absorbing diffusion already suppresses non-projective moves at leading order, whereas uniform diffusion retains lower-scale non-projective components that can be removed by isolating the dominant scale. Finally, we stress that our purpose is to highlight the distinct structural biases in the reverse dynamics,
rather than endorsing one corruption process over the other. While absorbing
diffusion is closer to a support projector at small noise, masking also limits
sampling flexibility by preventing incorrectly unmasked tokens from being
revised.



\paragraph{Contributions.}
To summarize, our main contributions are threefold.

First, in \Cref{thm:uniform-rate-separation,thm:absorbing-rate-separation}, we derive small-noise expansions of the exact reverse kernels for both \emph{uniform} and \emph{absorbing} diffusion. These expansions identify a scale separation between support and frequency information, and reveal a sharp contrast between the two corruption mechanisms.

Second, in \Cref{cor:coarse-score-projection}, we formalize why coarse multiplicative accuracy of the reverse scores is sufficient to recover support-improving directions, even when the learned reverse kernel remains far from the true kernel in additive metrics such as total variation.

Third, in \Cref{sec:experiments}, we design probes for support-localization and frequency-ranking inspired by the theory, and show that trained DLMs exhibit the predicted separations in both real-data and synthetic experiments.

\paragraph{Relation to prior work.} The conceptually closest perspective to our hypothesis is the recent observation that continuous score matching may recover geometric information about a data manifold before learning the full density on that manifold \citep{li2025scores}. Our work provides a discrete analogue for discrete spaces, without assuming any latent manifold structure: the ambient geometry is instead the Hamming graph, and the analogue of the manifold is the arbitrary support set $\supp(\pData)$.

\section{Small-Noise Expansion of the Reverse Process Generator} \label{sec:expansion}

This section formalizes the aforementioned small-noise expansion \eqref{eq:intro-expansion} of the exact reverse kernel. 

\subsection{Recap of Uniform and Masking Diffusions}

We begin by recalling the standard discrete-time formulation of diffusion models on finite state spaces, following \citet{austin2021structured}. Our focus is on the two corruption mechanisms most relevant to diffusion language modeling: \emph{uniform diffusion} and \emph{absorbing} (a.k.a. \emph{masking}) diffusion.\footnote{Similar small-noise expansions can be derived for other discrete diffusions, with the Hamming distance replaced by the appropriate shortest-path distance on the underlying diffusion graph. We focus on uniform and absorbing diffusion because they capture the main design choices used in current text diffusion models.}

\paragraph{General setup.} We consider a finite token space $[K]\coloneqq\{1,\dots,K\}$, a fixed sequence length (dimension) $H$, and the corresponding set of all possible sequences $\Xc\coloneqq[K]^H$. We denote the time horizon of the diffusion process by $T$, meaning that generation is done in a $T$-step denoising procedure with time discretization $1/T$. We consider forward diffusion processes that are independent across token positions, given via a starting discrete distribution $q_0 \coloneqq \pData \in \Delta_\Xc$ on the simplex and the Markov kernels $q_{t|t-1}(x_t | x_{t-1}) = \prod_{i=1}^H q_{t|t-1}^i(x_t^i | x_{t-1}^i) = \prod_{i=1}^H [Q^t]_{x_{t-1}^i,x_t^i}$ ($t\in[T]$) for fixed transition matrices $Q^t \in \R^K\times \R^K$. For $t\in[T]$, we let $X_t \coloneqq (X_t^1, \dots, X_t^H) \in \Xc$ denote the random variable this process determines. We abbreviate $q_t(x_t)\coloneqq\Pm[X_t=x_t]$ and $q_{t|s}(x_t|x_s) \coloneqq \Pm[X_t = x_t | X_s = x_s]$ for all $x_t, x_s \in [K]$ such that $\Pm[X_s = x_s] > 0$.  

\paragraph{Uniform diffusion.} For a fixed $q_0 = \pData$, consider the uniform forward diffusion process given by
\begin{align*}
    q_{t|t-1}^i(y|x) \coloneqq (1-\beta_t) \indicator{y = x} + \frac{\beta_t}{K} \tag{uniform}
\end{align*}
for all $i\in[H]$, $x,y\in[K]$.

\paragraph{Absorbing (masking) diffusion.} We derive analogous results for the absorbing diffusion process, which is similarly given by $q_0 = \pData$ and 
\begin{align*}
    q_{t|t-1}^i(y|x) \coloneqq (1-\beta_t) \indicator{y = x} + \beta_t \indicator{y=m} \tag{absorbing}
\end{align*}
for all $i\in[H]$, $x,y\in[K]$, where $m$ is a special masking token not contained in $[K]$. 

When setting the \emph{cumulative noise level} at time step $t$ (probability of being noised before or at time $t$) to be $\sigma_t \coloneqq 1-\prod_{s=1}^t (1-\beta_s)$, we can easily view the induced marginals for token index $i$ as
\begin{align*}
    \text{uniform:}\quad q^i_{t} =& (1-\sigma_t) \pData^i + \sigma_t \pUnif ,\qquad\qquad
    \text{absorbing:} \quad q^i_{t} =& (1-\sigma_t) \pData^i + \sigma_t \pMask, 
\end{align*}

\paragraph{Fitting the reverse kernel.} The way standard DLMs work is to fit the \emph{reverse transition} $q_{t-1|t}$ with a model $p_{t-1|t}$ by minimizing the classical variational bound on the data likelihood (see \cref{app:dlm-parameterizations}). This model is then used to stepwise denoise, starting from the source distribution $x_T \sim \pUnif^{\otimes H}$ (resp. $x_T\sim\pMask^{\otimes H}$) via $x_{t-1} \sim p_{t-1|t}(\cdot | x_t)$. In practice, $p_{t-1|t}$ is virtually always modeled as independent across token positions, meaning that $p_{t-1|t}(y|x_t)=\prod_{h=1}^H p_{t-1|t}^h(y^h | x_t)$ and each component on the right-hand side is parameterized by a neural network.

\subsection{Main Results: Small-Noise Expansion of Reverse Kernels} \label{subsec:main}

We now derive the small-noise expansions of the exact reverse kernels previewed in the introduction. To make the results applicable across different discretization schemes and noise schedules, we formulate the analysis under a general condition that captures the \emph{continuous-noise limit} underlying discrete diffusion: any fixed target noise level can be approximated arbitrarily well, while each individual denoising step becomes infinitesimal. This condition is mild and is satisfied by standard schedules used in practice, including linear cumulative-noise and cosine schedules.

\vspace{2mm}
\begin{assumption}[Noise schedule]
\label{assump:noise-schedule}
\label{assumption:schedule}
Let \((\sigma_t)_{t\in[T]}\) be the cumulative noise levels induced by
\((\beta_t)_{t\in[T]}\). For every fixed \(\sigma\in(0,1)\), let
\begin{equation}
\tag{$t_\sigma$}
\label{eq:t-sig-def}
t_\sigma(T) \in \argmin_{s\in[T]} |\sigma_s-\sigma|
\end{equation}
be a time index whose noise level is closest to \(\sigma\). We assume that $\sigma_{t_\sigma(T)} \to \sigma, \beta_{t_\sigma(T)} \to 0 $ as \(T\to\infty\).
\end{assumption}

For \(x\in\Xc\), a coordinate \(h\in[H]\), and a proposed token \(y^h\in[K]\), write
\(
x^{h\to y^h}\coloneqq(x^{-h},y^h)
\)
for the string obtained by replacing the \(h\)-th token of \(x\) by \(y^h\).
Recall that $d$ denotes the Hamming distance and 
\[
\proj_D(x)\coloneqq\{z\in D:d(x,z)=d(x,D)\},
\qquad
\pData(\proj_D(x))\coloneqq\sum_{z\in\proj_D(x)}\pData(z).
\]

\paragraph{Small-noise expansion: Uniform diffusion.}
We are now ready to state the main theoretical results. The first result shows that, for uniform diffusion, one-token reverse edits split into \emph{three} scale classes.

\vspace{2mm}

\begin{theorem}[Rate separation for uniform diffusion]
\label{thm:uniform-rate-separation}
\label{thm:main}
Let \(q\) denote the law of the uniform diffusion process. Fix a state
\(x_t\in\supp(q_t)=\Xc\), a coordinate \(h\in[H]\), and a proposed replacement
token \(y^h\in[K]\setminus\{x_t^h\}\). Define the corresponding one-token edit by $x_{t-1}\coloneqq x_t^{h\to y^h}.$
Let \(\sigma\in(0,1)\), and let \(t=t_\sigma(T)\) be the time index specified in \eqref{eq:t-sig-def}. Then, under \Cref{assump:noise-schedule}, as \(\sigma\to0\),
\[
\lim_{T\to\infty}
\frac{q^h_{t-1|t}(y^h\mid x_t)}{\beta_t/K}
=
\begin{cases}
\displaystyle
\left(\frac{\sigma}{K}\right)^{-1}
\frac{\pData(\proj_D(x_{t-1}))}{\pData(\proj_D(x_t))}
+ O(1),
&
\begin{gathered}
\text{\towardsupport{toward support}}\\
d(x_{t-1},D)<d(x_t,D),
\end{gathered}
\\[1.5em]
\displaystyle
\frac{\pData(\proj_D(x_{t-1}))}{\pData(\proj_D(x_t))}
+ O(\sigma),
&
\begin{gathered}
\text{\neutralscale{same distance}}\\
d(x_{t-1},D)=d(x_t,D),
\end{gathered}
\\[1.5em]
\displaystyle
\left(\frac{\sigma}{K}\right)
\frac{\pData(\proj_D(x_{t-1}))}{\pData(\proj_D(x_t))}
+ O(\sigma^2),
&
\begin{gathered}
\text{\awayfromsupport{away from support}}\\
d(x_{t-1},D)>d(x_t,D).
\end{gathered}
\end{cases}
\]

\end{theorem}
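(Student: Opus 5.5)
The plan is to reduce the limit to a ratio of noised marginals, $q_\sigma(x_{t-1})/q_\sigma(x_t)$, and then expand both marginals in $\sigma$. The Hamming distance to $D$ sets the power of $\sigma$, and the mass of $\proj_D$ sets the coefficient.

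\textbf{Step 1: exact per-coordinate reverse marginal.} Write $q^h_{t-1|t}(y^h\mid x_t)=\sum_{x_0}q_{0|t}(x_0\mid x_t)\,q^h_{t-1|t,0}(y^h\mid x^h_t,x^h_0)$. This uses that, given $x_0$, the forward chain factorizes over coordinates. By Bayes and the Markov property,
\[
q^h_{t-1|t,0}(y^h\mid x^h_t,x^h_0)=\frac{q^h_{t|t-1}(x^h_t\mid y^h)\,q^h_{t-1|0}(y^h\mid x^h_0)}{q^h_{t|0}(x^h_t\mid x^h_0)}.
\]
Since $y^h\neq x^h_t$, the first factor is exactly $\beta_t/K$, which cancels the normalization in the statement. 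Next I use $q^h_{s|0}(a\mid b)=(1-\sigma_s)\indicator{a=b}+\sigma_s/K$ and $q_{0|t}(x_0\mid x_t)=\pData(x_0)q_{t|0}(x_t\mid x_0)/q_t(x_t)$. The key observation is that, for every $x_0$,
\[
q_{t|0}(x_t\mid x_0)\cdot\frac{q^h_{t-1|0}(y^h\mid x^h_0)}{q^h_{t|0}(x^h_t\mid x^h_0)}
\]
equals $q_{t|0}$ with its $h$-th factor replaced by the $(t-1)$-level factor evaluated at $y^h$. Summing over $x_0$ therefore gives a mixed-level marginal of $x_{t-1}=x_t^{h\to y^h}$.

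\textbf{Step 2: pass to the limit $T\to\infty$.} By \Cref{assump:noise-schedule}, $\sigma_t\to\sigma$ and $\beta_t\to0$, so $\sigma_{t-1}\to\sigma$ as well. All quantities are finite sums of continuous functions of $(\sigma_{t-1},\sigma_t)$, and $q_t(x_t)>0$. Hence
\[
\lim_{T\to\infty}\frac{q^h_{t-1|t}(y^h\mid x_t)}{\beta_t/K}=\frac{q_\sigma(x_{t-1})}{q_\sigma(x_t)},
\qquad
q_\sigma(x)=\sum_{z\in D}\pData(z)\Bigl(\frac{\sigma}{K}\Bigr)^{d(x,z)}\Bigl(1-\sigma+\frac{\sigma}{K}\Bigr)^{H-d(x,z)}.
\]

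\textbf{Step 3: small-$\sigma$ expansion.} Group the sum by $d(x,z)$. The minimal distance $d(x,D)$ is attained exactly on $\proj_D(x)$. This gives
\[
q_\sigma(x)=\Bigl(\frac{\sigma}{K}\Bigr)^{d(x,D)}\bigl[\pData(\proj_D(x))+O(\sigma)\bigr],
\]
and $\pData(\proj_D(x))>0$ because $\proj_D(x)\subseteq D=\supp(\pData)$ is nonempty. Taking the ratio yields
\[
\Bigl(\frac{\sigma}{K}\Bigr)^{\Delta d}\,\frac{\pData(\proj_D(x_{t-1}))}{\pData(\proj_D(x_t))}\,\bigl(1+O(\sigma)\bigr),
\qquad \Delta d=d(x_{t-1},D)-d(x_t,D).
\]
A one-token edit changes the Hamming distance to any fixed $z$ by at most one, so $\Delta d\in\{-1,0,1\}$. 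These three values give the toward-support, same-distance, and away-from-support cases. The remainder $(\sigma/K)^{\Delta d}\cdot O(\sigma)$ is then $O(1)$, $O(\sigma)$, and $O(\sigma^2)$ respectively, as claimed.

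The main obstacle is Step 1: the per-coordinate reverse marginal must collapse exactly into the ratio of noised marginals at levels $\sigma_{t-1}$ (coordinate $h$) and $\sigma_t$ (the other coordinates). I would verify this carefully by writing $q_{t-1|0}=q_{t|0}\cdot(\text{$h$-th factor swap})$ coordinatewise. I would also check the uniformity needed to interchange $\lim_{T}$ with the expansion; this is harmless here, since the limit in $T$ is taken first at fixed $\sigma$ and only then is $\sigma$ expanded.
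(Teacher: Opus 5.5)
Your proposal is correct, and Step 3 is exactly the paper's \Cref{lemma:expansion-simple}, but your Step 1 takes a different route.

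\textbf{The paper's route.} It applies Bayes to the full previous state, writing
$q^h_{t-1|t}(y^h\mid x_t)=\sum_{y^{-h}}\frac{q_{t-1}(y)}{q_t(x_t)}\,q_{t|t-1}(x_t\mid y)$.
It then groups the sum by the number $d=d(x_t^{-h},y^{-h})$ of other coordinates that also flip. The $d=0$ term, $\frac{q_{t-1}(x_t^{h\to y^h})}{q_t(x_t)}\cdot\frac{\beta_t}{K}\bigl(1-\beta_t\frac{K-1}{K}\bigr)^{H-1}$, is kept. The $d\ge 1$ terms are shown to vanish as $T\to\infty$, because they carry extra factors $(\beta_t/K)^d$ while the marginal ratios stay bounded.

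\textbf{Your route.} You condition on $x_0$ and use that the coordinate chains are independent given $x_0$. This gives an identity that is exact at every finite $T$:
\[
\frac{q^h_{t-1|t}(y^h\mid x_t)}{\beta_t/K}=\frac{\sum_{x_0}\pData(x_0)\,q^h_{t-1|0}(y^h\mid x_0^h)\prod_{i\neq h}q^i_{t|0}(x_t^i\mid x_0^i)}{q_t(x_t)}.
\]
It agrees with the paper's sum, since Chapman--Kolmogorov gives $\sum_{y^i}q^i_{t-1|0}(y^i\mid x_0^i)\,q^i_{t|t-1}(x_t^i\mid y^i)=q^i_{t|0}(x_t^i\mid x_0^i)$ for each $i\neq h$.

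\textbf{What your identity buys.}
\begin{itemize}
\item There are no higher-order terms to bound.
\item The $T\to\infty$ limit is a ratio of polynomials in $(\sigma_{t-1},\sigma_t)$ whose limiting denominator $q_\sigma(x_t)$ is positive because $\sigma>0$.
\item The concrete-score limit \eqref{eq:limit-cscore} falls out directly.
\item It makes the finite-$T$ version \Cref{thm:main-generalized} nearly immediate. You would only need to control $\sigma_{t-1}/\sigma_t$ and $|\sigma_t-\sigma|$ inside one expansion, rather than the paper's triangle-inequality case analysis on $d(y,D)$ for every $d\ge1$.
\end{itemize}

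The paper's route works directly with the marginal ratios $q_{t-1}(y)/q_t(x)$ that it interprets as scores, and it reuses that template for absorbing diffusion. Your conditioning argument transfers there equally well: with $x_t^h=m$, the $h$-th forward factor is $\beta_t$ and $q^h_{t|0}(m\mid x_0^h)=\sigma_t>0$.
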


Equivalently, up to fixed vocabulary factors and lower-order terms, the normalized
one-token reverse probability satisfies
\[
\frac{q^h_{t-1|t}(y^h\mid x)}{\beta_t/K}
=
\Theta\left(
\sigma^{d(x^{h\to y^h},D)-d(x,D)}
\frac{\pData(\proj_D(x^{h\to y^h}))}{\pData(\proj_D(x))}
\right).
\]
This expression separates the reverse edit probability into two conceptually
different components. The exponent
\[
\Delta d\coloneqq d(x^{h\to y^h},D)-d(x,D)
\]
is the \emph{support signal}: it indicates whether the proposed edit moves the
current string closer to the support \(D\), keeps it at the same distance, or
moves it further away. By contrast, the ratio
\[
\frac{\pData(\proj_D(x^{h\to y^h}))}{\pData(\proj_D(x))}
\]
is a \emph{frequency-sensitive coefficient}: once the noise scale $\sigma$ of the
edit is fixed, this coefficient determines how probability is distributed among
edits of the same scale.

Consequently, in the small-noise regime, the reverse kernel has a clear
\emph{three-scale} structure: Support-improving edits, for which \(\Delta d=-1\),
appear at scale \(\sigma^{-1}\); distance-preserving edits, for which
\(\Delta d=0\), appear at scale \(1\); and support-worsening edits, for which
\(\Delta d=1\), appear at scale \(\sigma\). Thus the dominant entries of the
low-noise reverse kernel are precisely the local edits that move the string
toward the data support.

\paragraph{Small-noise expansion: Absorbing diffusion.}

Our next goal is to show that the same \emph{support-seeking} principle takes an even sharper form for absorbing, or
masking, diffusion. At a masked coordinate,
a reverse move can only replace the mask by a clean token. Since the mask token
does not appear in any string in the data support \(D\), such a move can never
increase the Hamming distance to \(D\). Thus the uniform-diffusion trichotomy
collapses: reverse moves are either support-improving or support-preserving. The
next result shows that, in the small-noise limit, only the support-improving
moves survive at leading order.

\vspace{2mm}

\begin{theorem}[Rate separation for absorbing diffusion]
\label{thm:absorbing-rate-separation}
\label{thm:main-absorb}
Let \(q\) be the law of the absorbing diffusion process.
In the same setting as \Cref{thm:uniform-rate-separation}, but with \(x_t^h=m\) a masked token,
\(y^h\in[K]\), and \(x_t\in\supp(q_t)\), we have, as \(\sigma \to 0\),
\[
\lim_{T\to\infty}
\frac{q^h_{t-1|t}(y^h\mid x_t)}{\beta_t}
=
\begin{cases}
\displaystyle
\sigma^{-1}
\frac{\pData(\proj_D(x_{t-1}))}{\pData(\proj_D(x_t))}
+ O(1),
&
\begin{gathered}
\text{\towardsupport{support-improving}}\\
d(x_{t-1},D)<d(x_t,D),
\end{gathered}
\\[1.5em]
0,
&
\begin{gathered}
\text{\neutralscale{non-improving}}\\
d(x_{t-1},D)=d(x_t,D).
\end{gathered}
\end{cases}
\]
\end{theorem}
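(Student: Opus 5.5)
Write the exact per-coordinate reverse marginal via Bayes' rule. Under the absorbing process the forward kernel factorizes across positions, so for the coordinate $h$ with $x_t^h=m$ and $y^h\in[K]$ we have
\[
q^h_{t-1|t}(y^h\mid x_t)=\frac{\sum_{x_{t-1}:\,x_{t-1}^h=y^h} q_{t-1}(x_{t-1})\,q_{t|t-1}(x_t\mid x_{t-1})}{q_t(x_t)} .
\]
The local factor at $h$ is $q^h_{t|t-1}(m\mid y^h)=\beta_t$. All other coordinates $i\neq h$ are unchanged between $t-1$ and $t$ to leading order in $\beta_t$. For such a coordinate the factor is either $(1-\beta_t)$ (unmasked, equal) or $1$ (mask stays mask), and the alternative $x_{t-1}^i\neq x_t^i$ is a masked-at-$t$ coordinate coming from a clean token at $t-1$, which costs another factor $\beta_t$. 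As $T\to\infty$ with $\beta_t\to0$ and $\sigma_{t-1},\sigma_t\to\sigma$ (\Cref{assump:noise-schedule}), we get
\[
\lim_{T\to\infty}\frac{q^h_{t-1|t}(y^h\mid x_t)}{\beta_t}=\frac{q_\sigma(x_t^{h\to y^h})}{q_\sigma(x_t)},
\]
where $q_\sigma$ is the marginal at cumulative noise $\sigma$. This reduction is the same one used for \Cref{thm:uniform-rate-separation}. The only care needed is to check uniform convergence of the finitely many terms, which is immediate since $\Xc$ is finite.

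Next I expand the marginal. Let $M(x)\subseteq[H]$ be the masked positions of $x$. Then
\[
q_\sigma(x)=\sigma^{|M(x)|}(1-\sigma)^{H-|M(x)|}\,\pData\bigl(\{z: z^i=x^i\ \forall i\notin M(x)\}\bigr).
\]
So $q_\sigma$ has no Hamming-type series in $\sigma$: the $\sigma$-power is fixed by the mask count, and the data factor is a cylinder probability. Put $x=x_t$ and $x'=x_t^{h\to y^h}$. Then $|M(x')|=|M(x)|-1$, and the ratio is exactly
\[
\frac{q_\sigma(x')}{q_\sigma(x)}=\frac{1-\sigma}{\sigma}\cdot\frac{\pData(C(x'))}{\pData(C(x))},
\]
where $C(\cdot)$ is the cylinder of data strings agreeing on unmasked positions.

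It remains to identify the cylinder ratio with the projection ratio and to read off the two cases. This identification is the main step to get right. Since $x\in\supp(q_t)$, the cylinder $C(x)$ is nonempty in $D$, so every masked position can be filled by some $z\in D$. Hence $d(x,D)=|M(x)|$ and $\proj_D(x)=C(x)\cap D$, giving $\pData(\proj_D(x))=\pData(C(x))$.

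For $x'$ there are two cases.
\begin{itemize}
\item If $C(x')\cap D\neq\emptyset$, then by the same argument $d(x',D)=|M(x)|-1<d(x,D)$ and $\proj_D(x')=C(x')\cap D$. The ratio is then $\sigma^{-1}(1-\sigma)\,\pData(\proj_D(x'))/\pData(\proj_D(x))=\sigma^{-1}\cdot(\text{ratio})+O(1)$, which is the support-improving case.
\item If $C(x')\cap D=\emptyset$, the numerator vanishes and the limit is exactly $0$. In this case $d(x',D)=d(x,D)$: it is at most $|M(x)|$, by changing position $h$ back to agree with some $z\in C(x)\cap D$ together with the other masks, and at least that, since any $z\in D$ must differ on the $|M(x)|-1$ masks and also somewhere on the unmasked part. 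This is the non-improving case.
\end{itemize}
These two cases are exhaustive, because a move from a mask can never increase $d$, as noted before the theorem. This completes the plan.
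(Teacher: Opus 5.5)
Your proof is correct. The first half is the paper's argument: Bayes' rule, dominance of the term with $x_{t-1}^{-h}=x_t^{-h}$, and extra powers of $\beta_t$ killing every other term as $T\to\infty$. You even get the mask-to-mask factor right as $1$, where the paper writes $1-\beta_t$; this is harmless in the limit.

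The real difference is how you treat the marginal. The paper's \Cref{lemma:expansion-simple-absorb} is modeled on the uniform case. It expands $q_t(x)=\pData(\proj_D(x))\sigma_t^{d(x,D)}+O(\sigma_t^{d(x,D)+1})$, then carries relative $O(\sigma_t)$ errors through the ratio and splits cases by distance. You instead observe that every clean string compatible with $x$ lies at Hamming distance exactly $|M(x)|$. The sum therefore collapses to the closed form $\sigma^{|M(x)|}(1-\sigma)^{H-|M(x)|}\pData(C(x))$. The limit is then exactly $\frac{1-\sigma}{\sigma}\,\pData(C(x'))/\pData(C(x))$ for every $\sigma\in(0,1)$. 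This is precisely the identity \eqref{eq:appA-score-posterior}, with the clean-token posterior recognized as a ratio of projection masses.

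Your route gives more than the theorem states:
\begin{itemize}
\item The $O(1)$ term is explicitly minus that ratio, which lies in $[0,1]$ since $C(x')\subseteq C(x)$.
\item The zero case follows from $C(x')\cap D=\emptyset$, with $d(x',D)$ computed exactly rather than inferred from the distance not decreasing.
\item The positivity of $q_\sigma(x_t)$, which your uniform-convergence step tacitly needs, is immediate from $x_t\in\supp(q_t)$.
\end{itemize}

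In exchange, the paper's expansion-based route keeps the absorbing proof parallel to \Cref{thm:uniform-rate-separation}. There the marginal genuinely is a series in $\sigma/K$ over distances, and no such collapse occurs. It also serves as the shared template for the finite-$T$ statement in \Cref{thm:main-absorb-generalized}.
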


This gives the formal contrast with uniform diffusion. In uniform diffusion, the
low-noise reverse kernel contains a dominant support-projecting component, but
also retains lower-scale non-projective components. In absorbing diffusion, the
reverse graph is sharper: after normalization, the leading-order mass is carried
\emph{entirely by support-improving unmasking moves}, while non-projective moves are
suppressed in the small-noise limit.

\paragraph{Connection with discrete scores.}

We remark that the normalizations in 
\Cref{thm:uniform-rate-separation,thm:absorbing-rate-separation} remove the
baseline forward proposal rate for a single-token edit: \(\beta_t/K\) for
uniform diffusion and \(\beta_t\) for absorbing diffusion. This is the natural
scaling in the continuous-time limit. Indeed, for uniform diffusion,
\begin{equation}
\label{eq:limit-cscore}
\frac{q^h_{t-1|t}(y^h\mid x)}{\beta_t/K}
\;\longrightarrow\;
\frac{q_t(x^{h\to y^h})}{q_t(x)} ,
\end{equation}
and the absorbing case has the analogous limit with \(\beta_t/K\) replaced by
\(\beta_t\).

The right-hand side of \eqref{eq:limit-cscore} has a standard interpretation:
it is the discrete analogue of a score, often called a \emph{concrete score}
\citep{meng2022concrete,sun2022score,lou2023discrete}. Thus,
\Cref{thm:uniform-rate-separation,thm:absorbing-rate-separation} can also be
read as small-noise expansions of the true discrete score field. In this view,
the order of magnitude of the score identifies support-improving directions,
whereas the within-order coefficients encode frequency information.  

\subsection{Implication: Support Recovery from Coarsely Trained DLMs}

\Cref{thm:uniform-rate-separation,thm:absorbing-rate-separation} show that,
after the continuous-time limit \(T\to\infty\), support information appears in
the \emph{order of magnitude} of the normalized reverse scores, whereas
frequency information appears only in the coefficient within a fixed order. We
now formalize the consequence that coarse multiplicative accuracy is sufficient
to recover support-improving directions. For uniform diffusion, define the
limiting normalized scores\footnote{
A finite-\(T\) analogue of \cref{cor:coarse-score-projection} below follows from the more general estimates in
\Cref{thm:main-generalized}; we state the limiting
case for simplicity.
}
\[
\bar S_q^h(y^h\mid x;\sigma)
\coloneqq
\lim_{T\to\infty}
\frac{q^h_{t_\sigma(T)-1\mid t_\sigma(T)}(y^h\mid x)}
{\beta_{t_\sigma(T)}/K},
\qquad
\bar S_p^h(y^h\mid x;\sigma)
\coloneqq
\lim_{T\to\infty}
\frac{p^h_{t_\sigma(T)-1\mid t_\sigma(T)}(y^h\mid x)}
{\beta_{t_\sigma(T)}/K},
\]
whenever the latter limit exists. We measure the local multiplicative
discrepancy between \(\bar S_p\) and \(\bar S_q\) by the exponentiated \emph{Thompson distance}
\citep{thompson1963certain}, or equivalently by the distortion factor
\[
\bar C_\sigma(x)
\coloneqq
\max_{h\in[H]}\max_{y^h\in[K]\setminus\{x^h\}}
\max\left\{
\frac{\bar S_p^h(y^h\mid x;\sigma)}{\bar S_q^h(y^h\mid x;\sigma)},
\frac{\bar S_q^h(y^h\mid x;\sigma)}{\bar S_p^h(y^h\mid x;\sigma)}
\right\}.
\]
This is the natural notion of accuracy for our purpose: the small-noise
expansion separates edits by powers of \(\sigma\), so preserving the correct
order of magnitude is enough to identify support-improving directions, even
when the leading coefficients are not yet accurately calibrated. The following
result makes this implication precise.

\begin{corollary}[Coarse score accuracy recovers support-improving edits]
\label{cor:coarse-score-projection}
Fix \(x\in\Xc\) and consider uniform diffusion in the low-noise regime of
\Cref{thm:uniform-rate-separation}. If
\[
\bar C_\sigma(x)=o(\sigma^{-1/2})
\qquad \text{as } \sigma\to0^+,
\]
then, for all sufficiently small \(\sigma\),
\(
\bar S_p^h(y^h\mid x;\sigma)>\sigma^{-1/2}
\Longleftrightarrow
d(x^{h\to y^h},D)<d(x,D).
\)
Thus thresholding the limiting normalized learned scores recovers exactly the
support-improving directions.
\end{corollary}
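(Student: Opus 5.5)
The plan is to combine the three-case expansion of \Cref{thm:uniform-rate-separation} with the multiplicative sandwich $\bar S_q/\bar C_\sigma(x)\le \bar S_p\le \bar C_\sigma(x)\,\bar S_q$. This sandwich holds for every coordinate $h$ and every token $y^h\neq x^h$, directly from the definition of $\bar C_\sigma(x)$. Since $x$ is fixed and there are only finitely many pairs $(h,y^h)$, all the $O(\cdot)$ constants from the theorem can be taken uniform over these pairs. Hence it suffices to compare the order of magnitude of $\bar S_q$ in each case with the threshold $\sigma^{-1/2}$.

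First I would record the sizes of the true limiting scores. Set $r(h,y^h)\coloneqq \pData(\proj_D(x^{h\to y^h}))/\pData(\proj_D(x))$. This ratio is strictly positive, because $\proj_D(\cdot)$ is a nonempty subset of $D=\supp(\pData)$. It is also independent of $\sigma$. Let $r_{\min}>0$ and $r_{\max}<\infty$ be its minimum and maximum over the finitely many pairs. By \Cref{thm:uniform-rate-separation}, for all sufficiently small $\sigma$ there are constants $c,C>0$ such that:
\begin{itemize}
\item in the toward-support case, $\bar S_q\ge K r_{\min}\sigma^{-1}-C\ge c\,\sigma^{-1}$;
\item in the same-distance case, $\bar S_q\le r_{\max}+C\sigma\le C'$;
\item in the away-from-support case, $\bar S_q\le C'\sigma$.
\end{itemize}
In particular, every non-improving edit satisfies $\bar S_q\le C'$ for small $\sigma$. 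For a single-token edit $\Delta d\in\{-1,0,1\}$, so these three cases are exhaustive.

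Next I would apply the distortion bound. Write $\bar C_\sigma(x)=\epsilon(\sigma)\sigma^{-1/2}$ with $\epsilon(\sigma)\to0$. For a support-improving edit,
\[
\bar S_p\ge \bar S_q/\bar C_\sigma(x)\ge c\,\sigma^{-1}\cdot\sigma^{1/2}/\epsilon(\sigma)=\bigl(c/\epsilon(\sigma)\bigr)\sigma^{-1/2}>\sigma^{-1/2}
\]
once $\epsilon(\sigma)<c$. For a non-improving edit,
\[
\bar S_p\le \bar C_\sigma(x)\,\bar S_q\le C'\epsilon(\sigma)\sigma^{-1/2}<\sigma^{-1/2}
\]
once $\epsilon(\sigma)<1/C'$. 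Together these give both directions of the equivalence for all sufficiently small $\sigma$, uniformly over $(h,y^h)$.

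The main obstacle is a subtlety rather than a calculation. \Cref{thm:uniform-rate-separation} is stated as an asymptotic "as $\sigma\to0$" with $O(\cdot)$ remainders, so I need the remainders to be genuine bounds holding for all $\sigma$ below some $\sigma_0$, and uniform over the finitely many edits. I also need the leading coefficient to be bounded away from zero, which is supplied by $r_{\min}>0$; the finiteness of the index set over $(h,y^h)$ then gives uniformity. A second point to check is that $\bar S_p$ is well defined, i.e.\ that the limit exists, and that $\bar S_q>0$ so the ratios in $\bar C_\sigma$ make sense. This positivity follows because the uniform process has full support, so $q_t>0$ everywhere. If the theorem only controls the remainder as $T\to\infty$ limits, I would instead invoke the finite-$T$ estimates of \Cref{thm:main-generalized} to make the constants explicit.
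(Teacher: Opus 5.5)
Your proposal is correct and follows essentially the same argument as the paper. You lower-bound the true limiting score by a constant times $\sigma^{-1}$ on support-improving edits and upper-bound it by a constant otherwise, then transfer both bounds to $\bar S_p$ through the distortion factor $\bar C_\sigma(x)=o(\sigma^{-1/2})$, so the threshold $\sigma^{-1/2}$ separates the two classes. Your remarks on uniformity over the finitely many pairs $(h,y^h)$, positivity of the projection-mass ratios, and positivity of $\bar S_q$ just make explicit what the paper's short proof uses implicitly.
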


\begin{proof}
By \Cref{thm:uniform-rate-separation}, after taking \(T\to\infty\), there exist
constants \(a_x,A_x>0\) such that, for all sufficiently small \(\sigma\),
\[
\bar S_q^h(y^h\mid x;\sigma)\ge a_x\sigma^{-1}
\quad\text{if }d(x^{h\to y^h},D)<d(x,D),
\qquad
\bar S_q^h(y^h\mid x;\sigma)\le A_x
\quad\text{otherwise}.
\]
Therefore the multiplicative error bound gives
\(
\bar S_p^h(y^h\mid x;\sigma)
\ge \bar C_\sigma(x)^{-1}a_x\sigma^{-1}
=\omega(\sigma^{-1/2})
\)
for support-improving edits, and
\(
\bar S_p^h(y^h\mid x;\sigma)
\le \bar C_\sigma(x)A_x
=o(\sigma^{-1/2})
\)
for all non-support-improving edits. Hence the threshold
\(\sigma^{-1/2}\) separates the two classes.
\end{proof}

The condition \(\bar C_\sigma(x)=o(\sigma^{-1/2})\) is deliberately coarse.
By \Cref{thm:uniform-rate-separation}, after taking the continuous-time limit,
support-improving edits have true normalized scores of order \(\sigma^{-1}\),
whereas non-improving edits have scores of order at most \(1\). Hence such a
multiplicative distortion, though allowed to diverge as \(\sigma\to0^+\), cannot
close the scale gap: support-improving edits remain above the threshold
\(\sigma^{-1/2}\), while non-improving edits remain below it. This is much weaker than coefficient matching, which would require
\(
\frac{\bar S_p^h(y^h\mid x;\sigma)}
{\bar S_q^h(y^h\mid x;\sigma)}
\to 1
\)
on the relevant directions. The reason is that the \(\sigma\)-scale in the
low-noise expansion depends only on how the edit changes the distance to
\(D=\supp(\pData)\), whereas the coefficient depends on the frequencies assigned
by \(\pData\) within \(D\). Thus \emph{two data distributions with the same support but
very different probabilities on that support induce the same scale hierarchy,
but may have very different coefficients.} Support recovery therefore only
requires preserving the correct \emph{order of magnitude}, while frequency
calibration requires coefficient-level accuracy. As a simple illustration, \cref{sec:example-large-TV} gives a concrete example
showing that this type of coarse multiplicative control can preserve scale
information even when the corresponding probability vectors remain far apart in
total variation and KL divergence.

\section{Experiments}
The purpose of this section is to test the two predictions suggested by the
small-noise theory: that support localization emerges before frequency ranking,
and that uniform and absorbing diffusion exhibit distinct projector-like
behaviors.
\label{sec:experiments}
\subsection{Test of prediction 1: support precedes frequency}
\label{sec:xp-supp-then-freq}

The first takeaway from \Cref{sec:expansion} is a support-then-frequency separation: the reverse transitions expose information related to the support before distributional information. We empirically test the effect of this observation in two complementary settings. First, we design a synthetic \emph{regular language} for which the support and the within-support frequencies are known, and directly monitor the corresponding metrics across training. Second, on FineWeb, the true linguistic support is not available in closed
form, so we construct \emph{indirect} held-out probes that mirror the exact
metrics.

\paragraph{Synthetic direct probes.}
We first construct a regular language that serves as a controlled analogue of a
bigram language model: the next token depends only on the previous token, but
with token-dependent transition probabilities. We sample the first token
uniformly from \([K]\). Given \(x_h\), the next token \(x_{h+1}\) is sampled
from the local neighborhood
\[
    \{x_h-1,x_h,x_h+1\},
\]
with probabilities
\(p_{\mathrm{down}}(x_h)\), \(p_{\mathrm{stay}}(x_h)\), and
\(p_{\mathrm{up}}(x_h)\), respectively. The upward and downward probabilities
are specified by a deterministic oscillating transform of \(x_h\), and the walk
is clamped at the boundary tokens \(1\) and \(K\); the exact construction is
given in \Cref{app:def-lang}. Thus the data support is the set of strings whose
successive tokens obey the allowed local-transition rule. This gives exact
access to both support membership and within-support transition probabilities,
allowing us to define direct probes for support recovery and frequency learning.

For the fixed-context transformer trained in the standard D3PM pipeline, we
probe the learned reverse conditionals at low noise \(\sigma = 1/H\). For
absorbing diffusion, we draw a validation string and replace one token by the
mask token. For uniform diffusion, we instead replace that token by \(10\)
tokens sampled uniformly at random from \([K]\), and average the resulting probe
metrics over these corruptions. In both cases, we score candidate clean-token
replacements at the corrupted position. The support probe asks whether valid
replacements receive larger conditional probability than invalid replacements.
The frequency probes restrict attention to valid replacements and test whether
the model recovers the most likely replacement under the true transition law
(top-\(1\) accuracy) and whether it correctly ranks pairs of valid replacements.
Each metric is averaged over \(1{,}000\) validation samples per checkpoint.

The results are shown in \Cref{fig:synthetic-support-frequency}. Both absorbing
and uniform diffusion exhibit the predicted ordering: \emph{support-recovery
metrics improve before frequency-oriented metrics}. These direct probes are the
synthetic counterparts of the indirect FineWeb probes, although the metrics are
not identical because the true support is available here but not for natural
language. Additional experimental details and ablations are given in
\Cref{app:synthetic}, see in particular \Cref{fig:synthetic-support-frequency-app-1,fig:synthetic-support-frequency-app-2,fig:synthetic-support-frequency-app-3}.

\begin{figure}[t]
  \centering
  \begin{subfigure}[t]{0.4\linewidth}
    \centering
    \includegraphics[width=\linewidth]{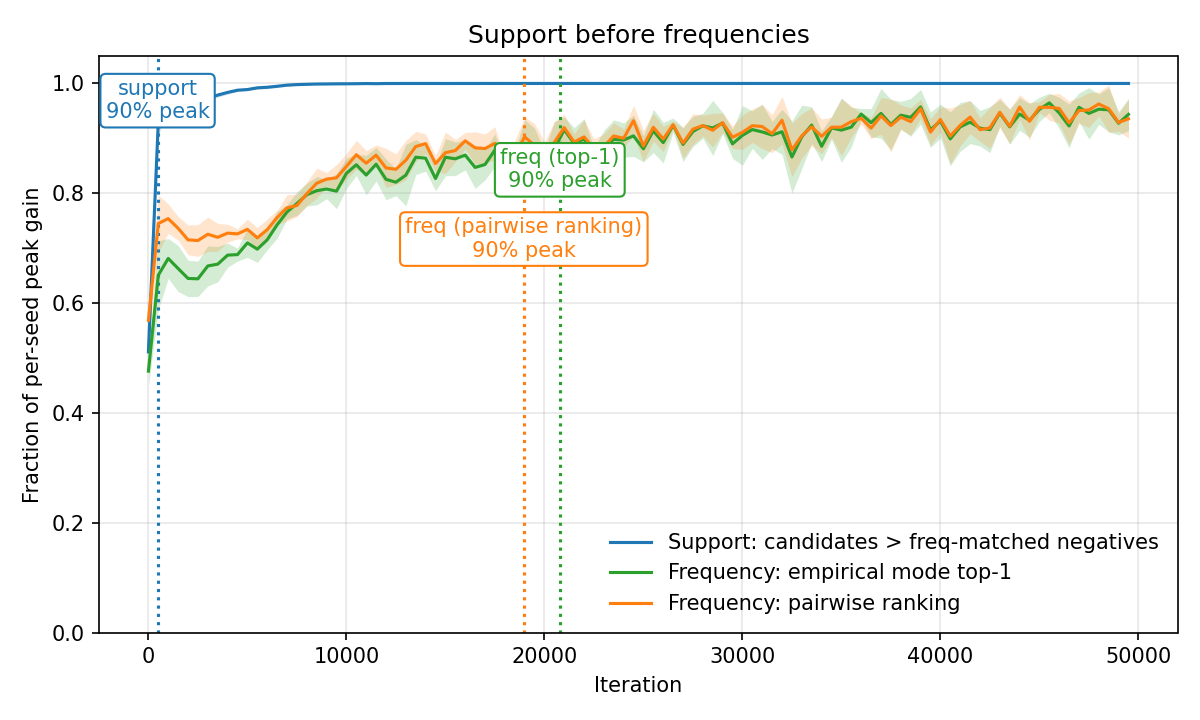}
    \caption{Absorbing diffusion}
  \end{subfigure}\qquad
  \begin{subfigure}[t]{0.4\linewidth}
    \centering
    \includegraphics[width=\linewidth]{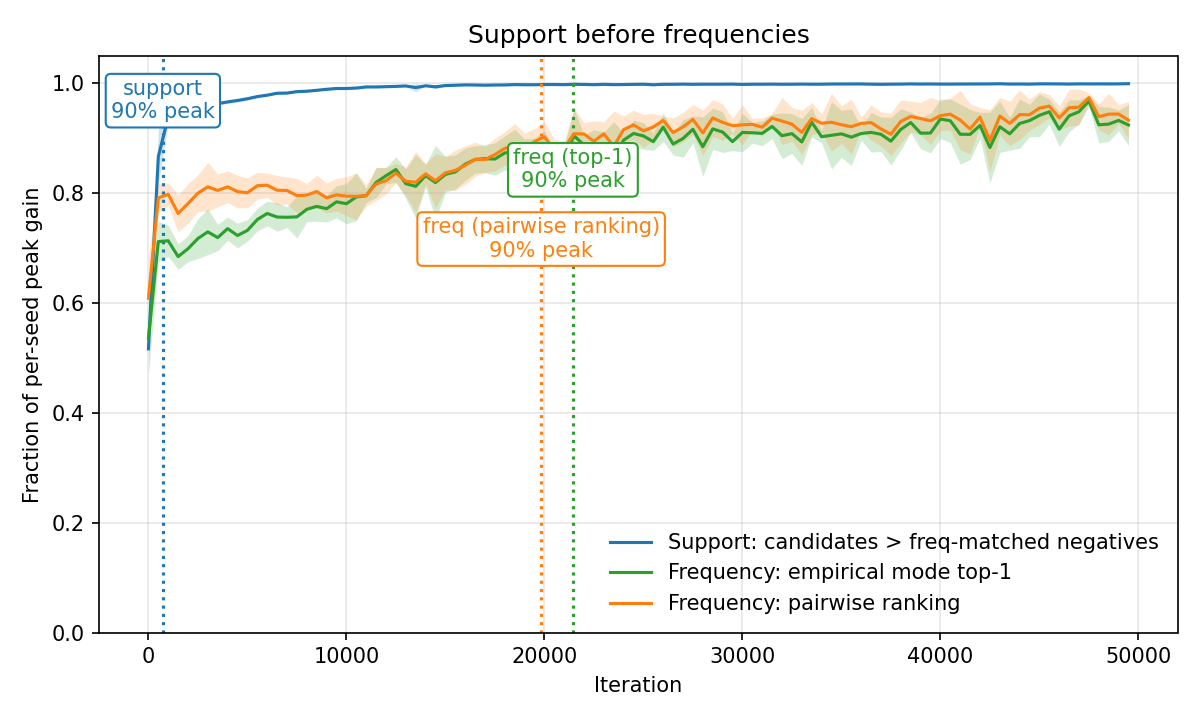}
    \caption{Uniform diffusion}
  \end{subfigure}
  \caption{\textbf{Synthetic echo of support before frequency.}
  In the regular-language setting of \Cref{sec:xp-abs-vs-unif}, direct
  support metrics improve before frequency ones, paralleling the
  FineWeb trend in \Cref{fig:real-dlm-support-frequency}.}
  \label{fig:synthetic-support-frequency}
\end{figure}

\paragraph{FineWeb indirect probes.}
We then ask whether the same separation appears in a masked DLM trained on
FineWeb \citep{penedo2024fineweb}. 
We choose masked diffusion for the real-data experiment since it is 
popular, empirically strong, and the backbone of all modern discrete diffusion language models 
at scale, e.g., \cite{nie2025large,ye2025dream,bie2025llada20,bie2026llada21,song2025seed,liu2025tidar}. 
Moreover, the absorbing case is particularly convenient since several
popular parameterizations all collapse to the \emph{same} loss and parameterization, 
so that the same trained model probes many different masked-DLM formulations and parameterizations 
at the same time, including D3PM-style posterior prediction, Bayes clean-token substitution, mean/SUBS
reverse kernels, and the normalized induced-score parameterization. Thus the
experiment probes a common object shared by many masked-DLM formulations rather
than a parameterization-specific artifact; \Cref{app:dlm-parameterizations}
reviews the equivalences and the cases where they instead differ.  
Since the true support of natural language is unavailable, we use held-out FineWeb as a proxy.
We scan validation text for repeated one-token restoration contexts: for a left
token $\ell$ and right token $r$, we record which center tokens actually
occurred in the slot $(\ell,\texttt{[MASK]},r)$.
For instance, if
held-out FineWeb contains \texttt{the end of}, then \texttt{end} is an empirical
candidate for \texttt{the [MASK] of}. A non-candidate for this slot is a token
observed elsewhere in the same context bank but not between \texttt{the} and
\texttt{of}.

The primary support curve in \Cref{fig:real-dlm-support-frequency} is the
pairwise accuracy with which empirical candidate tokens outrank
frequency-matched non-candidate negatives. The negatives are not necessarily 
ungrammatical; they simply are tokens that were observed elsewhere in the FineWeb
context bank but not in this particular slot. Matching negatives to positives by
global frequency prevents the probe from rewarding a model merely for preferring
common tokens; \Cref{fig:real-dlm-support-negative-controls} shows that other choices of negatives 
yield the same transition timing. 
We compare this support probe to two
within-candidate frequency probes: pairwise empirical ranking accuracy among
candidates, and top-1 empirical mode recovery. 
\Cref{app:synthetic-fineweb} checks that these indirect probes mirror the empirical evolution of their direct equivalent in a synthetic setup where the support is known. 

\paragraph{FineWeb result.}
Across three independent training seeds, the frequency-matched support probe
reaches $90\%$ of its peak gain at $116.3\pm5.7$M training tokens
(range $113.0$--$122.9$M). The top-1 empirical-mode and pairwise empirical
ranking probes reach the same criterion at $234.3\pm22.2$M and
$247.4\pm7.5$M tokens, respectively. Thus support localization precedes
frequency sharpening by roughly $118$--$131$M tokens on average, and the
ordering holds in each seed. The absolute token counts depend on the model,
data window, and context bank; the key comparison is within the same matched
probe. We view the separation as the real-data analogue of the rate separation
identified in \Cref{sec:expansion}: the model learns a useful support-like
projector before it learns the precise empirical distribution over that support.

\paragraph{Generalization vs. memorization.}
We also ask where this separation sits relative to the usual
generalization--memorization distinction in the FineWeb experiment. 
We evaluate masked-token
reconstruction losses on fixed train and validation windows at fixed masking
levels. A growing validation-minus-train gap would indicate train-specific
memorization; a small gap while both losses improve is consistent with a
generalization-like phase. In the measured early-token window, this diagnostic
shows a small gap while training and validation improve. Full probe definitions,
scoring details, training recipes, and supporting diagnostics are given in
\Cref{app:real-dlm-fineweb}.

\subsection{Test of prediction 2: Comparing uniform and masked projectors in synthetic examples}
\label{sec:xp-abs-vs-unif}

\begin{figure}
  \centering
  \begin{subfigure}[t]{0.4\linewidth}
    \includegraphics[width=\linewidth]{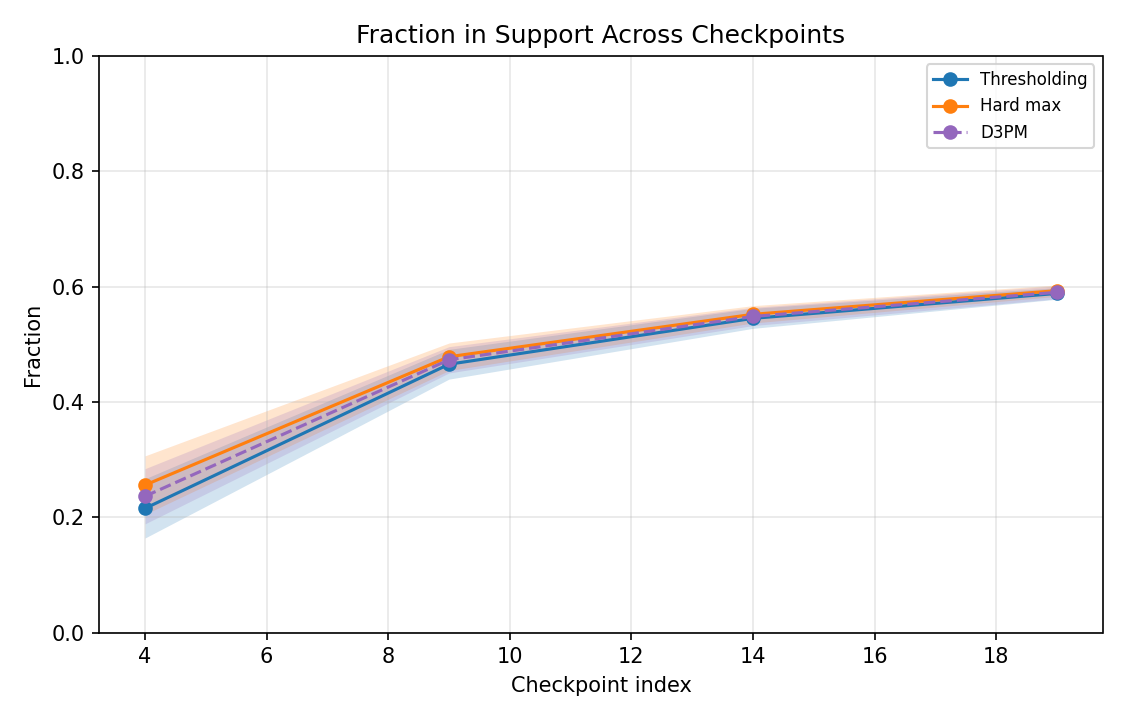}
    \caption{Absorbing diffusion}
  \end{subfigure}
  \begin{subfigure}[t]{0.4\linewidth}
    \centering
    \includegraphics[width=\linewidth]{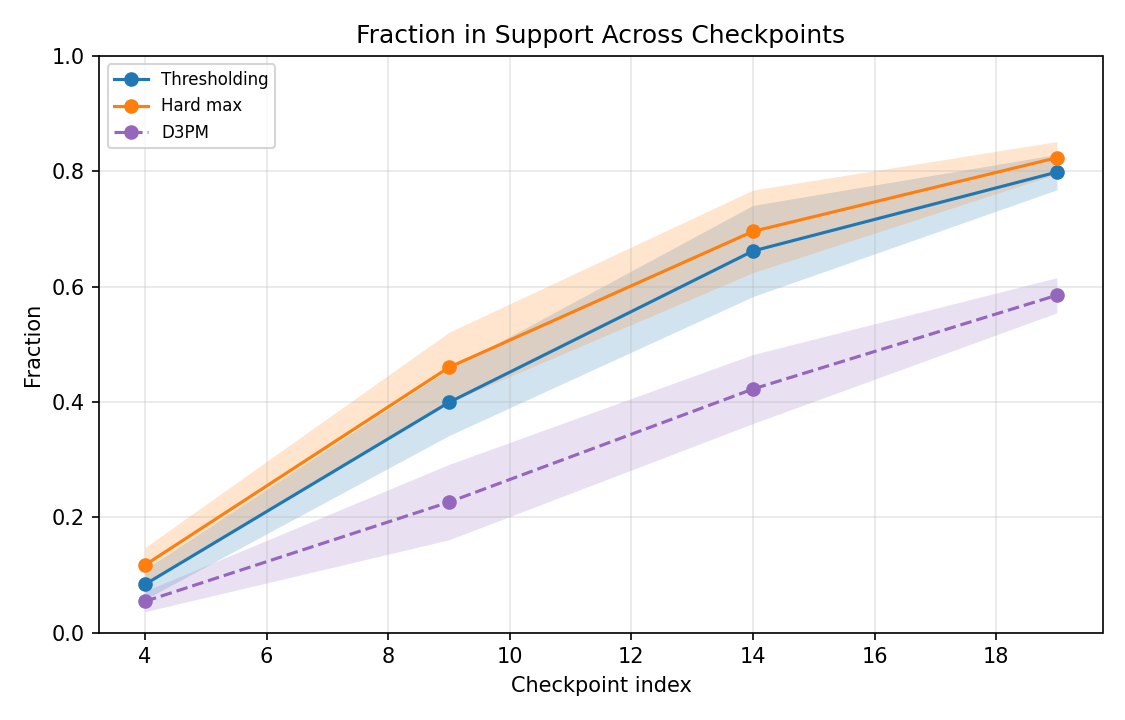}
    \caption{Uniform diffusion}
  \end{subfigure}
  \caption{\textbf{Projection-style edits help uniform diffusion more.}
  }
  \label{fig:synthetic-projector-main}
\end{figure}

The second takeaway of \Cref{sec:expansion} is that at low noise, the reverse dynamics can
act like a local projector onto data support, and that absorbing diffusion more strictly separates projective from non-projective single-token edits. We thus train uniform and absorbing D3PMs on the aforementioned synthetic distribution (\Cref{sec:xp-supp-then-freq}) and test the fraction of strings that are in the support of the synthetic data distribution. Notice that unlike the previous probes, this is a test of validity at full \emph{sequence level} rather than of local edits.

We compare the success rate of standard ancestral sampling to that of two slight modifications which, according to our theory, should leverage the information about projective directions more explicitly to enhance landing in the support. Both modified samplers, which we call ``thresholding'' and ``hard-max'' samplers respectively, first follow the standard denoising trajectory up until a small pre-defined noise level $\sigma$. Then, they repeatedly make a single-token edit among all single token edits for which the normalized reverse score (\cref{subsec:main}) is above a certain pre-defined threshold $\tau>0$ (see \cref{sec:projector-samplers} for the implementation details). Clearly, by our analysis in \cref{cor:coarse-score-projection}, this should explicitly single out directions along a projection onto the support if the threshold is well-calibrated: If the magnitudes of the reverse kernels have been learned to sufficient accuracy, theory predicts that the two samplers filter out all edits that do not strictly decrease the distance to the support. 

\Cref{fig:synthetic-projector-main} reports the fraction of generated sequences
that land in the true support under the three samplers. For absorbing diffusion,
the modified samplers perform nearly identically to ancestral sampling. For
uniform diffusion, in contrast, isolating the dominant component of the learned
reverse scores substantially improves sample validity. This behavior matches the
prediction of \cref{thm:absorbing-rate-separation}: absorbing diffusion has a
sharper low-noise separation than uniform diffusion (\cref{thm:main}), so its
permitted reverse moves are already much less affected by lower-order,
non-projective components.

This comparison should not be read as saying that absorbing diffusion is a
better global projector onto the support. The absorbing reverse process is
structurally restricted: it can only fill in currently masked tokens, and cannot
re-mask or revise tokens that have already been incorrectly decoded. Thus it may
identify the projective directions very accurately among the \emph{permitted}
denoising moves, while still lacking access to other correcting moves needed for
global support projection. This is consistent with the observation that the
modified uniform samplers can ultimately achieve a higher support-landing rate
than absorbing diffusion. Rather, the result shows that absorbing diffusion is
already close to a projector within its restricted reverse graph, whereas uniform
diffusion benefits from explicitly filtering its richer set of possible edits.

\section{Limitations and Future Directions}
\label{sec:limitations}

Our analysis focuses on the exact reverse process in the small-noise regime,
corresponding to the final denoising steps. The resulting scale separation is
not merely infinitesimal: since the gap scales as \(1/\sigma\), even
\(\sigma\approx 0.1\) gives an order-of-magnitude separation between support
and finer frequency information. Nevertheless, the extent to which trained
models recover the different terms in this expansion may depend on the loss,
architecture, parameterization, and optimization procedure. A sharper training
theory, perhaps in a stylized model or through statistical guarantees, is needed
to explain when these terms are learned in the predicted order.

Several directions remain open. First, extending the experiments to broader DLM
parameterizations and larger real-world datasets would clarify the scope of the
phenomenon. Second, support can be brittle for natural language and other
high-dimensional discrete data, where rare but valid strings may have extremely
small probability. A more robust formulation could replace the hard support
\(D=\supp(\pData)\) with an \(\epsilon\)-effective support. Finally, the
practical implications of support-before-frequency remain to be developed,
including early stopping criteria, validity-improving inference-time samplers,
and downstream applications that prioritize structural correctness over full
distributional calibration.

\section*{Acknowledgements}
This work was supported in part by the Swiss 
State Secretariat for Education, Research and Innovation (SERI) under contract number MB22.00027. We thank EPFL's RCP cluster for computational resources, and the RCP team for their organization during periods of high demand on the cluster. 
Part of this research was performed while Adrian M\"uller was visiting the Institute for Mathematical and Statistical Innovation (IMSI), which is supported by the National Science Foundation (Grant No. DMS-2425650).


\bibliographystyle{plainnat}
\bibliography{refs}

\newpage
\appendix

\part*{Appendix}
\addcontentsline{toc}{part}{Appendix} 
\etocsettocdepth{subsection} 
\localtableofcontents

\clearpage 

\section{Discrete-diffusion parameterizations for masked language modeling}
\label{app:dlm-parameterizations}

\providecommand{\nn}{\mathrm{nn}}

This appendix summarizes the modeling choices behind the masked diffusion
language model used in the FineWeb experiment in \Cref{sec:xp-supp-then-freq}. The goal is not to survey
all diffusion language models, but to make clear which common
parameterizations are equivalent in the absorbing-mask case, which ones are
not, and which family our experiments instantiate. 
An illustration is given in \Cref{fig:dt-ct-mask-score}.

The clean data sequence is denoted by \(x_0\in\mathcal V^H\), where
\(\mathcal V=[K]\) is the vocabulary. In absorbing diffusion the state space
is enlarged by one mask token \(m\notin\mathcal V\). A ``clean token'' simply
means a token in \(\mathcal V\), as opposed to the mask token \(m\).

The neural network outputs raw numbers
\(\big(\nn_{\theta}(x_t,t)\big)_{h,v}\) for each position \(h\in[H]\) and
candidate clean token \(v\in\mathcal V\). When a parameterization needs a
probability distribution on clean tokens, we write
\[
    \widehat{\nn}_{\theta,h}(v\mid x_t,t)
    :=
    \frac{
        \exp\big(\nn_{\theta}(x_t,t)\big)_{h,v}
    }{
        \sum_{u\in\mathcal V}
        \exp\big(\nn_{\theta}(x_t,t)\big)_{h,u}
    }.
\]
The learned object is the raw network output \(\nn_\theta\); the hat only means
that we have normalized it for a clean-token parameterization. In particular,
\(\widehat{\nn}_{\theta,h}\) is a distribution on \(\mathcal V\), not on
\(\mathcal V\cup\{m\}\). If one starts from logits over the enlarged alphabet,
this corresponds to the ``zero mask probability'' substitution used in SUBS
\citep{sahoo2024simple}.

We focus on this absorbing-mask SUBS/mean family for a conservative reason. 
Absorbing or masked diffusion has become the dominant route for diffusion
language modeling. It connects D3PMs to masked language modeling
\citep{austin2021structured}, admits simplified objectives and parameterizations
\citep{shi2024simplified,sahoo2024simple,ou2024your}, and has led to increasingly
competitive text models and samplers \citep{lou2023discrete,sahoo2024simple,
arriola2025block,wu2025fast,israel2025accelerating,nie2025large}. 
This absorbing
case is also convenient to consider as it sits at the intersection of several otherwise different views:
D3PM-style posterior parameterization, Bayes clean-token substitution,
mean/SUBS reverse kernels, and the normalized induced-score parameterization
all collapse to the same clean-token predictor. Thus the FineWeb experiment is
not tied to an idiosyncratic parameterization; in the mask case it probes the
common object shared by these formulations. Away from absorbing masks,
especially for uniform corruption or direct positive score models such as
SEDD, these parameterizations separate again.

\begin{figure}[H]
\centering
\resizebox{\linewidth}{!}{
\begin{tikzpicture}[
    >=Latex,
    font=\scriptsize,
    box/.style={draw, rounded corners=2mm, align=center, inner sep=5pt, text width=4.8cm},
    widebox/.style={draw, rounded corners=2mm, align=center, inner sep=5pt, text width=5.7cm},
    emphbox/.style={draw, rounded corners=2mm, align=center, inner sep=6pt, fill=gray!10, text width=5.8cm},
    arr/.style={->, thick}
]
\node[box] (dt1) at (0,0)
{\textbf{Discrete time}\\generation uses one-step kernels\\\(q(x_{t-1}\mid x_t)\)};
\node[box] (dt2) at (0,-2.2)
{model the backward step by\\\(q_\theta(x_{t-1}\mid x_t)\)};
\node[box] (dt3) at (0,-4.4)
{ELBO compares\\\(q(x_{t-1}\mid x_t,x_0)\) and \(q_\theta(x_{t-1}\mid x_t)\)};

\node[widebox] (ct1) at (7.8,0)
{\textbf{Continuous time}\\generation uses reverse rates\\\(\widetilde R_t(y\to y')=R_t(y'\to y)s_t(y\to y')\)};
\node[widebox] (ct2) at (7.8,-2.2)
{model the reverse rate through\\a score parameterization \(s_\theta\)};
\node[widebox] (ct3) at (7.8,-4.4)
{score objective compares\\the true score \(s_t\) and model score \(s_\theta\)};

\node[widebox] (induced) at (3.7,-7.0)
{\textbf{absorbing + clean-token output}\\Bayes / D3PM / mean-SUBS coincide\\and induce \(s_\theta^{\mathrm{ind}}\)};
\node[emphbox] (ce) at (3.7,-9.45)
{\textbf{weighted masked clean-token CE}\\\(\displaystyle
\sum_t w_t\,\E[-\log \widehat{\nn}_{\theta,h}(x_0^h\mid x_t,t)\indicator{x_t^h=m}]\)};

\node[widebox] (direct) at (10.3,-7.0)
{\textbf{direct positive score}\\parameterize \(s_\theta\) itself\\as in SEDD};
\node[emphbox] (scoreloss) at (10.3,-9.45)
{\textbf{score-entropy loss}\\for a general positive score field};

\draw[arr] (dt1) -- (dt2);
\draw[arr] (dt2) -- (dt3);
\draw[arr] (dt3) -- (induced);
\draw[arr] (ct1) -- (ct2);
\draw[arr] (ct2) -- (ct3);
\draw[arr] (ct3) -- (induced);
\draw[arr] (ct3) -- (direct);
\draw[arr] (induced) -- (ce);
\draw[arr] (direct) -- (scoreloss);
\end{tikzpicture}}
\caption{Two routes for training absorbing-mask diffusion models. In the
clean-token family used by D3PM-style and mean/SUBS parameterizations, the
reverse kernel and the induced score are determined by the same normalized
network output and lead to the same weighted masked-token cross-entropy. Direct
score parameterizations such as SEDD remain a separate family. 
We use the weighted masked clean-token CE in our experiments, in the mask case, 
which is equivalent to the popular SUBS/mean and D3PM parameterizations, 
and which also coincides with a score objective for the induced score field.} 
\label{fig:dt-ct-mask-score}
\end{figure}

\subsection{The reverse step and where the neural network enters}

Generation proceeds by starting from a fully noised sequence and applying
reverse steps one at a time:
\[
x_T \to x_{T-1} \to \cdots \to x_0.
\]
The ideal reverse step is the true conditional distribution \(q_{t-1\mid t}\).
This object depends on the data distribution and is unknown. Standard
parameterized reverse kernels for DLMs are token-factorized:
\[
q(x_{t-1} \mid x_t) = \prod_{h\in[H]} q(x_{t-1}^h \mid x_t)
\]
so it is enough
to describe the token-level reverse steps $q(x_{t-1}^h \mid x_t)$. 
For one position
\(h\), a useful Bayes decomposition is
\begin{align}
q(x_{t-1}^h\mid x_t)
=
\sum_{v\in\mathcal V}
q(x_{t-1}^h\mid x_t,x^h_0=v)\,
q(x_0^h=v\mid x_t).
\label{eq:appA-exact-token-reverse}
\end{align}
The first factor of each summand is easy to compute in practice: once the original token \(v\)
and the current noised token \(x_t\) are fixed, the probability of the previous
noised token \(x_{t-1}^h\) is determined by the known corruption kernel. For
instance, in absorbing diffusion, if \(x_t^h\) is not masked, then
\(x_{t-1}^h\) must be the same clean token; if \(x_t^h\) is masked, then
\(x_{t-1}^h\) is either the same mask or the original clean token, with
probabilities determined by the noise schedule.

The second factor is, just like the original quantity \(q(x_{t-1}^h\mid x_t)\),
unknown. It is the posterior belief about the original token at position
\(h\), given the whole corrupted sequence \(x_t\). Several parameterizations in
the literature can be read as replacing this unknown posterior 
$q(x_0^h=v\mid x_t)$ 
by a neural
network output. Once plugged in the above formula, we deduce a surrogate \(q_{\theta}(x_{t-1}^h\mid x_t)\) to use
for sampling in place of the true reverse step \(q(x_{t-1}^h\mid x_t)\).

\paragraph{Bayes substitution as a reference formula.}
The most direct substitution in \eqref{eq:appA-exact-token-reverse} would be
\begin{align}
q^{\mathrm{Bayes}}_{\theta}(x_{t-1}^h\mid x_t)
=
\sum_{v\in\mathcal V}
q(x_{t-1}^h\mid x_t,x_0^h=v)\,
\widehat{\nn}_{\theta,h}(v\mid x_t,t).
\label{eq:appA-bayes-substitution}
\end{align}
We review below the usual constructions in the literature and how they relate to this direct Bayes substitution. 
The main takeaway is that for \emph{absorbing} diffusion, they all coincide with this Bayes substitution. 

\paragraph{D3PM posterior parameterization.}
D3PMs \citep{austin2021structured} use a closely related but not identical
posterior parameterization. Note that in \eqref{eq:appA-exact-token-reverse},
the known factor \(q(x_{t-1}^h\mid x_t,x_0^h)\) can be written as
\[
q(x_{t-1}^h\mid x_t,x_0^h)
=
\frac{q(x_{t-1}^h,x_t^h\mid x_0^h)}{q(x_t^h\mid x_0^h)}.
\]
Thus, the Bayes decomposition can be equivalently written as
\[
q(x_{t-1}^h \mid x_t)
=
\sum_{v\in\mathcal V}
q(x_{t-1}^h,x_t^h\mid x_0^h=v)
\frac{q(x_0^h=v\mid x_t)}{q(x_t^h\mid x_0^h=v)}.
\]
The D3PM parameterization replaces the fraction
\(q(x_0^h=v\mid x_t)/q(x_t^h\mid x_0^h=v)\) by the network output and then
renormalizes over \(x_{t-1}^h\):
\begin{align}
q^{\mathrm{D3PM}}_{\theta}(x_{t-1}^h\mid x_t)
\propto
\sum_{v\in\mathcal V}
q(x_{t-1}^h,x_t^h\mid x_0^h=v)\,
\widehat{\nn}_{\theta,h}(v\mid x_t,t).
\label{eq:appA-d3pm}
\end{align}
This differs from the direct Bayes substitution by the extra factor
\(q(x_t^h\mid x_0^h)\) inside the mixture weights. For general corruption
kernels that factor depends on \(x_0^h\), so the two substitutions are not the
same. For absorbing diffusion, when \(x_t^h=m\), the known factor
\(q(x_t^h=m\mid x_0^h=v)\) is independent of \(v\), so the extra factor cancels
in the normalization. When \(x_t^h\neq m\), the absorbing process forces
\(x_0^h=x_t^h\) and the reverse step copies the clean token. Hence, under the
usual copy and normalization constraints, Bayes and D3PM parameterizations are
the same for absorbing masks.

\paragraph{Mean/SUBS parameterization for absorbing masks.}
A popular parameterization is the mean/SUBS parameterization used by
\citet{shi2024simplified}, \citet{sahoo2024simple}, and
\citet{ou2024your}. This parameterization is designed for absorbing masks and
is a special case of the Bayes substitution for that corruption kernel. In
particular, it coincides with the Bayes, and therefore D3PM, parameterization
for absorbing masks. Starting again from the exact Bayes formula in
\eqref{eq:appA-exact-token-reverse}, the mean/SUBS parameterization first
replaces the known factor \(q(x_{t-1}^h\mid x_t,x_0^h)\) by its exact
closed-form expression in the absorbing case:
\begin{align}
q(x_{t-1}^h\mid x_t,x_0^h)
=
\begin{cases}
\delta_{x_t^h}(x_{t-1}^h), & x_t^h\neq m,\\[0.8ex]
\displaystyle
\frac{\sigma_{t-1}}{\sigma_t}\,\delta_m(x_{t-1}^h)
+
\frac{\sigma_t-\sigma_{t-1}}{\sigma_t}\,\delta_{x_0^h}(x_{t-1}^h),
& x_t^h=m.
\end{cases}
\label{eq:appA-absorbing-posterior}
\end{align}
The interpretation is literal. If the token at time \(t\) is not masked, it
must be carried over to time \(t-1\), since the forward process never
substitutes one clean token for another. If the current token is masked, then
one step earlier it was either already masked, or it was still equal to the
original clean token \(x_0^h\).

Plugging this absorbing-mask posterior into the Bayes substitution gives a
copy term for the unmasked case, and a mixture of the mask and clean-token
posterior for the masked case:
\begin{align}
q(x_{t-1}^h\mid x_t)
=
\begin{cases}
\delta_{x_t^h}(x_{t-1}^h), & x_t^h\neq m,\\[0.8ex]
\displaystyle
\frac{\sigma_{t-1}}{\sigma_t}\,\delta_m(x_{t-1}^h)
+
\frac{\sigma_t-\sigma_{t-1}}{\sigma_t}\sum_{v\in\mathcal V}
q(x_0^h=v\mid x_t)\,\delta_v(x_{t-1}^h),
& x_t^h=m.
\end{cases}
\label{eq:appA-absorbing-bayes-substitution}
\end{align}
The mean/SUBS parameterization replaces the remaining unknown clean-token
posterior by the normalized network output:
\begin{align}
q^{\mathrm{SUBS}}_{\theta}(x_{t-1}^h\mid x_t)
=
\begin{cases}
\delta_{x_t^h}(x_{t-1}^h), & x_t^h\neq m,\\[0.8ex]
\displaystyle
\frac{\sigma_{t-1}}{\sigma_t}\,\delta_m(x_{t-1}^h)
+
\frac{\sigma_t-\sigma_{t-1}}{\sigma_t}
\sum_{v\in\mathcal V}
\widehat{\nn}_{\theta,h}(v\mid x_t,t)\,\delta_v(x_{t-1}^h),
& x_t^h=m.
\end{cases}
\label{eq:appA-subs}
\end{align}
This is the parameterization used by the model in
\Cref{sec:xp-supp-then-freq}.

\subsection{Losses for absorbing masks}
\label{appA:losses}

We assume that we approximate the reverse process, which is given by
$q_T$ and $(q_{t-1|t})_{t\in[T]}$, with a parameterized process given by
$p_T\in\Simplex{\Xc}$ and $(p_{t-1|t}(\cdot\mid\cdot))_{t\in[T]}$. 
The usual variational upper bound on the negative log-likelihood decomposes as
\[
\E_{x_0\sim \pData}
\big[-\log p_0(x_0)\big]
\le
B_T(p_T)
+
\sum_{t=1}^T L_t(p),
\]
where the terminal boundary term is
$
B_T(p_T)
:=
\E_{x_0\sim \pData}
\Big[
\Dkl\big(q_{T|0}(\cdot\mid x_0)\,\|\,p_T\big)
\Big]$,
and the denoising term at time $t$ is
\[
L_t(p)
:=
\E_{\substack{x_0\sim \pData\\ x_t\sim q_{t|0}(\cdot\mid x_0)}}
\Big[
\Dkl\big(
q_{t-1|t,0}(\cdot\mid x_t,x_0)
\,\|\,
p_{t-1|t}(\cdot\mid x_t)
\big)
\Big].
\]
Here the case $t=1$ includes the reconstruction term, since
$q_{0|1,0}(\cdot\mid x_1,x_0)=\delta_{x_0}$.
Since the terminal term either vanishes or is independent of the learned reverse kernels when
$p_T$ is fixed to the fully corrupted marginal, we focus on the normalized denoising objective
\[
L(p):=\frac1T\sum_{t=1}^T L_t(p).
\]
Most standard discrete diffusion models, starting from D3PM, train the reverse kernels through
the empirical variational bound above; in the absorbing/masked case, this bound reduces to
equivalent weighted clean-token prediction objectives used in modern masked DLMs
\citep{austin2021structured,shi2024simplified,ou2024your,sahoo2024simple}, as we now detail, and whose formula is given in \Cref{eq:appA-masked-ce}. 

If we fix a time $t$ and consider, as in the previous subsection, $p=q_\theta$ an approximation given by a neural network with parameters $\theta$, then the variational bound is expressed in terms of denoising
terms
\[
L_t(q_\theta)
=
\E_{\substack{x_0\sim \pData\\ x_t\sim q_{t\mid0}(\cdot\mid x_0)}}
\left[
\mathrm{KL}\!\left(
q_{t-1\mid t,0}(\cdot\mid x_t,x_0)
\,\middle\|\,
q_\theta(\cdot\mid x_t)
\right)
\right].
\]
The terminal term is fixed once the terminal distribution is fixed, so the
model-dependent part is the sum of these denoising terms $\sum_{t=1}^T L_t(q_\theta)$. In the absorbing
case, the simplification above makes the loss a weighted clean-token
prediction objective. The KL is against the posterior conditioned on the clean
sample \(x_0\), not directly against the marginal reverse kernel
\(q_{t-1\mid t}(\cdot\mid x_t)\). After averaging over
\(x_0\mid x_t\), however, the population minimizer over kernels
\(q_\theta(\cdot\mid x_t)\) is exactly the true marginal reverse kernel
\(q_{t-1\mid t}(\cdot\mid x_t)\).

Indeed, for an unmasked coordinate \(x_t^h\neq m\), both the true 
$q_{t-1\mid t,0}(\cdot\mid x_t,x_0)$ and the SUBS/Bayes/D3PM model \(q_\theta(\cdot\mid x_t)\)
copy \(x_t^h\) one step back, regardless of the network output. Hence the KL divergence for that coordinate is zero. 

For a masked coordinate \(x_t^h=m\), plugging
\eqref{eq:appA-absorbing-posterior} and \eqref{eq:appA-subs} into the KL gives,
up to constants independent of \(\theta\),
\[
\frac{\sigma_t-\sigma_{t-1}}{\sigma_t}
\big[-\log \widehat{\nn}_{\theta,h}(x_0^h\mid x_t,t)\big].
\]
Therefore the finite-\(T\) objective reduces to
\begin{align}
\mathcal L_T(\theta)
=
\sum_{t=1}^T
\frac{\sigma_t-\sigma_{t-1}}{\sigma_t}\,
\E\!\left[
\sum_{h=1}^H
\indicator{x_t^h=m}
\big(-\log \widehat{\nn}_{\theta,h}(x_0^h\mid x_t,t)\big)
\right].
\label{eq:appA-masked-ce}
\end{align}
This is the discrete-time version. Taking \(T\to\infty\) turns the sum into the
continuous-time weighted masked-token objective used by modern masked DLMs,
with weight \(\dot\sigma_t/\sigma_t\) when \(\sigma_t\) denotes the cumulative
mask probability. Thus D3PM-style posterior training, Bayes substitution, and
mean/SUBS training collapse to the same clean-token prediction objective in
the absorbing-mask case.

\subsection{Scores: direct parameterization versus induced score}
\label{appA:scores}

Some other parameterizations in the literature are based 
on a continuous time formulation of the discrete-space diffusion process. 
In continuous time, there is no one-step object \(q_{t-1\mid t}\) until a time
discretization is chosen. The primitive object is instead a rate matrix \(R_t\)
that defines the infinitesimal forward transition probabilities:
\[
q_{t+\delta t\mid t}(y'\mid y)
=
\begin{cases}
R_t(y\to y')\,\delta t + o(\delta t), & y'\neq y,\\
1 + R_t(y\to y)\,\delta t + o(\delta t), & y'=y.
\end{cases}
\]
If the forward process has rate matrix \(R_t\), then
the off-diagonal reverse rates have the form
\[
\widetilde R_t(y\to y')
=
R_t(y'\to y)\,
\frac{q_t(y')}{q_t(y)}
\qquad (y'\neq y).
\]
Here the arrow in \(y\to y'\) always means ``current token \(y\), proposed
replacement \(y'\)'' for the process being discussed. The forward rate in the
reverse formula points in the opposite direction, \(y'\to y\), because a reverse
move from \(y\) to \(y'\) undoes a forward move from \(y'\) to \(y\).
For a full sequence, the same convention applies to neighboring sequences that
differ by one token replacement.
The ratio
\[
s_t(y\to y'):=\frac{q_t(y')}{q_t(y)}
\]
is called the discrete or concrete score in this literature.

SEDD \citep{lou2023discrete} directly parameterizes a positive score field and uses score entropy rather than a squared loss analogous to continuous-space score matching. In our arrow convention, the denoising
score-entropy loss has the form
\begin{align}
\mathcal L_t^{\mathrm{SE}}(\theta)
&=
\E_{\substack{x_0\sim \pData\\ x_t\sim q_{t\mid0}(\cdot\mid x_0)}}
\left[
\sum_{\tilde x\neq x_t}
R_t(\tilde x\to x_t)
\left(
s_\theta(x_t\to \tilde x\mid x_t,t)
\right.\right.
\nonumber\\[-0.2ex]
&\hspace{8em}\left.\left.
-
s_t^{x_0}(x_t\to \tilde x)\log s_\theta(x_t\to \tilde x\mid x_t,t)
+
\Phi\!\left(s_t^{x_0}(x_t\to \tilde x)\right)
\right)
\right].
\label{eq:appA-score-loss}
\end{align}
Here
\[
s_t^{x_0}(x\to \tilde x):=\frac{q_t(\tilde x\mid x_0)}{q_t(x\mid x_0)}
\qquad\text{and}\qquad
\Phi(a):=a\log a-a
\]
and the \(\Phi\) term is independent of \(\theta\). As in denoising score matching, the loss uses the
conditional score \(s_t^{x_0}\) as a tractable training target, while its
population optimum is the marginal score \(s_t(x\to \tilde x)=q_t(\tilde x)/q_t(x)\).

The parameterization of \(s_\theta\) is a modeling choice. A direct score
parameterization replaces the unknown score by a general positive function of
the proposed replacement state. Another option is to keep
\(\widehat{\nn}_{\theta,h}\) as an approximation of \(q(x_0^h\mid x_t)\), and
then re-express the score in terms of this clean-token posterior. This gives an
induced score parameterization. In the absorbing case, this induced score is a
deterministic rescaling of the same clean-token posterior that appears in the
Bayes/D3PM/SUBS reverse kernels, and the score-entropy loss reduces, up to
\(\theta\)-independent terms, to the same weighted cross-entropy as
\eqref{eq:appA-masked-ce}.

Indeed, for mask diffusion, conditioning on a
clean token \(u\in\mathcal V\),
\[
q_t(v\mid x_0^h=u)
=
(1-\sigma_t)\indicator{v=u}
+
\sigma_t\indicator{v=m}.
\]
Hence, for a move from mask to clean token \(j\),
\begin{align}
s_t(m\to j\mid x_0^h=u)
=
\frac{1-\sigma_t}{\sigma_t}\indicator{j=u}.
\label{eq:appA-true-mask-score}
\end{align}
After conditioning on the whole corrupted sequence \(x_t\), this becomes
\begin{align}
s_t(m\to j\mid x_t)
=
\frac{1-\sigma_t}{\sigma_t}\,
q(x_0^h=j\mid x_t).
\label{eq:appA-score-posterior}
\end{align}
Replacing the clean-token posterior by the network output gives the induced
score
\begin{align}
s_{\theta,t}^{\mathrm{ind}}(m\to j\mid x_t)
=
\frac{1-\sigma_t}{\sigma_t}\,
\widehat{\nn}_{\theta,h}(j\mid x_t,t),
\qquad j\in\mathcal V.
\label{eq:appA-induced-score}
\end{align}
Thus, in the masked case, the relevant unmasking score is a deterministic
rescaling of the same clean-token posterior that appeared in the reverse-kernel
parameterizations. This induced score is normalized:
\[
\sum_{j\in\mathcal V}s_{\theta,t}^{\mathrm{ind}}(m\to j\mid x_t)
=
\frac{1-\sigma_t}{\sigma_t}.
\]
Plugging \eqref{eq:appA-induced-score} into the score-entropy loss
\eqref{eq:appA-score-loss} leaves, up to constants independent of \(\theta\),
\[
\frac{\dot\sigma_t}{\sigma_t}\,
\indicator{x_t^h=m}
\big[-\log\widehat{\nn}_{\theta,h}(x_0^h\mid x_t,t)\big],
\]
the continuous-time analogue of \eqref{eq:appA-masked-ce}. In contrast, a direct
score parameterization such as SEDD replaces the unknown score by a general
positive field and optimizes score entropy directly; it is not constrained to
lie on the normalized clean-token manifold above, although the optimum recovers
the same marginal score when the model class is rich enough. 
This is summarized in \Cref{fig:dt-ct-mask-score}.

\subsection{Recipe used in the real-data experiment}

The FineWeb experiment in \Cref{sec:xp-supp-then-freq} uses the standard
absorbing-mask, clean-token-prediction family summarized above: a DDiT-style
denoiser, continuous-time log-linear mask noise, no explicit time conditioning,
and the mean/SUBS masked-token loss. Sampling is then done with a finite
discretization of the same absorbing reverse kernel.

\section{Deferred Proofs for \Cref{sec:expansion}} \label{app:proofs-expansion}

\subsection{Coarse multiplicative accuracy does not imply distributional accuracy}
\label{sec:example-large-TV}

The next example illustrates that the multiplicative control used in
\Cref{cor:coarse-score-projection} is genuinely weaker than distributional
closeness. In particular, a learned kernel may preserve the correct
\(\sigma\)-scale hierarchy while remaining far from the true kernel in additive
metrics such as total variation or KL divergence.

\begin{example}[Scale accuracy need not imply distributional accuracy]
\label{ex:multiplicative-not-tv}
Fix any \(\alpha\in(0,1/2)\). For \(\sigma\in(0,1/4)\), define probability
vectors on two points by
\[
q_\sigma
=
\left(
\frac12-\sigma,\,
\frac12+\sigma
\right),
\qquad
p_\sigma
=
\left(
\sigma^\alpha\left(\frac12-\sigma\right),\,
1-\sigma^\alpha\left(\frac12-\sigma\right)
\right).
\]
Both vectors have full support for every \(\sigma>0\). Moreover, their
symmetric multiplicative distortion satisfies
\[
\max_i
\max\left\{
\frac{p_{\sigma,i}}{q_{\sigma,i}},
\frac{q_{\sigma,i}}{p_{\sigma,i}}
\right\}
=
\Theta(\sigma^{-\alpha})
=
o(\sigma^{-1/2}),
\]
since \(\alpha<1/2\). Thus \(p_\sigma\) and \(q_\sigma\) obey the same type of
coarse multiplicative control required in \Cref{cor:coarse-score-projection}.

However, this control does not imply additive closeness. Indeed,
\[
\|p_\sigma-q_\sigma\|_1
=
1+o(1),
\qquad
\mathrm{TV}(p_\sigma,q_\sigma)
=
\frac12+o(1).
\]
Consequently, by Pinsker's inequality,
\[
\mathrm{KL}(p_\sigma\|q_\sigma)
\ge
2\,\mathrm{TV}(p_\sigma,q_\sigma)^2
=
\frac12+o(1),
\]
and the same lower bound also holds for
\(\mathrm{KL}(q_\sigma\|p_\sigma)\). Hence coarse multiplicative scale accuracy
can be sufficient for recovering the order-of-magnitude structure of the scores,
while still being far too weak to guarantee distributional calibration.
\end{example}

\subsection{Proof of \Cref{thm:main}} \label{app:uniform}

We first prove the following auxiliary lemma.

\begin{lemma} \label{lemma:expansion-simple}
    For the uniform diffusion process $q$, for $t\in[T]$ and $x \in \Xc$, we have 
    \begin{align*}
        q_t(x) =&  \pData(\proj_D(x)) \br{\frac{\sigma_t}{K}}^{d(x,D)} + O((\sigma_t/K)^{d(x,D)+1})
    \end{align*}
    as $\sigma_t \to 0$. 
\end{lemma}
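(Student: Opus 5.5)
The plan is to write the marginal $q_t$ explicitly as a mixture over clean strings and expand each term in powers of $\sigma_t$. Since the uniform forward process is independent across coordinates with one-token marginal kernel $q^i_{t|0}(y\mid z)=(1-\sigma_t)\indicator{y=z}+\sigma_t/K$, for a clean string $z$ and a noisy string $x$ we have
\[
q_{t|0}(x\mid z)=\prod_{i=1}^H\Big((1-\sigma_t)\indicator{x^i=z^i}+\frac{\sigma_t}{K}\Big)
=\Big(1-\sigma_t+\frac{\sigma_t}{K}\Big)^{H-d(x,z)}\Big(\frac{\sigma_t}{K}\Big)^{d(x,z)} .
\]
Hence
\[
q_t(x)=\sum_{z\in D}\pData(z)\Big(1-\sigma_t+\frac{\sigma_t}{K}\Big)^{H-d(x,z)}\Big(\frac{\sigma_t}{K}\Big)^{d(x,z)} .
\]

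Next I split the sum according to $d(x,z)$. For $z\in\proj_D(x)$ we have $d(x,z)=d(x,D)$, and the prefactor satisfies $(1-\sigma_t(1-1/K))^{H-d(x,D)}=1+O(\sigma_t)$, since $H$ and $K$ are fixed. These terms therefore contribute
\[
\pData(\proj_D(x))\Big(\frac{\sigma_t}{K}\Big)^{d(x,D)}\big(1+O(\sigma_t)\big).
\]
The $O(\sigma_t)$ correction can be written as $O((\sigma_t/K)^{d(x,D)+1})$, because $K$ is a fixed constant absorbed into the $O$. For $z\in D\setminus\proj_D(x)$ we have $d(x,z)\ge d(x,D)+1$, and the prefactor is bounded by $1$. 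Each such term is thus at most $\pData(z)(\sigma_t/K)^{d(x,D)+1}$ once $\sigma_t/K\le1$. Summing over $z$ and using $\sum_z\pData(z)\le1$ bounds the remainder by $(\sigma_t/K)^{d(x,D)+1}$.

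Combining the two pieces gives the claim. The only point needing mild care is that the $O$-constants be uniform: they depend only on $H$ and $K$, not on $t$ or $T$. This matters because the lemma will later be applied along $t=t_\sigma(T)$ with $T\to\infty$. I expect no real obstacle beyond this bookkeeping. The only possible subtlety is when $\pData(\proj_D(x))$ is tiny, but it is strictly positive since $\proj_D(x)\subseteq D$ is nonempty, so the leading term is genuinely of order $(\sigma_t/K)^{d(x,D)}$.
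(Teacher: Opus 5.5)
Your proposal is correct and follows essentially the same route as the paper: write $q_t(x)$ as the mixture $\sum_{x_0\in D}\pData(x_0)\bigl(1-\sigma_t\tfrac{K-1}{K}\bigr)^{H-d(x,x_0)}(\sigma_t/K)^{d(x,x_0)}$, then separate the terms with $d(x,x_0)=d(x,D)$, i.e.\ those in $\proj_D(x)$, from the higher-order ones. Your explicit check that the $O$-constants depend only on $H$ and $K$, not on $t$ or $T$, is a useful addition. The paper leaves this implicit, although it relies on it later when it takes $t=t_\sigma(T)$ with $T\to\infty$.
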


\begin{proof}
    For any $x$, we have 
    \begin{align*}
        q_t(x) =& \sum_{x_0 \in D} \Pm[X_0 = x_0] \cdot \Pm[X_t = x | X_0 = x_0],
    \end{align*}
    where $\Pm[X_0 = x_0]=\pData(x_0)$ and, by definition of the forward process,
    \begin{align*}
        \Pm[X_t = x | X_0 = x_0] =& \prod_{i=1}^H \br{(1-\sigma_t) \indicator{x^i=x_0^i} + \frac{\sigma_t}{K}}\\
        =& \prod_{i=1}^H \br{\br{1-\sigma_t \frac{K-1}{K}} \indicator{x^i=x_0^i} + \frac{\sigma_t}{K}\indicator{x^i\neq x_0^i} }\\
        =& \br{1-\sigma_t \frac{K-1}{K}}^{H-d(x,x_0)} \br{\frac{\sigma_t}{K}}^{d(x,x_0)}.
    \end{align*}
    Hence rearranging the sum by the order $d(x,x_0)$ of $\sigma_t/K$,
    \begin{align*}
        q_t(x) =& \sum_{x_0\in D} \pData(x_0) \br{1-\sigma_t \frac{K-1}{K}}^{H-d(x,x_0)} \br{\frac{\sigma_t}{K}}^{d(x,x_0)}\\
        =& \sum_{d=0}^H \br{\sum_{\substack{x_0 \in D\colon\\d(x,x_0)=d}} \pData(x_0)} \br{1-\sigma_t \frac{K-1}{K}}^{H-d} \br{\frac{\sigma_t}{K}}^{d}\\
        =& \sum_{d=d(x,D)}^H \br{\sum_{\substack{x_0 \in D\colon\\d(x,x_0)=d}} \pData(x_0)} \br{1-\sigma_t \frac{K-1}{K}}^{H-d} \br{\frac{\sigma_t}{K}}^{d},
    \end{align*}
    where we used that $d(x,x_0)\geq d(x,D)$ for all $x_0\in D$ in the last line. Now when $\sigma_t \to 0$, we find that the lowest order term is
    \begin{align*}
        \bigg(\sum_{\substack{x_0 \in D\colon\\d(x,x_0)=d(x,D)}} \pData(x_0)\bigg) \br{\frac{\sigma_t}{K}}^{d(x,D)}
    \end{align*}
    and all other terms are of order at least $d(x,D)+1$ and thus contribute at most $O((\sigma_t/K)^{d(x,D)+1})$. 
\end{proof}

We are now ready to proceed with the proof of \cref{thm:main}.

\begin{proof}(\cref{thm:main}) 
    For ease of notation, we set $x:=x_t$ and $z:=x_{t-1}=(x_t^{-h},y^h)$. We have 
    \begin{align*}
        q_{t-1|t}^h(y^h | x) = \sum_{y^{-h}\in[K]^{H-1}} q_{t-1|t}(y^h|x) = \sum_{y^{-h}\in[K]^{H-1}} \frac{q_{t-1}(y)}{q_{t}(x)} q_{t|t-1}(x|y)
    \end{align*}
    by the law of total probability and Bayes' law. For the conditional term on the RHS we know that (since $x^h_t\neq y^h$)
    \begin{align*}
        q_{t|t-1}(x|y) = \br{1-\beta_t \frac{K-1}{K}}^{H-1-d(x^{-h},y^{-h})}\br{\frac{\beta_t}{K}}^{d(x^{-h},y^{-h})} \cdot \frac{\beta_t}{K}. 
    \end{align*}
    Plugging this into the previous equation and rearranging by the order in terms of $\beta_t$ reveals
    \begin{align}
        q_{t-1|t}^h(y^h | x) =& \sum_{y^{-h}\in[K]^{H-1}} \frac{q_{t-1}(y)}{q_{t}(x)} \cdot \br{1-\beta_t \frac{K-1}{K}}^{H-1-d(x^{-h},y^{-h})}\br{\frac{\beta_t}{K}}^{d(x^{-h},y^{-h})+1} \nonumber\\
        =& \sum_{d=0}^{H-1} \underbrace{\sum_{\substack{y^{-h}\in[K]^{H-1}\\d(x^{-h},y^{-h})=d}} \frac{q_{t-1}(y)}{q_{t}(x)} \cdot \br{1-\beta_t \frac{K-1}{K}}^{H-1-d}\br{\frac{\beta_t}{K}}^{d+1}}_{=: C_d}. \label{eq:sum-d}
    \end{align}
    Now invoking \Cref{lemma:expansion-simple} and using that $\proj_D(x) = \{y\in D \mid d(x,y)=d(x,D)\}$, we can approximate the ratio $\frac{q_{t-1}(y)}{q_{t}(x)}$ as
    \begin{align}
        \frac{q_{t-1}(y)}{q_{t}(x)} =& \frac{\pData(\proj_D(y)) \br{\frac{\sigma_{t-1}}{K}}^{d(y,D)}\br{1+O(\frac{\sigma_{t-1}}{K})}}{\pData(\proj_D(x)) \br{\frac{\sigma_t}{K}}^{d(x,D)}\br{1+O(\frac{\sigma_{t}}{K})}}\nonumber\\
        =& \frac{\pData(\proj_D(y))}{\pData(\proj_D(x))} \br{\frac{\sigma_t}{K}}^{d(y,D)-d(x,D)}\cdot\br{\frac{\sigma_{t-1}}{\sigma_t}}^{d(y,D)} \br{1+O\br{\frac{\sigma_t}{K}}}. \label{eq:score}
    \end{align}
    We now consider the different terms $C_d$ in the sum in \Cref{eq:sum-d} by plugging in the ratio from \Cref{eq:score}: 
    
    \textbf{Lowest order: $\mathbf{d=0}$.} 
    We have $d(x^{-h},y^{-h})=0$, so $y=(x^{-h},y^h)$. Hence, the only term here is 
    \begin{align*}
        C_0 = \frac{\pData(\proj_D(y))}{\pData(\proj_D(x))} \br{\frac{\sigma_t}{K}}^{d(y,D)-d(x,D)}\cdot\br{\frac{\sigma_{t-1}}{\sigma_t}}^{d(y,D)} \br{1+O\br{\frac{\sigma_t}{K}}} \br{1-\beta_t \frac{K-1}{K}}^{H-1}\br{\frac{\beta_t}{K}}.
    \end{align*}
    By \Cref{assumption:schedule}, we have $\frac{\sigma_{t-1}}{\sigma_t}\to 1$ and $1-\beta_t \frac{K-1}{K} \to 1$, hence 
    \begin{align*}
        \frac{C_d}{\beta_t / K } \to \frac{\pData(\proj_D(y))}{\pData(\proj_D(x))} \br{\frac{\sigma}{K}}^{d(y,D)-d(x,D)} +O\br{\br{\frac{\sigma}{K}}^{d(y,D)-d(x,D)+1}}
    \end{align*}
    as $T\to\infty$. 

    \textbf{Higher orders: $\mathbf{d>0}$.} In this case, we have 
    \begin{align*}
        \frac{C_d}{\beta_t / K} = \sum_{\substack{y^{-h}\in[K]^{H-1}\\d(x^{-h},y^{-h})=d}} \frac{q_{t-1}(y)}{q_{t}(x)} \cdot \br{1-\beta_t \frac{K-1}{K}}^{H-1-d}\br{\frac{\beta_t}{K}}^{d} \to 0
    \end{align*}
    since $(\beta_t/K)^d \to 0$ for $d\ge 1$, while $1-\beta_t\frac{K-1}{K} \to 1$ and $\frac{q_{t-1}(y)}{q_{t}(x)}$ allows for the analogous upper bound independent of $T$ as in the $d=0$ case by using \Cref{eq:score}.

    Hence, we conclude that 
    \begin{align*}
        \lim_{T\to\infty} \frac{q^h_{t-1|t}(y^h | x)}{\beta_t/K} = \frac{\pData(\proj_D(y))}{\pData(\proj_D(x))} \br{\frac{\sigma}{K}}^{d(y,D)-d(x,D)} +O\br{\br{\frac{\sigma}{K}}^{d(y,D)-d(x,D)+1}},
    \end{align*}
    where $y=(x^{-h},y^h)$. A case distinction with respect to $d(y,D) \in \{d(x,D)-1, d(x,D), d(x,D)+1\}$ now shows the claim.
\end{proof}

\subsection{Proof of \Cref{thm:main-absorb}} \label{app:absorb}

We first prove the following auxiliary lemma, which transfers the expansion of the marginal at time $t$ in \Cref{lemma:expansion-simple} to absorbing diffusion.\\

\begin{lemma} \label{lemma:expansion-simple-absorb}
    For the absorbing diffusion process $q$, for $t\in[T]$ and $x \in \Xc$, we have 
    \begin{align*}
        q_t(x) =& \begin{dcases}
            \pData(\proj_D(x)) \sigma_t^{d(x,D)} + O(\sigma_t^{d(x,D)+1}) \qquad& \text{if } \exists x_0\in D \colon \forall h\in[H]\colon (x^h\neq m \Rightarrow x^h = x_0^h)\\[2mm]
            0 \qquad& \text{else}
        \end{dcases}
    \end{align*}
    as $\sigma_t \to 0$. 
\end{lemma}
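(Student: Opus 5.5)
I would follow the template of the proof of \Cref{lemma:expansion-simple}, replacing the uniform single-site kernel by the absorbing one. Here $x$ ranges over strings in $([K]\cup\{m\})^H$, and the Hamming distance $d(x,D)$ counts mask positions as mismatches.

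The starting point is the law of total probability,
\[
q_t(x)=\sum_{x_0\in D}\pData(x_0)\,\Pm[X_t=x\mid X_0=x_0].
\]
The cumulative single-site marginal is $q^i_{t|0}(y\mid x_0^i)=(1-\sigma_t)\indicator{y=x_0^i}+\sigma_t\indicator{y=m}$. Hence the conditional probability factorizes as
\[
\Pm[X_t=x\mid X_0=x_0]=\prod_{i=1}^H\Big((1-\sigma_t)\indicator{x^i=x_0^i}+\sigma_t\indicator{x^i=m}\Big).
\]
This product vanishes unless $x$ is \emph{compatible} with $x_0$, meaning every unmasked coordinate of $x$ agrees with $x_0$. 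For compatible pairs, every mismatch is a mask, so $d(x,x_0)=M(x)$, the number of masked coordinates of $x$. The product then equals $(1-\sigma_t)^{H-M(x)}\sigma_t^{M(x)}$.

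The key structural observation follows from this. Since no element of $D$ contains $m$, every $z\in D$ satisfies $d(x,z)\ge M(x)$, with equality exactly when $z$ is compatible with $x$. So if some compatible $x_0\in D$ exists, then $d(x,D)=M(x)$, and $\proj_D(x)$ is precisely the set of compatible strings in $D$. This gives
\[
q_t(x)=\pData(\proj_D(x))\,(1-\sigma_t)^{H-d(x,D)}\,\sigma_t^{d(x,D)}.
\]
Expanding $(1-\sigma_t)^{H-d}=1+O(\sigma_t)$ yields the claimed leading term with an $O(\sigma_t^{d(x,D)+1})$ remainder. If no compatible $x_0\in D$ exists, every summand vanishes and $q_t(x)=0$ identically, which is the second case.

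Unlike the uniform case, there is no sum over higher orders to rearrange: only a single power of $\sigma_t$ appears, so the lemma in fact holds with an exact closed form. The only step needing care is the identification $d(x,D)=M(x)$ and $\proj_D(x)=\{\text{compatible }x_0\in D\}$. This relies on the mask symbol being outside $[K]$, so that masked coordinates mismatch every element of $D$ regardless of which clean string is chosen.
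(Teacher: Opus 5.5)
Your proof is correct and follows the paper's argument: the law of total probability, the factorized absorbing kernel with the compatibility indicator, and the identification of the minimal order with the number of masked positions, which equals $d(x,D)$. Your observation that all compatible $x_0\in D$ lie at the same distance from $x$ makes the paper's presentation slightly sharper: the exact identity $q_t(x)=\pData(\proj_D(x))(1-\sigma_t)^{H-d(x,D)}\sigma_t^{d(x,D)}$ holds, the higher-order terms in the paper's sum over $d$ are empty, and the only $O(\sigma_t^{d(x,D)+1})$ correction comes from expanding $(1-\sigma_t)^{H-d(x,D)}$.
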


Note that we are in the first of the two cases if and only if $x\in\supp(q_t)$. 

\begin{proof}
    For any $x$, we have 
    \begin{align*}
        q_t(x) =& \sum_{x_0 \in D} \Pm[X_0 = x_0] \cdot \Pm[X_t = x | X_0 = x_0],
    \end{align*}
    where $\Pm[X_0 = x_0]=\pData(x_0)$ and, by definition of the forward process,
    \begin{align*}
        \Pm[X_t = x | X_0 = x_0] =& \prod_{i=1}^H \br{(1-\sigma_t) \indicator{x^i=x_0^i} + \sigma_t\indicator{x^i=m}}\\
        =& \prod_{i=1}^H \br{(1-\sigma_t) \indicator{x^i=x_0^i} + \sigma_t\indicator{x^i=m}}\\
        =& \br{1-\sigma_t}^{H-d(x,x_0)} \sigma_t^{d(x,x_0)} \indicator{\forall i\colon x^i\in\{x_0^i,m\}}.
    \end{align*}
    Hence rearranging the sum by the order $d(x,x_0)$ of $\sigma_t$,
    \begin{align*}
        q_t(x) =& \sum_{\substack{x_0\in D \colon\\ \forall h \colon x^{h}\in\{x_0^{h},m\}}} \pData(x_0) \br{1-\sigma_t}^{H-d(x,x_0)} \sigma_t^{d(x,x_0)}\\
        =& \sum_{d=0}^H \br{\sum_{\substack{x_0 \in D\colon\\\forall h\colon x^{h}\in\{x_0^{h},m\},\\d(x,x_0)=d}} \pData(x_0)} \br{1-\sigma_t}^{H-d} \sigma_t^{d}\\
    \end{align*}
    Now when $\sigma_t \to 0$, we find that the lowest order non-vanishing term is of order 
    \begin{align*}
        d_0 :=& \min\{ d(x,x_0) \mid x_0 \in D,~ \forall h\in[H] \colon (x^h = m \vee x^h=x_0^h ) \}\\
        =& \begin{cases}
            |\{h\in[H]\colon x_h=m\}|=d(x,D) \qquad&\text{if } \exists x_0 \in D \colon \forall i \colon (x^h\neq m \Rightarrow x^h=x_0^h)\\
            +\infty \qquad &\text{else}
        \end{cases}
    \end{align*}
    since any $x_0 \in D$ has no masked tokens. Hence $q_t(x)=0$ in the second of the above cases. Otherwise, in the first case, the lowest order term is
    \begin{align*}
        \bigg(\sum_{\substack{x_0 \in D\colon\\d(x,x_0)=d(x,D)}} \pData(x_0)\bigg) \sigma_t^{d(x,D)}
    \end{align*}
    and all other terms are of order at least $d(x,D)+1$ and thus contribute at most $O(\sigma_t^{d(x,D)+1})$. 
\end{proof}

\paragraph{Proof of \Cref{thm:main-absorb}.}

\begin{proof}
    For ease of notation, we set $x:=x_t$ and $z:=x_{t-1}=(x_t^{-h},y^h)$. We have 
    \begin{align*}
        q_{t-1|t}^h(y^h | x) = \sum_{y^{-h}\in[K]^{H-1}} q_{t-1|t}(y^h|x) = \sum_{\substack{y^{-h}\in[K]^{H-1}\colon\\\forall i\colon y^{i}=x^{i}\vee x^{i}=m}} \frac{q_{t-1}(y)}{q_{t}(x)} q_{t|t-1}(x|y)
    \end{align*}
    by the law of total probability and Bayes' law. For the conditional term on the RHS we know that (since $x^h_t=m$, $y^h\neq m$ and $\forall i\neq h \colon y^{i}=x^{i}\vee x^{i}=m$)
    \begin{align*}
        q_{t|t-1}(x|y) = \br{1-\beta_t}^{H-1-d(x^{-h},y^{-h})}\beta_t^{d(x^{-h},y^{-h})} \cdot \beta_t. 
    \end{align*}
    Plugging this into the previous equation and rearranging by the order in terms of $\beta_t$ reveals
    \begin{align}
        q_{t-1|t}^h(y^h | x) =& \sum_{\substack{y^{-h}\in[K]^{H-1}\colon\\\forall i\colon y^{i}=x^{i}\vee x^{i}=m}} \frac{q_{t-1}(y)}{q_{t}(x)} \cdot \br{1-\beta_t}^{H-1-d(x^{-h},y^{-h})}\beta_t^{d(x^{-h},y^{-h})+1} \nonumber\\
        =& \sum_{d=0}^{H-1} \underbrace{\sum_{\substack{y^{-h}\in[K]^{H-1}\colon\\\forall i\colon y^{i}=x^{i}\vee x^{i}=m,\\d(x^{-h},y^{-h})=d}} \frac{q_{t-1}(y)}{q_{t}(x)} \cdot \br{1-\beta_t}^{H-1-d}\beta_t^{d+1}}_{=: C_d}. \label{eq:sum-d-absorb}
    \end{align}
    Consider each term in the sum in \Cref{eq:sum-d-absorb}. 
    Invoking \Cref{lemma:expansion-simple-absorb}, we can see that $\frac{q_{t-1}(y)}{q_{t}(x)}=0$ if there is no $x_0\in D $ such that $\forall i\colon y^i=m \vee y^i=x_0^i$ (second case in \Cref{lemma:expansion-simple-absorb}). Else (first case in \Cref{lemma:expansion-simple-absorb}), we can approximate the ratio $\frac{q_{t-1}(y)}{q_{t}(x)}$ as
    \begin{align}
        \frac{q_{t-1}(y)}{q_{t}(x)} =& \frac{\pData(\proj_D(y)) \sigma_{t-1}^{d(y,D)}\br{1+O(\sigma_{t-1})}}{\pData(\proj_D(x))\sigma_t^{d(x,D)}\br{1+O(\sigma_{t})}}\nonumber\\
        =& \frac{\pData(\proj_D(y))}{\pData(\proj_D(x))} \sigma_t^{d(y,D)-d(x,D)}\cdot\br{\frac{\sigma_{t-1}}{\sigma_t}}^{d(y,D)} \br{1+O(\sigma_t)}. \label{eq:score-absorb}
    \end{align}
    We now consider the different terms $C_d$ in the sum in \Cref{eq:sum-d-absorb} by plugging in the ratio from \Cref{eq:score-absorb}: 
    
    \textbf{Lowest order: $\mathbf{d=0}$.} 
    We have $d(x^{-h},y^{-h})=0$, so the sum in \Cref{eq:sum-d-absorb} is only over $y=(x^{-h},y^h)$. 

    \textit{Case 1:} $d(y,D)=d(x,D)-1$ \\
    Since $x\in\supp(q_t)$, this implies $y\in\supp(q_{t-1})$ (i.e. $\exists x_0\in D \colon \forall i\colon (y^i=m \vee y^i=x_0^i)$). Thus, the only term here is 
    \begin{align*}
        C_0 = \frac{\pData(\proj_D(y))}{\pData(\proj_D(x))}\sigma_t^{d(y,D)-d(x,D)}\cdot\br{\frac{\sigma_{t-1}}{\sigma_t}}^{d(y,D)} \br{1+O(\sigma_t)} \br{1-\beta_t}^{H-1}\beta_t.
    \end{align*}
    By \Cref{assumption:schedule}, we have $\frac{\sigma_{t-1}}{\sigma_t}\to 1$ and $1-\beta_t \to 1$, hence 
    \begin{align*}
        \frac{C_d}{\beta_t} \to \frac{\pData(\proj_D(y))}{\pData(\proj_D(x))} \sigma^{d(y,D)-d(x,D)} +O\br{\sigma^{d(y,D)-d(x,D)+1}} = \frac{\pData(\proj_D(y))}{\pData(\proj_D(x))} \sigma^{-1} +O\br{1}
    \end{align*}
    as $T\to\infty$. 

    \textit{Case 2:} $d(y,D) \geq d(x,D)$
    Note that since $x^h=m$ and all $x\in D$ do not contain the mask token $m$, we must in fact have $d(y,D)=d(x,D)$. Now since $x^h$ has been unmasked to $y^h$ but the distance to $D$ does not decrease, this means that there is no $x_0 \in D $ such that $\forall i \colon (y^i=m \vee y^i = x_0^i)$. Hence $q_{t-1}(y)=0$ and thus 
    \begin{align*}
        \frac{C_d}{\beta_t} = 0.
    \end{align*}

    \textbf{Higher orders: $\mathbf{d>0}$.} 
    Note that by \Cref{eq:score-absorb}
    \begin{align*}
        \frac{q_{t-1}(y)}{q_{t}(x)} \leq& \frac{\pData(\proj_D(y))}{\pData(\proj_D(x))} \sigma_t^{d(y,D)-d(x,D)}\cdot\br{\frac{\sigma_{t-1}}{\sigma_t}}^{d(y,D)} \br{1+O(\sigma_t)} \\
        \to ~& \frac{\pData(\proj_D(y))}{\pData(\proj_D(x))} \sigma^{d(y,D)-d(x,D)} +O\br{\sigma^{d(y,D)-d(x,D)+1}}
    \end{align*}
    for the fixed $\sigma$ as $T\to\infty$ like in the $d=0$ case. Hence, in this case,using that $\beta_t^d \to 0$ for $d>1$ and $1-\beta_t \to 1$, we have 
    \begin{align*}
        \frac{C_d}{\beta_t} = \sum_{\substack{y^{-h}\in[K]^{H-1}\colon\\\forall i\colon y^{i}=x^{i}\vee x^{i}=m,\\d(x^{-h},y^{-h})=d}} \frac{q_{t-1}(y)}{q_{t}(x)} \cdot \br{1-\beta_t}^{H-1-d}\beta_t^{d} \to 0
    \end{align*}

    Hence, we conclude that 
    \begin{align*}
        \lim_{T\to\infty} \frac{q^h_{t-1|t}(y^h | x)}{\beta_t} = \begin{dcases}
            \frac{\pData(\proj_D(y))}{\pData(\proj_D(x))} \sigma^{-1} +O\br{1} \qquad &\text{ if } d(y,D)=d(x,D)-1\\
            0 \qquad &\text{ else.}
        \end{dcases},
    \end{align*}
\end{proof}

\subsection{General Versions of \Cref{thm:main} and \Cref{thm:main-absorb}} \label{app:general}

We now prove generalizations of \Cref{thm:main} and \Cref{thm:main-absorb} including non-vanishing discretization errors, which is necessary to relate the error of the learned reversed transition to the separation without actually having to resort to the continuous time limit. 

For both uniform and absorbing diffusion, we make the following assumption, which is stricter that \Cref{assumption:schedule} but is still satisfied by the common noise schedules like the cosine or the ``$1/(T-t+1)$''-schedule.\\

\begin{assumption}[Noise Schedule; Discrete Time] \label{assumption:schedule-general}
    The noise schedule $(\sigma_t)_{t\in[T]}$ induced by $(\beta_t)_{t\in[T]}$ is such that for any fixed $\sigma \in(0,1)$ and time step $t=t_{\sigma}(T):= \arg\min_{s\in[T]} |\sigma_s - \sigma|$ (approximately) corresponding to this, we have
    \begin{align*}
        \sigma_t - \sigma_{t-1} = O\br{T^{-1}}, \qquad\qquad \beta_t = O(T^{-1})
    \end{align*}
    as $T\to\infty$.
\end{assumption}

\paragraph{Uniform diffusion.} The only difference to \Cref{thm:main} is that we do not argue about $\lim_{T\to\infty} \frac{q_{t-1|t}}{\beta_t/K}$ but directly about $\frac{q_{t-1|t}}{\beta_t/K}$ under the additional assumption that the order $1/T$ of the discretization error is sufficiently small ($O(\sigma^2)$) so as to not influence the orders of the noise level.\\

\begin{theorem}[Rate Separation (Uniform); Discrete Time] \label{thm:main-generalized}
    Let $q$ be the law of the uniform diffusion process. Let $x_t \in \Xc$ and $y^h \in [K]\setminus\{x_t^h\}$ and set $x_{t-1}:=(x_t^{-h},y^h)$. Consider any noise level $\sigma < 1$ and the corresponding time step $t=t_\sigma(T)=\arg\min_{s\in[T]}|\sigma_s-\sigma|$. Under \Cref{assumption:schedule-general}, for a sufficiently fine-grained discretization $\frac{1}{T} = O(\sigma^2)$, we have 
    \begin{align*}
        \frac{q_{t-1 | t}^h(y^h | x_t)}{\beta_t / K} = \begin{dcases}
            \br{\frac{\sigma}{K}}^{-1} \frac{\pData(\proj_D(x_{t-1}))}{\pData(\proj_D(x_t))} + O(1) & \text{if } d(x_{t-1},D)<d(x_t,D),\\[2mm] \frac{\pData(\proj_D(x_{t-1}))}{\pData(\proj_D(x_t))} + O(\sigma) & \text{if } d(x_{t-1},D)=d(x_t,D),\\[2mm]            
            \br{\frac{\sigma}{K}}\frac{\pData(\proj_D(x_{t-1}))}{\pData(\proj_D(x_t))} + O(\sigma^2) & \text{if } d(x_{t-1},D)>d(x_t,D),
        \end{dcases}
    \end{align*}
    in terms of $\sigma \to 0$. 
\end{theorem}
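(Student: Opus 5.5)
We follow the structure of the proof of \Cref{thm:main}, but replace every passage to the limit $T\to\infty$ by explicit $O(1/T)$ error terms. These are then absorbed into the $\sigma$-expansion using $1/T=O(\sigma^2)$. Write $x:=x_t$, $z:=x_{t-1}=(x^{-h},y^h)$ and $\Delta:=d(z,D)-d(x,D)\in\{-1,0,1\}$.

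\textbf{Step 1: Bayes decomposition.} As before, we write
\[
q^h_{t-1|t}(y^h\mid x)/(\beta_t/K)=\sum_{d=0}^{H-1}C_d/(\beta_t/K),
\]
with $C_d$ as in \eqref{eq:sum-d}. The $d=0$ term equals
\[
\frac{q_{t-1}(z)}{q_t(x)}\Bigl(1-\beta_t\tfrac{K-1}{K}\Bigr)^{H-1}.
\]
For $d\ge1$ we bound $C_d/(\beta_t/K)$ by
\[
(\beta_t/K)^d\,|\{y^{-h}\}|\,\max_y \frac{q_{t-1}(y)}{q_t(x)}.
\]

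\textbf{Step 2: uniform ratio bounds.} We need that $q_{t-1}(y)/q_t(x)$ is bounded by a constant times $\sigma_t^{-d(x,D)}$. This follows from \Cref{lemma:expansion-simple}, since $q_{t-1}(y)\le1$ and
\[
q_t(x)\ge c\,\pData(\proj_D(x))\,(\sigma_t/K)^{d(x,D)}
\]
for $\sigma_t$ small. This crude bound is too weak when $d(x,D)$ is large. Instead we note that any $y$ at Hamming distance $d+1$ from $x$ has $d(y,D)\ge d(x,D)-d-1$, which gives
\[
\frac{q_{t-1}(y)}{q_t(x)}=O\bigl(\sigma^{-(d+1)}\bigr)
\]
via \eqref{eq:score}. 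Hence
\[
\frac{C_d}{\beta_t/K}=O\bigl(\beta_t^d\sigma^{-d-1}\bigr)=O\bigl(T^{-d}\sigma^{-d-1}\bigr)=O\bigl(\sigma^{d-1}\bigr),
\]
using $\beta_t=O(1/T)$ from \Cref{assumption:schedule-general} and $1/T=O(\sigma^2)$. For $d=1$ this is $O(1)$, which suffices only in the toward-support case. For the other two cases we need a sharper count. When $\Delta\ge0$, the extra edits in $y^{-h}$ can lower the distance by at most $d$, so
\[
\frac{q_{t-1}(y)}{q_t(x)}=O\bigl(\sigma^{\Delta-d}\bigr)\quad\text{and}\quad \frac{C_d}{\beta_t/K}=O\bigl(\sigma^{2d}\sigma^{\Delta-d}\bigr)=O\bigl(\sigma^{\Delta+d}\bigr),
\]
which is of the required order $\sigma^{\Delta+1}$ for $d\ge1$. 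The same refinement applied to $\Delta=-1$ gives $O(1)$. In every case the remainder is within the claimed error.

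\textbf{Step 3: the leading term.} By \eqref{eq:score},
\[
\frac{q_{t-1}(z)}{q_t(x)}=\frac{\pData(\proj_D(z))}{\pData(\proj_D(x))}\Bigl(\frac{\sigma_t}{K}\Bigr)^{\Delta}\Bigl(\frac{\sigma_{t-1}}{\sigma_t}\Bigr)^{d(z,D)}\bigl(1+O(\sigma_t)\bigr).
\]
The discretization factors are controlled as follows:
\[
\frac{\sigma_{t-1}}{\sigma_t}=1+O\bigl(1/(T\sigma)\bigr)=1+O(\sigma),\qquad (1-\beta_t)^{H-1}=1+O(\sigma^2),\qquad \sigma_t=\sigma+O(1/T)=\sigma\bigl(1+O(\sigma)\bigr).
\]
The last bound holds because consecutive increments are $O(1/T)$, so the closest grid point lies within $O(1/T)$ of $\sigma$. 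Multiplying these out, the leading term is the coefficient ratio times $(\sigma/K)^\Delta\bigl(1+O(\sigma)\bigr)$. This yields the three cases with errors $O(1)$, $O(\sigma)$ and $O(\sigma^2)$ respectively.

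\textbf{Main obstacle.} The delicate point is Step 2. We must make the constants in the $O(\sigma_{t-1})$ remainders of \Cref{lemma:expansion-simple} uniform in $T$, which is fine since the lemma is a finite polynomial identity in $\sigma_t$. We also need the neighbour counting that shows multi-token reverse paths cannot gain more than one power of $\sigma^{-1}$ per $\beta_t$ they cost. The scaling $1/T=O(\sigma^2)$ is exactly what makes each such extra step cost $\beta_t/\sigma=O(\sigma)$.
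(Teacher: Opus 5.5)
Your proposal is correct and is essentially the paper's proof. It uses the same Bayes decomposition into the terms $C_d$, and the same treatment of the $d=0$ term with $\sigma_{t-1}/\sigma_t=1+O(1/(T\sigma))=1+O(\sigma)$ and $\sigma_t=\sigma(1+O(\sigma))$. For $d\ge 1$ it uses the same triangle-inequality bound $d(y,D)\ge d(x_{t-1},D)-d$, which together with $\beta_t^d=O(T^{-d})=O(\sigma^{2d})$ gives $C_d/(\beta_t/K)=O(\sigma^{\Delta+d})$; your preliminary $O(\sigma^{d-1})$ bound is exactly the paper's Case~1, and your refinement for $\Delta\ge 0$ reproduces its Cases~2 and~3.
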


\begin{proof}
    For ease of notation, we set $x:=x_t$ and $z:=x_{t-1}=(x_t^{-h},y^h)$. Following first part of the proof of \Cref{thm:main}, we have (see \Cref{eq:sum-d})
    \begin{align}
        q_{t-1|t}^h(y^h | x) =& \sum_{d=0}^{H-1} \underbrace{\sum_{\substack{y^{-h}\in[K]^{H-1}\\d(x^{-h},y^{-h})=d}} \frac{q_{t-1}(y)}{q_{t}(x)} \cdot \br{1-\beta_t \frac{K-1}{K}}^{H-1-d}\br{\frac{\beta_t}{K}}^{d+1}}_{=: C_d}, \label{eq:sum-d-general}
    \end{align}
    where (see \Cref{eq:score})
    \begin{align}
        \frac{q_{t-1}(y)}{q_{t}(x)} =& \frac{\pData(\proj_D(y))}{\pData(\proj_D(x))} \br{\frac{\sigma_t}{K}}^{d(y,D)-d(x,D)}\cdot\br{\frac{\sigma_{t-1}}{\sigma_t}}^{d(y,D)} \br{1+O\br{\frac{\sigma_t}{K}}}. \label{eq:score-general}
    \end{align}
    We now consider the different terms $C_d$ in the sum in \Cref{eq:sum-d-general} by plugging in the ratio from \Cref{eq:score-general}: 
    
    \textbf{Lowest order: $\mathbf{d=0}$.} 
    We have $d(x^{-h},y^{-h})=0$, so $y=(x^{-h},y^h)$. Hence, the only term here is 
    \begin{align*}
        C_0 = \frac{\pData(\proj_D(y))}{\pData(\proj_D(x))} \br{\frac{\sigma_t}{K}}^{d(y,D)-d(x,D)}\cdot\br{\frac{\sigma_{t-1}}{\sigma_t}}^{d(y,D)} \br{1+O\br{\frac{\sigma_t}{K}}} \br{1-\beta_t \frac{K-1}{K}}^{H-1}\br{\frac{\beta_t}{K}}.
    \end{align*}
    Now recall that we assumed $\frac{1}{T}=O(\sigma^2)$. Thus by \Cref{assumption:schedule-general}, we have $|\sigma_t - \sigma| \leq \max\{|\sigma_t-\sigma_{t-1}|,|\sigma_t-\sigma_{t+1}|\} = O(T^{-1})$, as well as $\frac{\sigma_{t-1}}{\sigma_t}= 1 - \frac{\sigma_t-\sigma_{t-1}}{\sigma_t} = 1 + = \frac{O(T^{-1})}{\sigma + O(T^{-1})} = O(\sigma^{-1}T^{-1})$ and $1-\beta_t \frac{K-1}{K} =1 + O(T^{-1})$, hence 
    \begin{align*}
        \frac{C_0}{\beta_t / K } =& \frac{\pData(\proj_D(y))}{\pData(\proj_D(x))} \br{\frac{\sigma}{K}}^{d(y,D)-d(x,D)} \br{1 + O(\sigma^{-1} T^{-1})}\\
        =& \frac{\pData(\proj_D(y))}{\pData(\proj_D(x))} \br{\frac{\sigma}{K}}^{d(y,D)-d(x,D)} \br{1 + O(\sigma)}.
    \end{align*}

    \textbf{Higher orders: $\mathbf{d>0}$.} In this case, we have 
    \begin{align*}
        \frac{C_d}{\beta_t / K} = \sum_{\substack{y^{-h}\in[K]^{H-1}\\d(x^{-h},y^{-h})=d}} \frac{\pData(\proj_D(y))}{\pData(\proj_D(x))} \br{\frac{\sigma_t}{K}}^{d(y,D)-d(x,D)}\cdot\br{\frac{\sigma_{t-1}}{\sigma_t}}^{d(y,D)} \cdot \br{1-\beta_t \frac{K-1}{K}}^{H-1-d}\br{\frac{\beta_t}{K}}^{d},
    \end{align*}
    Once more, $1-\beta_t\frac{K-1}{K} = 1+O(T^{-1})$ and $\sigma_{t-1}/\sigma_t = 1 + O(\sigma^{-1}T^{-1})=1+O(\sigma)$ since we assumed $T^{-1}=O(\sigma^2)$. Now $(\beta_t/K)^d = O((TK)^{-d})$. Hence, the above simplifies to
    \begin{align*}
        \frac{C_d}{\beta_t / K} = \sum_{\substack{y^{-h}\in[K]^{H-1}\\d(x^{-h},y^{-h})=d}} \frac{\pData(\proj_D(y))}{\pData(\proj_D(x))} \br{\frac{\sigma}{K}}^{d(y,D)-d(x,D)} \cdot O\br{\br{\frac{1}{TK}}^{d}}(1+O(\sigma))
    \end{align*}
    We consider each term separately and distinguish cases according to $d(z,D)$ where $z:=(x^{-h},y^h)$. In each case we will make use of the fact that by the triangle inequality,
    \begin{align}
        d(z,D) = d((x^{-h},y^h),D) \leq d((x^{-h},y^h),y) + d(y,D) = d(x^{-h},y^{-h}) + d(y,D) = d + d(y,D) . \label{eq:tri}
    \end{align}

    \textit{Case 1:} $d(z,D)=d(x,D)-1$.\\
    Then by \Cref{eq:tri}, $d(y,D)-d(x,D)=d(y,D)-d(z,D)-1\geq -d-1$, and thus $(\sigma/K)^{d(y,D)-d(x,D)} \leq (\sigma/K)^{-(d+1)}$. Hence,
    \begin{align*}
        \frac{C_d}{\beta_t / K} \leq& O(\sigma^{-(d+1)}T^{-d}K(1+\sigma^{-1}T^{-1})) 
        \leq O(\sigma^{2d-d-1}(1+\sigma)))
        \leq O(1),
    \end{align*}
    since we assumed $\frac{1}{T}=O(\sigma^2)$ and $d\geq1$. 

    \textit{Case 2:} $d(z,D)=d(x,D)$.\\
    Then by \Cref{eq:tri}, $d(y,D)-d(x,D)=d(y,D)-d(z,D)\geq -d$, and thus $(\sigma/K)^{d(y,D)-d(x,D)} \leq (\sigma/K)^{-d}$. Hence,
    \begin{align*}
        \frac{C_d}{\beta_t / K} \leq& O(\sigma^{-d}T^{-d}(1+\sigma^{-1}T^{-1})) 
        \leq O(\sigma^{2d-d}(1+\sigma)))
        \leq O(\sigma),
    \end{align*}
    since we assumed $\frac{1}{T}=O(\sigma^2)$ and $d\geq1$. 

    \textit{Case 3:} $d(z,D)=d(x,D)+1$.\\
    Then by \Cref{eq:tri}, $d(y,D)-d(x,D)=d(y,D)-d(z,D)+1\geq -d+1$, and thus $(\sigma/K)^{d(y,D)-d(x,D)} \leq (\sigma/K)^{-d+1}$. Hence,
    \begin{align*}
        \frac{C_d}{\beta_t / K} \leq& O(\sigma^{-d+1}T^{-d}(1+\sigma^{-1}T^{-1})) 
        \leq O(\sigma^{2d-d+1}(1+\sigma)))
        \leq O(\sigma^2),
    \end{align*}
    since we assumed $\frac{1}{T}=O(\sigma^2)$ and $d\geq1$. 

    Hence, distinguishing these cases for the overall value $q^h_{t-1|t}(y^h|x)=C_0 + \sum_{d=1}^H C_d$ and summing up yields, for $z=(x^{-h},y^h)$
    \begin{align*}
        \frac{q^h_{t-1|t}(y^h|x)}{\beta_t/K} = \begin{dcases}
            \br{\frac{\sigma}{K}}^{-1} \frac{\pData(\proj_D(z))}{\pData(\proj_D(x))} + O(1) & \text{if } d(z,D)=d(x,D)-1,\\[2mm] \frac{\pData(\proj_D(z))}{\pData(\proj_D(x))} + O(\sigma) & \text{if } d(z,D)=d(x,D),\\[2mm]            
            \br{\frac{\sigma}{K}}\frac{\pData(\proj_D(z))}{\pData(\proj_D(x))} + O(\sigma^2) & \text{if } d(z,D)>d(x,D).
        \end{dcases}
    \end{align*}
\end{proof}

\paragraph{Absorbing diffusion.} The statement and proof for absorbing diffusion only differ slightly.\\

\begin{theorem}[Rate Separation (Absorbing); Discrete Time] \label{thm:main-absorb-generalized}
    Let $q$ be the law of the absorbing diffusion process. Let $x_t \in \supp(q_t)$ with $x^h=m$ and $y^h \in [K]$ and set $x_{t-1}:=(x_t^{-h},y^h)$. Consider any noise level $\sigma < 1$ and the corresponding time step $t=t_\sigma(T)=\arg\min_{s\in[T]}|\sigma_s-\sigma|$. Under \Cref{assumption:schedule-general}, for a sufficiently fine-grained discretization $\frac{1}{T} = O(\sigma^2)$, we have 
    \begin{align*}
        \frac{q_{t-1 | t}^h(y^h | x_t)}{\beta_t} = \begin{dcases}
            \sigma^{-1} \frac{\pData(\proj_D(x_{t-1}))}{\pData(\proj_D(x_t))} + O(1) & \text{if } d(x_{t-1},D)<d(x_t,D),\\[2mm] 
            O(\sigma) & \text{if } d(x_{t-1},D)=d(x_t,D),
        \end{dcases}
    \end{align*}
    in terms of $\sigma \to 0$. 
\end{theorem}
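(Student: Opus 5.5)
We mirror the proof of \Cref{thm:main-generalized}, substituting \Cref{lemma:expansion-simple-absorb} for \Cref{lemma:expansion-simple}. First, as in the proof of \Cref{thm:main-absorb}, we write $x:=x_t$ and $z:=x_{t-1}=(x^{-h},y^h)$. Bayes' rule and total probability give the decomposition \eqref{eq:sum-d-absorb},
\[
q^h_{t-1|t}(y^h\mid x)=\sum_{d=0}^{H-1}C_d .
\]
Here $C_d$ collects the predecessors $y=(y^{-h},y^h)$ that agree with $x$ on all unmasked coordinates of $x^{-h}$ and satisfy $d(x^{-h},y^{-h})=d$. Each such term has weight $(1-\beta_t)^{H-1-d}\beta_t^{d+1}$.

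Next we control the ratio $q_{t-1}(y)/q_t(x)$. Whenever $y\in\supp(q_{t-1})$, \eqref{eq:score-absorb} gives it as $\frac{\pData(\proj_D(y))}{\pData(\proj_D(x))}\sigma_t^{d(y,D)-d(x,D)}(\sigma_{t-1}/\sigma_t)^{d(y,D)}(1+O(\sigma_t))$. Otherwise the ratio vanishes. Under \Cref{assumption:schedule-general} and $1/T=O(\sigma^2)$ we have:
\begin{itemize}
\item $|\sigma_t-\sigma|=O(T^{-1})$, so $\sigma_t^{k}=\sigma^{k}(1+O(\sigma))$ for bounded $k$;
\item $\sigma_{t-1}/\sigma_t=1+O(\sigma^{-1}T^{-1})=1+O(\sigma)$;
\item $(1-\beta_t)^{H}=1+O(T^{-1})$.
\end{itemize}

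\textbf{The term $d=0$.} Only $y=z$ contributes. In the support-improving case $d(z,D)=d(x,D)-1$, the state $z$ lies in $\supp(q_{t-1})$, exactly as in Case 1 of the proof of \Cref{thm:main-absorb}. Hence
\[
\frac{C_0}{\beta_t}=\sigma^{-1}\frac{\pData(\proj_D(z))}{\pData(\proj_D(x))}\bigl(1+O(\sigma)\bigr)=\sigma^{-1}\frac{\pData(\proj_D(z))}{\pData(\proj_D(x))}+O(1).
\]
In the non-improving case $d(z,D)=d(x,D)$, the argument of Case 2 there shows $q_{t-1}(z)=0$, so $C_0=0$.

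\textbf{The terms $d\ge1$.} We show these are uniformly small by using $\beta_t^d=O(T^{-d})=O(\sigma^{2d})$ to absorb any negative powers of $\sigma$ in the ratio. This requires a lower bound on $d(y,D)-d(x,D)$. For absorbing diffusion, $d(\cdot,D)$ on $\supp$ equals the number of masks. A predecessor $y$ has at most $d$ more unmasked coordinates in $y^{-h}$ than $x^{-h}$, plus the unmasked coordinate $h$. Hence $d(y,D)\ge d(x,D)-d-1$, which also follows from the triangle-inequality bound \eqref{eq:tri}. Therefore $C_d/\beta_t=O(\sigma^{-(d+1)}\sigma^{2d})=O(\sigma^{d-1})=O(1)$. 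This suffices for the improving case.

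\textbf{Main obstacle.} The non-improving case needs $O(\sigma)$, not $O(1)$, so the crude bound above loses one power of $\sigma$. I expect to recover it as follows. Any $y$ contributing nonzero mass lies in $\supp(q_{t-1})$, so $y$ is consistent with some $x_0\in D$. Then $z$ unmasks position $h$ to the value $y^h=x_0^h$ and is consistent with the same $x_0$. That would force $d(z,D)=d(x,D)-1$, contradicting non-improvement. So in the non-improving case every $C_d$ vanishes identically, giving $0=O(\sigma)$, provided the consistency argument is made rigorous.

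If that argument failed, the fallback is the sharper bound $d(y,D)\ge d(z,D)-d=d(x,D)-d$ from \eqref{eq:tri}. This gives $C_d/\beta_t=O(\sigma^{-d}\sigma^{2d})=O(\sigma^d)=O(\sigma)$ for $d\ge1$. Summing $C_0$ and the $C_d$ over the two cases yields the stated expansion.
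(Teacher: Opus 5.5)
Your proposal is correct and follows the paper's proof. It uses the same $C_d$ decomposition and the same $d=0$ case split. In the improving case it uses the same bound $d(y,D)\ge d(x,D)-d-1$ to get $O(1)$. For the non-improving terms with $d\ge1$, the paper uses exactly your fallback bound $d(y,D)\ge d(z,D)-d=d(x,D)-d$ to get $O(\sigma)$.

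Your primary consistency argument is also rigorous as it stands. Any contributing $y$ agrees with $x$ wherever $x$ is unmasked and has $y^h\in[K]$, so a witness $x_0\in D$ for $y\in\supp(q_{t-1})$ also witnesses $z$, which forces $d(z,D)=d(x,D)-1$. It gives the sharper conclusion that the non-improving reverse probability is exactly $0$ for every $T$, matching the continuous-time \Cref{thm:main-absorb}.
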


\begin{proof}
    For ease of notation, we set $x:=x_t$ and $z:=x_{t-1}=(x_t^{-h},y^h)$. Following first part of the proof of \Cref{thm:main-absorb}, we have (see \Cref{eq:sum-d-absorb})
    \begin{align}
        q_{t-1|t}^h(y^h | x) =& \sum_{d=0}^{H-1} \underbrace{\sum_{\substack{y^{-h}\in[K]^{H-1}\colon\\\forall i\colon y^{i}=x^{i}\vee x^{i}=m,\\d(x^{-h},y^{-h})=d}} \frac{q_{t-1}(y)}{q_{t}(x)} \cdot \br{1-\beta_t}^{H-1-d}\beta_t^{d+1}}_{=: C_d}.
        \label{eq:sum-d-absorb-general}
    \end{align}
    where (see \Cref{eq:score-absorb})
    \begin{align}
        \frac{q_{t-1}(y)}{q_{t}(x)} =& \frac{\pData(\proj_D(y))}{\pData(\proj_D(x))} \sigma_t^{d(y,D)-d(x,D)}\cdot\br{\frac{\sigma_{t-1}}{\sigma_t}}^{d(y,D)} \br{1+O(\sigma_t)}. \label{eq:score-absorb-general}
    \end{align}
    if $y\in \supp(q_{t-1})$ and $0$ otherwise. 
    
    We now consider the different terms $C_d$ in the sum in \Cref{eq:sum-d-absorb-general} by plugging in the ratio from \Cref{eq:score-absorb-general}: 
    
    \textbf{Lowest order: $\mathbf{d=0}$.} 

    We have $d(x^{-h},y^{-h})=0$, so the sum in \Cref{eq:sum-d-absorb} is only over $y=(x^{-h},y^h)$. 

    \textit{Case 1:} $d(y,D)=d(x,D)-1$
    Since $x\in\supp(q_t)$, this implies $y\in\supp(q_{t-1})$ (i.e. $\exists x_0\in D \colon \forall i\colon (y^i=m \vee y^i=x_0^i)$). Thus, the only term here is 
    \begin{align*}
        C_0 = \frac{\pData(\proj_D(y))}{\pData(\proj_D(x))}\sigma_t^{d(y,D)-d(x,D)}\cdot\br{\frac{\sigma_{t-1}}{\sigma_t}}^{d(y,D)} \br{1+O(\sigma_t)} \br{1-\beta_t}^{H-1}\beta_t.
    \end{align*}
    Now recall that we assumed $\frac{1}{T}=O(\sigma^2)$. Thus by \Cref{assumption:schedule-general}, we have $|\sigma_t - \sigma| \leq \max\{|\sigma_t-\sigma_{t-1}|,|\sigma_t-\sigma_{t+1}|\} = O(T^{-1})$, as well as $\frac{\sigma_{t-1}}{\sigma_t}= 1 - \frac{\sigma_t-\sigma_{t-1}}{\sigma_t} = 1 +  \frac{O(T^{-1})}{\sigma + O(T^{-1})} = O(\sigma^{-1}T^{-1})$ and $1-\beta_t =1 + O(T^{-1})$, hence 
    \begin{align*}
        \frac{C_0}{\beta_t / K } =& \frac{\pData(\proj_D(y))}{\pData(\proj_D(x))} \sigma^{d(y,D)-d(x,D)} \br{1 + O(\sigma^{-1} T^{-1})}\\
        =& \frac{\pData(\proj_D(y))}{\pData(\proj_D(x))} \sigma^{d(y,D)-d(x,D)} \br{1 + O(\sigma)}\\
        =& \frac{\pData(\proj_D(y))}{\pData(\proj_D(x))} \sigma^{-1} + O(1).
    \end{align*}

    \textit{Case 2:} $d(y,D)\geq d(x,D)$\\
    Note that since $x^h=m$ and all $x\in D$ do not contain the mask token $m$, we must in fact have $d(y,D)=d(x,D)$. Now since $x^h$ has been unmasked to $y^h$ but the distance to $D$ does not decrease, this means that there is no $x_0 \in D $ such that $\forall i \colon (y^i=m \vee y^i = x_0^i)$. Hence $q_{t-1}(y)=0$ and thus 
    \begin{align*}
        \frac{C_d}{\beta_t} = 0.
    \end{align*}

    \textbf{Higher orders: $\mathbf{d>0}$.} In this case, we have 
    \begin{align*}
        \frac{C_d}{\beta_t } \leq \sum_{\substack{y^{-h}\in[K]^{H-1}\colon\\\forall i\colon y^{i}=x^{i}\vee x^{i}=m,\\d(x^{-h},y^{-h})=d}} \frac{\pData(\proj_D(y))}{\pData(\proj_D(x))} \sigma_t^{d(y,D)-d(x,D)}\cdot\br{\frac{\sigma_{t-1}}{\sigma_t}}^{d(y,D)} \cdot \br{1-\beta_t}^{H-1-d}\beta_t^{d},
    \end{align*}
    Once more, $1-\beta_t = 1+O(T^{-1})$ and $\sigma_{t-1}/\sigma_t = 1 + O(\sigma^{-1}T^{-1})=1+O(\sigma)$ since we assumed $T^{-1}=O(\sigma^2)$. Now $\beta_t^d = O(T^{-d})$. Hence, the above simplifies to
    \begin{align*}
        \frac{C_d}{\beta_t } \leq \sum_{\substack{y^{-h}\in[K]^{H-1}\colon\\\forall i\colon y^{i}=x^{i}\vee x^{i}=m,\\d(x^{-h},y^{-h})=d}} \frac{\pData(\proj_D(y))}{\pData(\proj_D(x))} \sigma^{d(y,D)-d(x,D)} \cdot O(T^{-d})(1+O(\sigma))
    \end{align*}
    We consider each term separately and distinguish cases according to $d(z,D)$ where $z:=(x^{-h},y^h)$. In each case we will make use of the fact that by the triangle inequality,
    \begin{align}
        d(z,D) = d((x^{-h},y^h),D) \leq d((x^{-h},y^h),y) + d(y,D) = d(x^{-h},y^{-h}) + d(y,D) = d + d(y,D) . \label{eq:tri-absorb}
    \end{align}
    Note that since $y^h\neq m$ and $x^h=m$, we can only have $d(z,D)=d(x,D)-1$ or $d(z,D)=d(x,D)$.

    \textit{Case 1:} $d(z,D)=d(x,D)-1$.\\
    Then by \Cref{eq:tri-absorb}, $d(y,D)-d(x,D)=d(y,D)-d(z,D)-1\geq -d-1$, and thus $\sigma^{d(y,D)-d(x,D)} \leq \sigma^{-(d+1)}$. Hence,
    \begin{align*}
        \frac{C_d}{\beta_t} \leq& O(\sigma^{-(d+1)}T^{-d}(1+\sigma^{-1}T^{-1})) 
        \leq O(\sigma^{2d-d-1}(1+\sigma)))
        \leq O(1),
    \end{align*}
    since we assumed $\frac{1}{T}=O(\sigma^2)$ and $d\geq1$. 

    \textit{Case 2:} $d(z,D)=d(x,D)$.\\
    Then by \Cref{eq:tri-absorb}, $d(y,D)-d(x,D)=d(y,D)-d(z,D)\geq -d$, and thus $\sigma^{d(y,D)-d(x,D)} \leq \sigma^{-d}$. Hence,
    \begin{align*}
        \frac{C_d}{\beta_t} \leq& O(\sigma^{-d}T^{-d}(1+\sigma^{-1}T^{-1})) 
        \leq O(\sigma^{2d-d}(1+\sigma)))
        \leq O(\sigma),
    \end{align*}
    since we assumed $\frac{1}{T}=O(\sigma^2)$ and $d\geq1$. 

    Hence, distinguishing these cases for the overall value $q^h_{t-1|t}(y^h|x)=C_0 + \sum_{d=1}^H C_d$ and summing up yields, for $z=(x^{-h},y^h)$
    \begin{align*}
        \frac{q^h_{t-1|t}(y^h|x)}{\beta_t/K} = \begin{dcases}
            \sigma^{-1} \frac{\pData(\proj_D(z))}{\pData(\proj_D(x))} + O(1) & \text{if } d(z,D)=d(x,D)-1,\\[2mm] O(\sigma) & \text{if } d(z,D)=d(x,D).\\[2mm]            
        \end{dcases}
    \end{align*}
\end{proof}

\section{Further Details on the Synthetic Experiments} \label{app:synthetic}

\subsection{A simple regular language baseline} \label{app:def-lang}

We first describe the underlying synthetic data distribution. We consider a vocabulary of size $K$ indexed by the natural numbers from $1$ to $K$ and sequences of length $H$. We randomly sample a string of this length by rolling out the following random walk: The first token $X^1$ is sampled uniformly at random from $[K]$. Next, for any $1 \leq h \leq H-1$, given $X^{h}=x^h$ we sample $X^{h+1}$ according to an $x_h$-dependent probability distribution as
\begin{align*}
    x_{h+1} = \begin{cases}
        x_h+1 \quad &\text{w.p. } p_{\text{up}}(x_h)\\
        x_h &\text{w.p. } p_{\text{stay}}(x_h)\\
        x_h-1&\text{w.p. } p_{\text{down}}(x_h)\\
    \end{cases}
\end{align*}
for $x_h\in \{2,\dots,K-1\}$, where 
\begin{align*}
\begin{pmatrix}
    p_{\text{up}}(x)\\
    p_{\text{stay}}(x)\\
    p_{\text{down}}(x)
\end{pmatrix}
&\propto 
\begin{pmatrix}
    0.45 \exp(1.0 \cdot \sin(0.61(x-1) + 1.37))\\
    0.2 \exp(1.0 \cdot \sin(0.37(x-1) + 0.11))\\
    0.35 \exp(1.0 \cdot \cos(0.53 \cdot (x-1) - 0.29)),
\end{pmatrix}
\end{align*}
simply meaning that the transitions vary in an oscillating way from the biased random walk $(p_{\text{up}},p_{\text{stay}}, p_{\text{down}})=(0.45,0.2,0.35)$. Whenever the above random walk would leave $[K]$, we clamp it to $1$ or $K$, respectively. Deciding whether a string is in the support of this distribution can trivially be done by checking the range of neighboring tokens for each token position. Computing the exact Hamming distance can be done via a simple and standard dynamic programming subroutine.

\subsection{Further experimental details} \label{app:syn-exp-det}

As for \Cref{fig:synthetic-support-frequency,fig:synthetic-projector-main}, we consider learning the distribution of the above section with $n=100,000$ samples by training standard discrete denoising diffusion models \citep{austin2021structured}. We equip this model with a conventional bidirectional transformer architecture with $108,256$ trainable parameters (two layers, four heads, and feedforward dimension $64$) and train for $50,000$ steps with a batch size of $512$. As for the distribution, we vary sequence length $H\in\{32,64\}$ and vocabulary size $K\in\{32,64\}$. On the model side, we modify the number of diffusion steps and consider $T \in \{H, 2H\}$. We use a ``linear'' noise schedule $\sigma_t = t/T$ (corresponding to $\beta_t=(T-t+1)^{-1}$) throughout. We use the AdamW optimizer with learning rate $0.0003$ and set the hybrid loss coefficient in D3PM to $0.001$. The validation set size is $1,000$ for all runs and all listed metrics. All runs are averaged across five independent training runs. 

\subsection{Support and frequency probes}

We report the additional support versus frequency results for the synthetic setup.

\begin{figure}[H]
  \centering
  \begin{subfigure}[t]{0.47\linewidth}
    \centering
    \includegraphics[width=\linewidth]{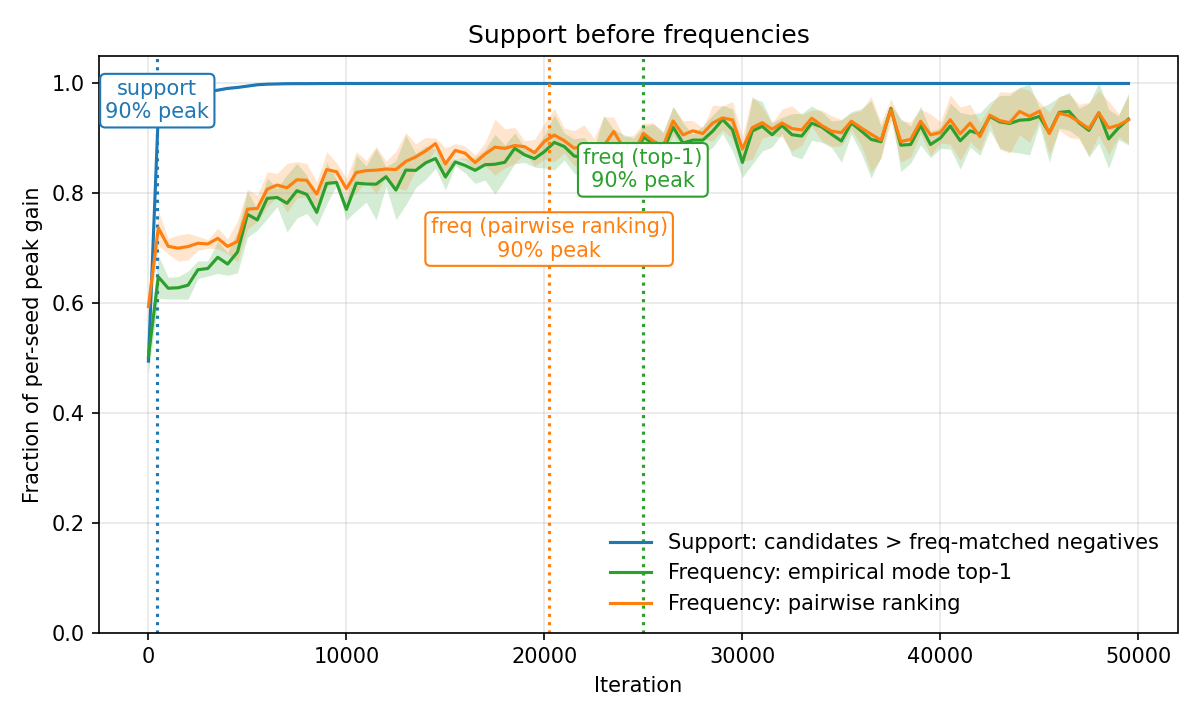}
    \caption{Absorbing diffusion}
  \end{subfigure}\hfill
  \begin{subfigure}[t]{0.47\linewidth}
    \centering
    \includegraphics[width=\linewidth]{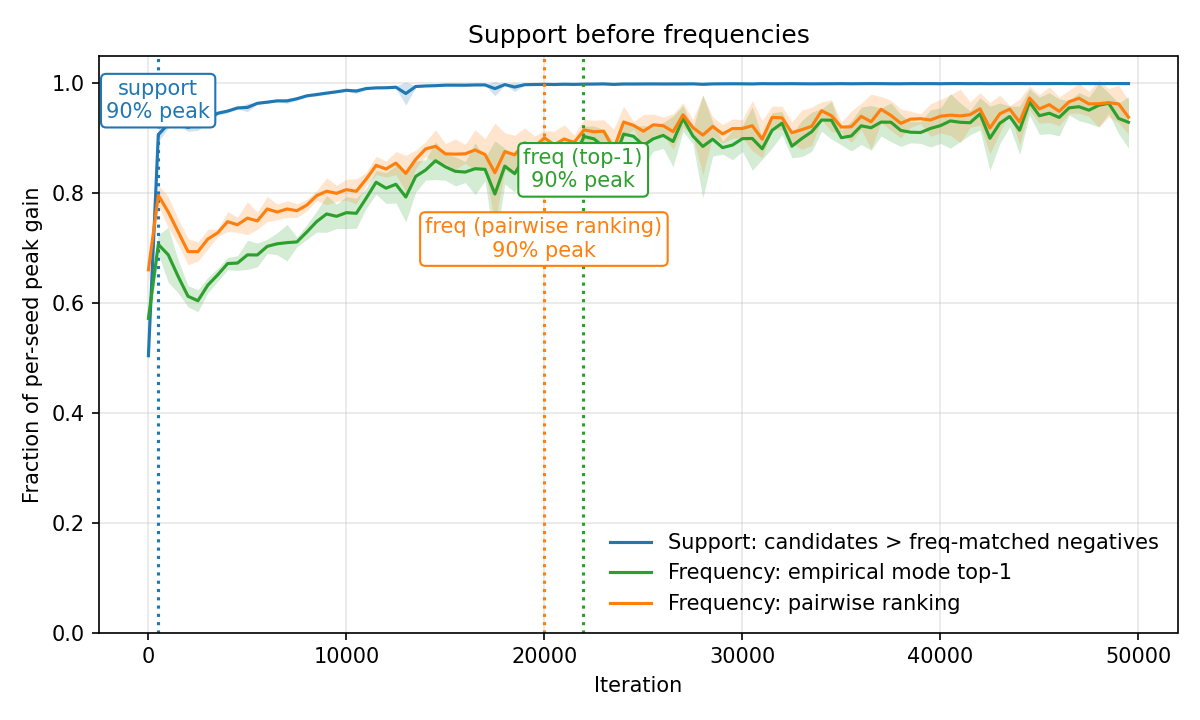}
    \caption{Uniform diffusion}
  \end{subfigure}
  \caption{Results for $H=32$, $K=64$, $T=32$.}
  \label{fig:synthetic-support-frequency-app-1}
\end{figure}

\begin{figure}[H]
  \centering
  \begin{subfigure}[t]{0.47\linewidth}
    \centering
    \includegraphics[width=\linewidth]{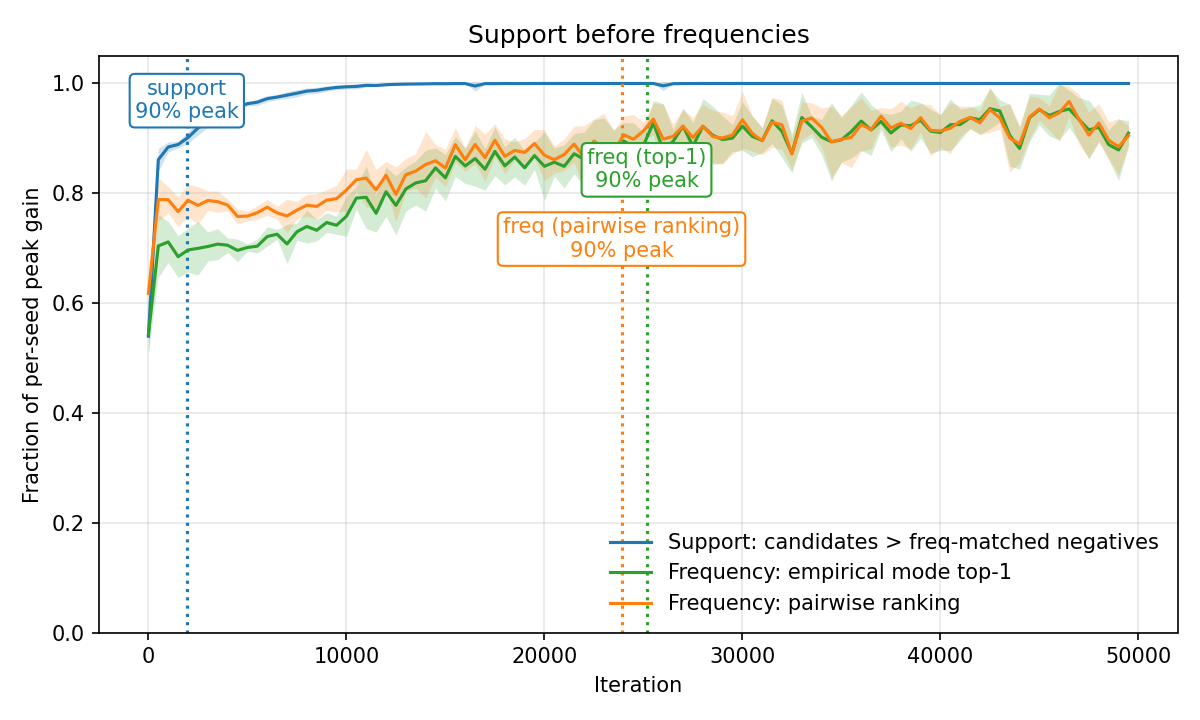}
    \caption{Absorbing diffusion}
  \end{subfigure}\hfill
  \begin{subfigure}[t]{0.47\linewidth}
    \centering
\includegraphics[width=\linewidth]{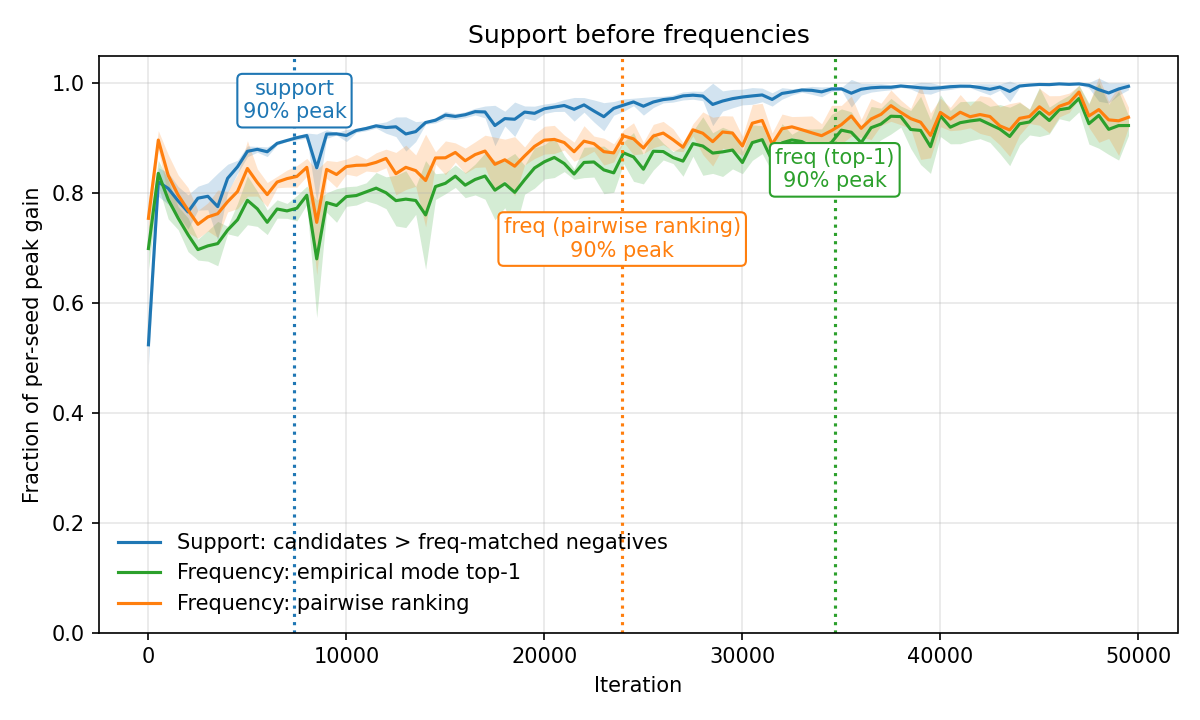}
    \caption{Uniform diffusion}
  \end{subfigure}
  \caption{Results for $H=64$, $K=32$, $T=64$.}
  \label{fig:synthetic-support-frequency-app-2}
\end{figure}

\begin{figure}[H]
  \centering
  \begin{subfigure}[t]{0.47\linewidth}
    \centering
    \includegraphics[width=\linewidth]{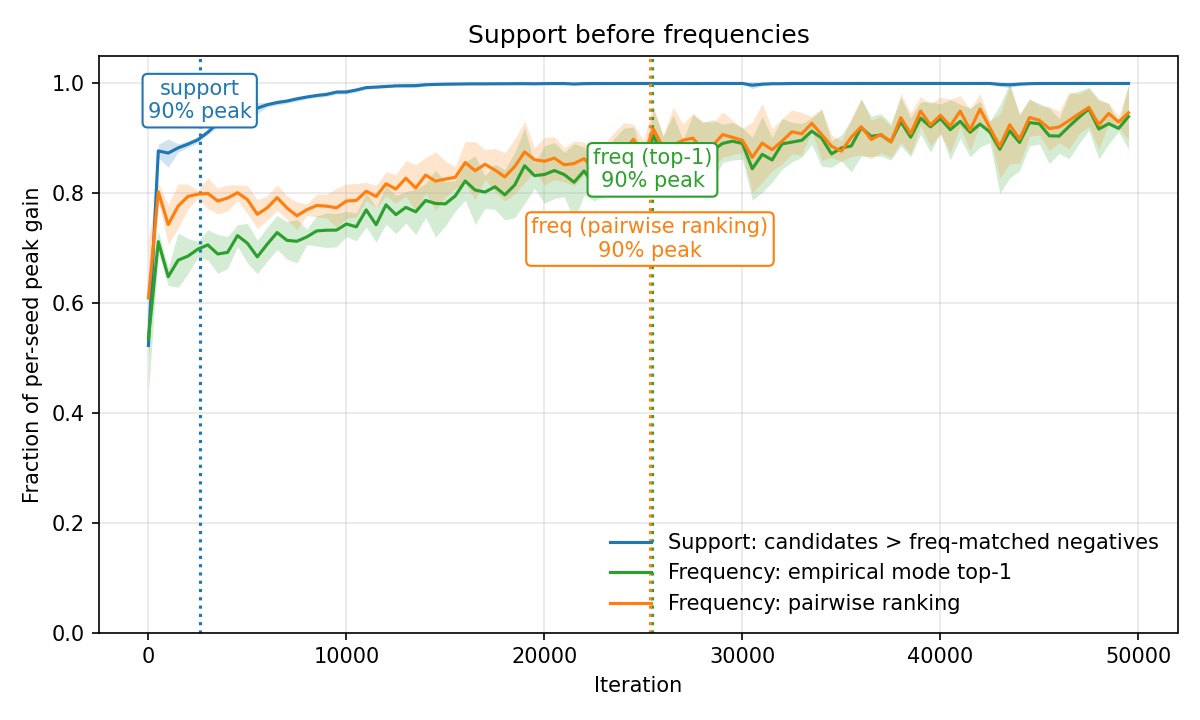}
    \caption{Absorbing diffusion}
  \end{subfigure}\hfill
  \begin{subfigure}[t]{0.47\linewidth}
    \centering
\includegraphics[width=\linewidth]{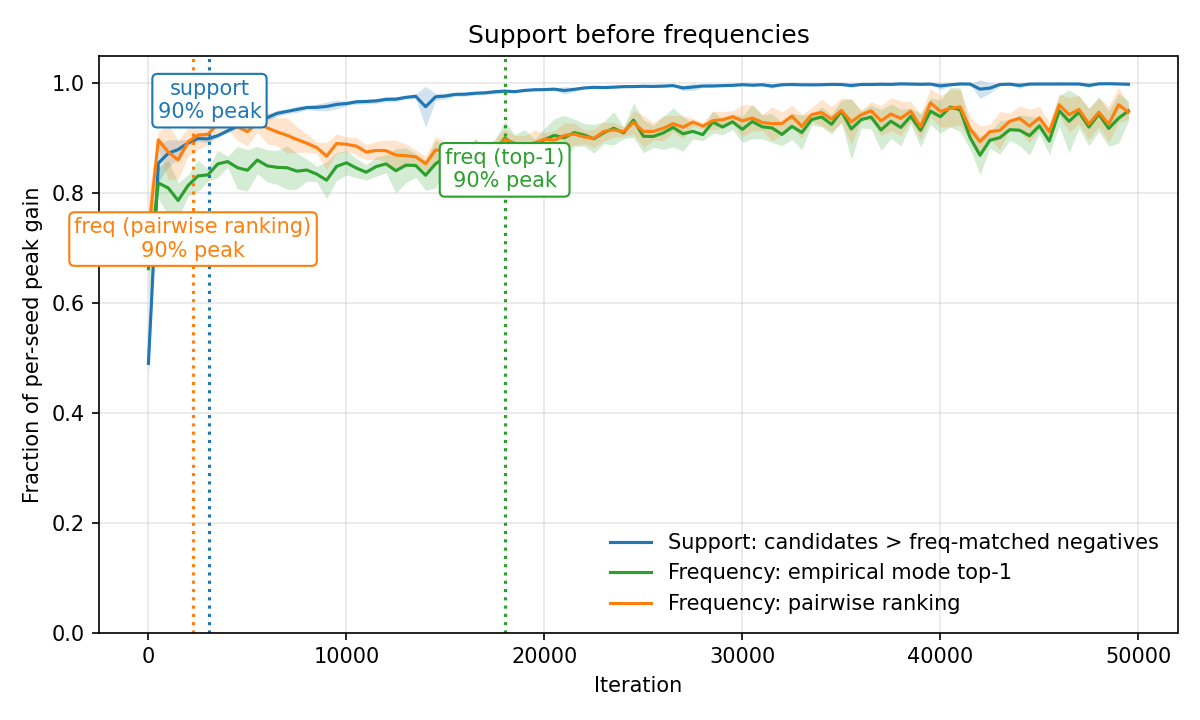}
    \caption{Uniform diffusion}
  \end{subfigure}
  \caption{Results for $H=64$, $K=32$, $T=128$.}
  \label{fig:synthetic-support-frequency-app-3}
\end{figure}

\subsection{Modified Inference-Time Algorithms} \label{sec:projector-samplers}

As motivated by our theory, we split the inference-time procedure into two phases:
\begin{itemize}
    \item \textbf{Phase 1:} Standard decoding up to a small noise level $\sigma$,
    \item \textbf{Phase 2:} Modified sampling based on the derived rate separation.
\end{itemize}
The role of the first phase is that of the standard sampler---it denoises the initial sample $x_0\sim\pUnif$, but only up until a certain point. Since our theory suggests that $p^h_{t-1|t}$ implicitly learns a classifier for the distance to the support $D=\supp(\pData)$, we explicitly make use of this fact by emulating such a projection. 

\Cref{cor:coarse-score-projection} directly motivates the following algorithm, which assumes that there is a threshold separating the direction, but the $\pData$-dependent coefficient in the expansion in \Cref{thm:main} is not well-estimated. (We write the algorithms for the uniform case; in the absorbing case, the only difference is that we should replace $K$ by $1$).

\begin{algorithm}[H] 
    \caption{Threshold sampling} \label{algo:threshold}
    \begin{algorithmic}
        \Require Initial distribution $\pHat_\sigma$, normalized $s := \frac{p^h_{t-1|t}}{\beta_t/K}$ (where $t=t_\sigma(T)$), threshold $\tau$
        \State Sample $X_0 \sim \pHat_{\sigma}$. 
        \For{$t=1, 2, \dots$}
            \State Set $S_{\text{proj}}:= \{(h,y^h)\in[H]\times[K] \mid (\sigma/K)s(X_t)_{h,y^h} > \tau\}$
            \If{$S_{\text{proj}} \neq \emptyset$}
                \State Sample $(h,y^h) \in S_{\text{proj}}$ u.a.r.
                \State $X_{t+1}=(X_{t}^{-h},y^h)$
            \Else
                \State \Return $X_t$
            \EndIf
        \EndFor
    \end{algorithmic}
\end{algorithm}

A simple variant that is more robust to the error assumption but produces samples with worse diversity is the following greedy algorithm.

\begin{algorithm}[H] 
    \caption{Hard-max sampling} \label{algo:greedy}
    \begin{algorithmic}
        \Require Initial distribution $\pHat_\sigma$, effective score $s := \frac{p^h_{t-1|t}}{\beta_t/K}$ (where $t=t_\sigma(T)$), threshold $\tau$
        \State Sample $X_0 \sim \pHat_{\sigma}$. 
        \For{$t=1, 2, \dots$}
                \State Sample $(h,y^h) \in \arg\max\bc{(\sigma/K)s(X_t)_{h,y^h}}$
                \If{$(\sigma/K)s(X_t)_{h,y^h}>\tau$}
                    \State $X_{t+1}=(X_{t}^{-h},y^h)$
                \Else
                    \State \Return $X_t$
            \EndIf
        \EndFor
    \end{algorithmic}
\end{algorithm}

For all our experiments, we choose a noise level of $\sigma=0.1$ and threshold of $0.05$. We have also observed the same effects robustly across a range of other noise levels and thresholds for both uniform and masked diffusion, which we omit here for the sake of clarity of presentation.

\subsection{Extended sampler comparison}

We provide extended details about the evaluation of masked and uniform diffusion with the modified samplers. Throughout, (a) and (b) measure the fraction of samples in the support and the average distance to the support of the samples across training checkpoints, respectively. In (c), we evaluate the final checkpoint with and without the modified samplers, and measure the fraction of their samples landing in the support. In (d), we directly compare the distance of these samples, displaying the fraction of samples in which the modified sampler wins (green), draws (gray), or loses against the standard sampler in terms of getting closer to the support in Hamming distance. 

\begin{figure}[t]
  \centering

  {\centering\textbf{Masked diffusion}}\\
  \begin{subfigure}[t]{0.24\textwidth}
    \centering
    \metricplot{figures/app_synthetic_shortened/absorb_L32K64T64/sum_n0p1_t6_thr0p05}{summary_frac_support.png}
  \end{subfigure}\hfill
  \begin{subfigure}[t]{0.24\textwidth}
    \centering
\metricplot{figures/app_synthetic_shortened/absorb_L32K64T64/sum_n0p1_t6_thr0p05}{summary_mean_dist.png}
  \end{subfigure}\hfill
  \begin{subfigure}[t]{0.24\textwidth}
    \centering
    \metricplot{figures/app_synthetic_shortened/absorb_L32K64T64/ckpt19_n0p1_t6_thr0p05}{ckpt19_frac_support.png}
  \end{subfigure}\hfill
  \begin{subfigure}[t]{0.24\textwidth}
    \centering
\metricplot{figures/app_synthetic_shortened/absorb_L32K64T64/ckpt19_n0p1_t6_thr0p05}{ckpt19_wdl.png}
  \end{subfigure}

  \vspace{0.75em}

  {\centering\textbf{Uniform diffusion}}\\
  \begin{subfigure}[t]{0.24\textwidth}
    \centering
        \metricplot{figures/app_synthetic_shortened/unif_L32K64T64/sum_n0p1_t6_thr0p05}{summary_frac_support.png}
    \caption{}
  \end{subfigure}\hfill
  \begin{subfigure}[t]{0.24\textwidth}
    \centering
        \metricplot{figures/app_synthetic_shortened/unif_L32K64T64/sum_n0p1_t6_thr0p05}{summary_mean_dist.png}
    \caption{}
  \end{subfigure}\hfill
  \begin{subfigure}[t]{0.24\textwidth}
    \centering
        \metricplot{figures/app_synthetic_shortened/unif_L32K64T64/ckpt19_n0p1_t6_thr0p05}{ckpt19_frac_support.png}
    \caption{}
  \end{subfigure}\hfill
  \begin{subfigure}[t]{0.24\textwidth}
    \centering
    \metricplot{figures/app_synthetic_shortened/unif_L32K64T64/ckpt19_n0p1_t6_thr0p05}{ckpt19_wdl.png}
    \caption{}
  \end{subfigure}

  \caption{Results for $H=32$, $K=64$, $T=64$ (see \cref{fig:synthetic-projector-main}): (a) Fraction in support across training; (b) Distance to support across training; (c) Fraction in support (final checkpoint); (d) Win rate: Distance to support (final checkpoint)}
\end{figure}

\begin{figure}[t]
  \centering

  {\centering\textbf{Masked diffusion}}\\
  \begin{subfigure}[t]{0.24\textwidth}
    \centering
        \metricplot{figures/app_synthetic_shortened/absorb_L32K64T32/sum_n0p1_t3_thr0p05}{summary_frac_support.png}
  \end{subfigure}\hfill
  \begin{subfigure}[t]{0.24\textwidth}
    \centering
        \metricplot{figures/app_synthetic_shortened/absorb_L32K64T32/sum_n0p1_t3_thr0p05}{summary_mean_dist.png}
  \end{subfigure}\hfill
  \begin{subfigure}[t]{0.24\textwidth}
    \centering
    \metricplot{figures/app_synthetic_shortened/absorb_L32K64T32/ckpt19_n0p1_t3_thr0p05}{ckpt19_frac_support.png}
  \end{subfigure}\hfill
  \begin{subfigure}[t]{0.24\textwidth}
    \centering
    \metricplot{figures/app_synthetic_shortened/absorb_L32K64T32/ckpt19_n0p1_t3_thr0p05}{ckpt19_wdl.png}
  \end{subfigure}

  \vspace{0.75em}

  {\centering\textbf{Uniform diffusion}}\\
  \begin{subfigure}[t]{0.24\textwidth}
    \centering
        \metricplot{figures/app_synthetic_shortened/unif_L32K64T32/sum_n0p1_t3_thr0p05}{summary_frac_support.png}
    \caption{}
  \end{subfigure}\hfill
  \begin{subfigure}[t]{0.24\textwidth}
    \centering
        \metricplot{figures/app_synthetic_shortened/unif_L32K64T32/sum_n0p1_t3_thr0p05}{summary_mean_dist.png}
    \caption{}
  \end{subfigure}\hfill
  \begin{subfigure}[t]{0.24\textwidth}
    \centering
    \metricplot{figures/app_synthetic_shortened/unif_L32K64T32/ckpt19_n0p1_t3_thr0p05}{ckpt19_frac_support.png}
    \caption{}
  \end{subfigure}\hfill
  \begin{subfigure}[t]{0.24\textwidth}
    \centering
    \metricplot{figures/app_synthetic_shortened/unif_L32K64T32/ckpt19_n0p1_t3_thr0p05}{ckpt19_wdl.png}
    \caption{}
  \end{subfigure}

  \caption{Results for $H=32$, $K=64$, $T=32$: (a) Fraction in support across training; (b) Distance to support across training; (c) Fraction in support (final checkpoint); (d) Win rate: Distance to support (final checkpoint)}
\end{figure}

\begin{figure}[H]
  \centering

  {\centering\textbf{Masked diffusion}}\\
  \begin{subfigure}[t]{0.24\textwidth}
    \centering
        \metricplot{figures/app_synthetic_shortened/absorb_L64K32T64/sum_n0p1_t6_thr0p05}{summary_frac_support.png}
  \end{subfigure}\hfill
  \begin{subfigure}[t]{0.24\textwidth}
    \centering
        \metricplot{figures/app_synthetic_shortened/absorb_L64K32T64/sum_n0p1_t6_thr0p05}{summary_mean_dist.png}

  \end{subfigure}\hfill
  \begin{subfigure}[t]{0.24\textwidth}
    \centering
        \metricplot{figures/app_synthetic_shortened/absorb_L64K32T64/ckpt19_n0p1_t6_thr0p05}{ckpt19_frac_support.png}

  \end{subfigure}\hfill
  \begin{subfigure}[t]{0.24\textwidth}
    \centering
        \metricplot{figures/app_synthetic_shortened/absorb_L64K32T64/ckpt19_n0p1_t6_thr0p05}{ckpt19_wdl.png}

  \end{subfigure}

  \vspace{0.75em}

  {\centering\textbf{Uniform diffusion}}\\
  \begin{subfigure}[t]{0.24\textwidth}
    \centering
        \metricplot{figures/app_synthetic_shortened/unif_L64K32T64/sum_n0p1_t6_thr0p05}{summary_frac_support.png}
    \caption{}
  \end{subfigure}\hfill
  \begin{subfigure}[t]{0.24\textwidth}
    \centering
        \metricplot{figures/app_synthetic_shortened/unif_L64K32T64/sum_n0p1_t6_thr0p05}{summary_mean_dist.png}
    \caption{}
  \end{subfigure}\hfill
  \begin{subfigure}[t]{0.24\textwidth}
    \centering
        \metricplot{figures/app_synthetic_shortened/unif_L64K32T64/ckpt19_n0p1_t6_thr0p05}{ckpt19_frac_support.png}
    \caption{}
  \end{subfigure}\hfill
  \begin{subfigure}[t]{0.24\textwidth}
    \centering
        \metricplot{figures/app_synthetic_shortened/unif_L64K32T64/ckpt19_n0p1_t6_thr0p05}{ckpt19_wdl.png}
    \caption{}
  \end{subfigure}

  \caption{Results for $H=64$, $K=32$, $T=64$: (a) Fraction in support across training; (b) Distance to support across training; (c) Fraction in support (final checkpoint); (d) Win rate: Distance to support (final checkpoint)}
\end{figure}

\begin{figure}[H]
  \centering

  {\centering\textbf{Masked diffusion}}\\
  \begin{subfigure}[t]{0.24\textwidth}
    \centering
        \metricplot{figures/app_synthetic_shortened/absorb_L64K32T128/sum_n0p1_t13_thr0p05}{summary_frac_support.png}
  \end{subfigure}\hfill
  \begin{subfigure}[t]{0.24\textwidth}
    \centering
        \metricplot{figures/app_synthetic_shortened/absorb_L64K32T128/sum_n0p1_t13_thr0p05}{summary_mean_dist.png}

  \end{subfigure}\hfill
  \begin{subfigure}[t]{0.24\textwidth}
    \centering
        \metricplot{figures/app_synthetic_shortened/absorb_L64K32T128/ckpt19_n0p1_t13_thr0p05}{ckpt19_frac_support.png}

  \end{subfigure}\hfill
  \begin{subfigure}[t]{0.24\textwidth}
    \centering
        \metricplot{figures/app_synthetic_shortened/absorb_L64K32T128/ckpt19_n0p1_t13_thr0p05}{ckpt19_wdl.png}

  \end{subfigure}

  \vspace{0.75em}

  {\centering\textbf{Uniform diffusion}}\\
  \begin{subfigure}[t]{0.24\textwidth}
    \centering
        \metricplot{figures/app_synthetic_shortened/unif_L64K32T128/sum_n0p1_t13_thr0p05}{summary_frac_support.png}

    \caption{}
  \end{subfigure}\hfill
  \begin{subfigure}[t]{0.24\textwidth}
    \centering
        \metricplot{figures/app_synthetic_shortened/unif_L64K32T128/sum_n0p1_t13_thr0p05}{summary_mean_dist.png}

    \caption{}
  \end{subfigure}\hfill
  \begin{subfigure}[t]{0.24\textwidth}
    \centering
        \metricplot{figures/app_synthetic_shortened/unif_L64K32T128/ckpt19_n0p1_t13_thr0p05}{ckpt19_frac_support.png}

    \caption{}
  \end{subfigure}\hfill
  \begin{subfigure}[t]{0.24\textwidth}
    \centering
        \metricplot{figures/app_synthetic_shortened/unif_L64K32T128/ckpt19_n0p1_t13_thr0p05}{ckpt19_wdl.png}

    \caption{}
  \end{subfigure}

  \caption{Results for $H=64$, $K=32$, $T=128$: (a) Fraction in support across training; (b) Distance to support across training; (c) Fraction in support (final checkpoint); (d) Win rate: Distance to support (final checkpoint)}
\end{figure}

\subsection{Computational resources} \label{app:syn-det} 

All runs in this section were performed on a single  NVIDIA H100 Tensor Core GPU with 96GB available GPU memory (which was never used to completion by our experiments). Each run and evaluation was performed in under one hour on one such machine. The total compute to reproduce the experiment is thus this, multiplied by the number of seeds (five) and the number of setups (eight in total). Any additional compute was spent on developing and testing these training runs.

\section{Real-data masked-DLM probes on FineWeb}
\label{app:real-dlm-fineweb}

This appendix makes the FineWeb experiment in
\Cref{sec:xp-supp-then-freq} self-contained. The experiment asks whether the
support--frequency separation predicted by our small-noise analysis is visible
in a diffusion language model trained on ordinary web text.

\subsection{Why the real-data probe is indirect}

In the synthetic experiments, the support $D=\supp(\pData)$ is known: we can
directly test whether a sample lies in $D$, estimate its distance to $D$, and
identify local moves that decrease this distance. For web text, no such oracle
exists. There is no tractable list of all valid strings. 

We therefore use a held-out one-token restoration proxy. From held-out
FineWeb text, we collect local contexts $c=(\ell,r)$ and the set of center
tokens that occur between $\ell$ and $r$. For each context, this gives an
empirical candidate set $C_c$. A token in $C_c$ is not guaranteed to be the
full linguistic support for the context, but it is a concrete support-like
object measured from held-out data. The real-data question is whether the
model first learns to separate $C_c$ from non-candidates, and only later learns
the empirical frequency ordering inside $C_c$.

For instance, suppose the local context is \texttt{the [MASK] of}. We scan
held-out FineWeb and count every center token that appears between the left
token \texttt{the} and the right token \texttt{of}. If the held-out text
contains \texttt{the end of}, then \texttt{end} is an empirical candidate for
this context. A non-candidate is a token that appears elsewhere in the same
context bank but was not observed between \texttt{the} and \texttt{of}. The
support probe asks whether the model scores attested fillers above such
slot-specific non-candidates. The frequency probe asks a different question:
among attested fillers such as \texttt{end}, \texttt{rest}, and \texttt{state},
does the model rank the more frequent fillers above the rarer ones? The
implementation uses GPT-2 tokens rather than word-level strings, but the logic
is exactly this slot-filling comparison.

\subsection{Model and training recipe}
\label{subsec:fineweb-model-training-recipe}
We train a masked diffusion language model on FineWeb. 
The model belongs to the absorbing-mask, clean-token-prediction family
reviewed in \Cref{app:dlm-parameterizations}.

The model uses the GPT-2 tokenizer vocabulary of size \(K=50{,}257\) and appends
one absorbing mask token. The context length is \(H=1024\). The denoiser is a DDiT
transformer with 12 layers, 12 attention heads, hidden width 768, RoPE
position embeddings, AdaLN conditioning layers with conditioning dimension
128, dropout $0.1$, no time conditioning, and untied input/output embeddings. The resulting model has
$169.6$M trainable parameters, of which $131.0$M are non-embedding parameters. 
In the pseudocode below we keep \(\sigma\) in the notation to identify the
standard MDLM mask probability, but this particular network's logits depend only on
the corrupted token sequence.


\begin{algorithm}[H]
\caption{FineWeb masked-DLM training update}
\label{alg:real-dlm-training-update}
\begin{algorithmic}[1]
\Require Clean token batch $x\in[K]^{B\times H}$, mask token $m$, noise floor $\varepsilon_{\rm noise}=10^{-6}$
\State Draw $u_i\in[0,1]$ and set $t_i=\varepsilon_{\rm noise}+(1-\varepsilon_{\rm noise})u_i$
\State Set mask probability $\sigma_i=(1-\varepsilon_{\rm noise})t_i$ and keep probability $\alpha_i=1-\sigma_i$
\State Form $\tilde x$ by replacing each token $x_i^h$ by $m$ independently with probability $\sigma_i$
\State Compute logits $\nn_\theta(\tilde x)$ and clean-token probabilities $\widehat{\nn}_{\theta,h}(\cdot\mid \tilde x_i)$ for all positions $h$
\State Minimize the token-mean SUBS loss
\[
    \frac{1}{BH}
    \sum_{i,h}
    \mathbf{1}\{\tilde x_i^h=m\}
    \frac{1-\varepsilon_{\rm noise}}{\sigma_i}
    \bigl[-\log \widehat{\nn}_{\theta,h}(x_i^h\mid \tilde x_i)\bigr].
\]
\State Apply one AdamW step with learning rate $3\cdot 10^{-4}$, betas $(0.9,0.999)$, and $\varepsilon_{\rm Adam}=10^{-8}$
\end{algorithmic}
\end{algorithm}

FineWeb training runs use 8 H100 80GB GPUs with a total batch size of
\(409{,}600\) tokens (50 sequences of length 1024 per GPU).
Each run (170M parameters, 354M tokens) takes about 4 minutes of
training time (\(\sim\)7 minutes wall-clock) per seed.
Probe jobs are run separately on a single GPU and take about 1h44 in total.

Note that the support and frequency probes in \Cref{fig:real-dlm-support-frequency} do
not depend on a sampler choice: they score one-token masked contexts with the
clean-token predictor described in \Cref{app:dlm-parameterizations}. However, 
we had to choose a sampler to check the sampling capabilities of the trained models, 
as reported in \Cref{app:real-dlm-training-curves}. The sampler chosen for this auxiliary diagnostic will be detailed then.

For reference, we use the cached GPT-2-token shard
stream from \texttt{kjj0/fineweb10B-gpt2}. This is under MIT license,
while the underlying text remains FineWeb and is covered by the Open Data Commons Attribution License (ODC-By) v1.0 and is also
subject to the CommonCrawl terms of use. 

\subsection{Training loss curves and samples}
\label{app:real-dlm-training-curves}

\Cref{fig:real-dlm-logged-training-loss} shows the denoising loss logged
during the FineWeb masked-diffusion runs used in
\Cref{fig:real-dlm-support-frequency}. Each run logs a stochastic training loss
at every optimizer step. To make the trend readable, we smooth each seed with a
trailing 25-step average, meaning that each plotted training point averages the
current logged loss with the previous 24 logged losses when available, and then
we average the resulting traces across seeds on the shared training window. They show a smooth optimization trajectory over the 0.354B-token window used in \Cref{fig:real-dlm-support-frequency}. 
Diamonds mark the validation evaluations at the initial and final steps.

\begin{figure}[H]
  \centering
  \includegraphics[width=0.72\linewidth]{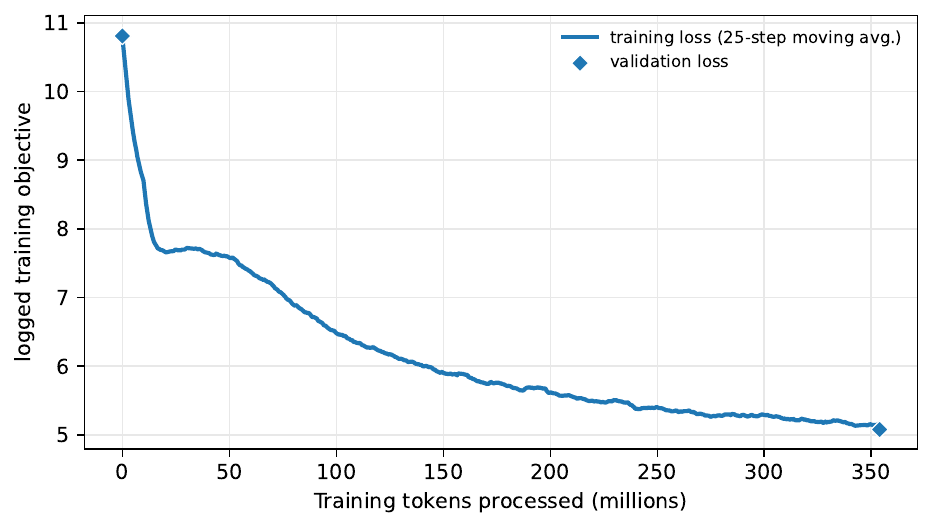}
  \caption{
  Logged denoising losses for the FineWeb masked-diffusion runs.
  The training line is the seed mean after a trailing 25-step moving average.
  Diamonds mark the validation evaluations available in the logs.
  }
  \label{fig:real-dlm-logged-training-loss}
\end{figure}

We also extend training to $3.93$B-token for one of the seeds to give an illustrative example of how losses continue to evolve past the window used in \Cref{fig:real-dlm-support-frequency}. For this extended run, we
also logged validation every 120 optimization steps. This gives a denser
single-seed view of the same quantity over the longer trajectory, see
\Cref{fig:real-dlm-logged-training-loss-long}.

\begin{figure}[H]
  \centering
  \includegraphics[width=0.72\linewidth]{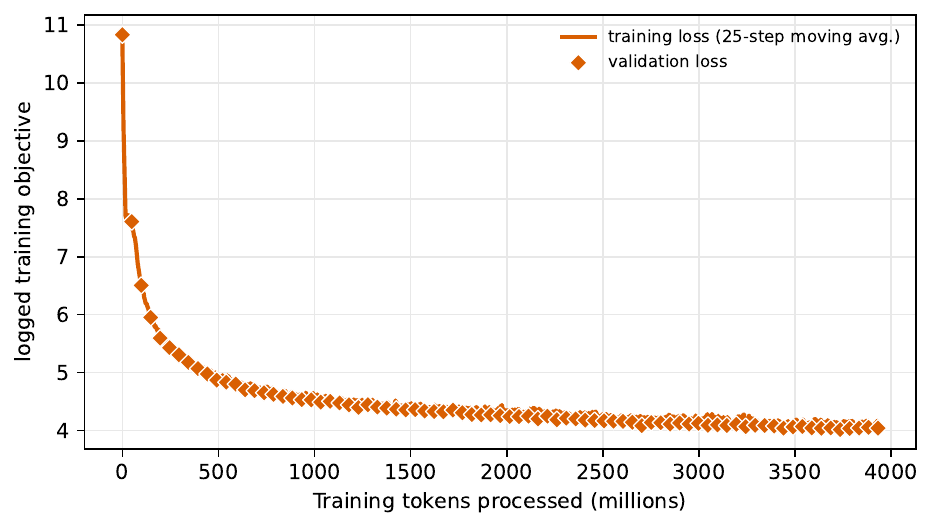}
  \caption{
  Logged denoising losses for the $3.93$B-token training run. The training curve uses the same trailing 25-step moving
  average as \Cref{fig:real-dlm-logged-training-loss}; diamonds mark the   validation evaluations from that continuation.
  }
  \label{fig:real-dlm-logged-training-loss-long}
\end{figure}

We also sampled at different stages during the extended run on $3.9$B tokens. 
The sampler (\Cref{alg:real-dlm-ancestral-sampler}) is basic ancestral sampling, as per Eq.~7 in
\citet{sahoo2024simple}, specialized to the log-linear mask schedule.

\begin{algorithm}[H]
\caption{Default absorbing-mask ancestral sampler}
\label{alg:real-dlm-ancestral-sampler}
\begin{algorithmic}[1]
\Require Number of reverse steps $T$, sampling endpoint $\varepsilon_{\rm sample}=10^{-5}$, target length $H$
\State Initialize $x$ as the all-mask sequence of length $H$
\For{$t=0,\ldots,T-1$}
    \State Set mask probabilities $\sigma_t=1-t(1-\varepsilon_{\rm sample})/T$ and $\sigma_s=\sigma_t-(1-\varepsilon_{\rm sample})/T$
    \State Evaluate $\widehat{\nn}_{\theta,h}(\cdot\mid x)$ for all positions $h$; positions already unmasked are forced to keep their current token
    \For{each still-masked position $h$}
        \State Sample from the unnormalized distribution with mass $(\sigma_t-\sigma_s)\widehat{\nn}_{\theta,h}(a\mid x)$ on each vocabulary token $a\in[K]$
        \State Assign mass $\sigma_s$ to staying masked
    \EndFor
\EndFor
\State Fill any remaining mask at position $h$ by $\arg\max_{a\in[K]} \widehat{\nn}_{\theta,h}(a\mid x)$
\end{algorithmic}
\end{algorithm}

While strong generation quality is not expected—given the model’s small size (under 200M parameters vs. billions for models like LlaDA \citep{nie2025large}) and limited training data (a few billion tokens vs. trillions for LlaDA)—our goal is simply to assess qualitative improvements over training. We observe that samples do improve over time and report representative examples below.

At $4.9$M tokens,
outputs are mostly token soup. By the support peak at $~265$M tokens and the
frequency peaks at $~334$M and $~452$M tokens, the model can already produce
short spans with recognizable local syntax or domain templates, but many
generations still collapse into repetitions, malformed entities, or abrupt
topic shifts. Extending the run to $3.93$B tokens, some samples start
sustaining a longer topic for several sentences, yet the outputs remain
clearly below high-quality web text.

\paragraph{Illustrative samples.}
\textbf{$4.9$M tokens.}
{\small\ttfamily
best) arise classification companies him community first... mechanism game P
weuner didnall Amenweb years Justice emailf willing childen active run
designedty! York your options bit Gods allSized Matthew alerts' institutions.
\par}

\textbf{$265.4$M tokens (support peak).}
{\small\ttfamily
The Pulitzer award was the award for sponsoring Workstanding Department of Kids
Walk for a free attendance performed in one of the upcoming family time. She
currently previously include the Basketball Line Scholarship and the National
Theater Heritage Center of Year's members of The International Park.
\par}

\textbf{$334.2$M tokens (pairwise-frequency peak).}
{\small\ttfamily
A reputable or company cannot claim protection in the neighbourhood that give
your loan to a real estate. Yes, you will currently re-dating a reasonable
e-gor claim.
\par}

\textbf{$452.2$M tokens (top-1-frequency peak).}
{\small\ttfamily
To locate that language corresponds very very quickly, so it is difficult to
accomplish effective representations. In fact, the analogy is simplified so
allows the language in the subject.
\par}

\textbf{$3.93$B tokens.}
{\small\ttfamily
As a freelance writing editor, it have been important to consider precisely
what it may be if you're working on writing for a New York agency. You usually
need someone as you might to manage this project in place and work to work
with someone who will ultimately help your clients.
\par}

\subsection{How one-token scores are read}

The notation follows \Cref{app:dlm-parameterizations}. The denoiser produces
one logit for each vocabulary token at each position. After a softmax, these
logits define the clean-token predictor
\[
    \widehat{\nn}_{\theta,h}(a\mid \tilde x),
\]
where $\tilde x$ is the corrupted sequence given to the model, \(h\) is the
queried position, and \(a\in[K]\) is a
candidate clean token. In the real-data model used here, the network is not
explicitly time-conditioned, so we suppress the time argument that appears in
the more general notation \(\widehat{\nn}_{\theta,h}(a\mid x_t,t)\).

For a one-token context $c=(\ell,r)$, we form the corrupted sequence
\[
    \tilde x^{(c)}=(\ell,\texttt{[MASK]},r)
\]
and read the score at the masked center position \(h_c\):
\[
    s_\theta(a\mid c)=\log \widehat{\nn}_{\theta,h_c}(a\mid \tilde x^{(c)})
\]
for each possible center token $a$. Equivalently, one may use the raw logits,
since softmax and logarithm preserve the ranking. Candidates and negatives for
the same context are always compared using the same corrupted input
$\tilde x^{(c)}$, so all probe metrics depend only on the relative denoiser
scores of alternative fillers for that slot.

\subsection{Context bank}

The held-out context bank is built from FineWeb validation token shards. We
use one left token and one right token around the center token,
\[
    c=(x^{h-1},x^{h+1}).
\]
We keep contexts with at least 8 observed center-token occurrences, evaluate
up to 8192 contexts, and cap each empirical candidate set at 32 tokens. The
cap keeps evaluation tractable while retaining the repeated-context structure
needed to estimate empirical within-context frequencies.

In the context bank used for the FineWeb figures, the median context
has 22 observed center-token occurrences and 12 distinct observed center
tokens. The mean context has 54 observed occurrences and 23 distinct observed
center tokens. Because candidate sets are capped at 32 tokens, extremely broad
contexts are truncated to their most frequent fillers; this cap retains
$96.7\%$ of the empirical center-token mass on average and $100\%$ at the
median context.

\subsection{Support probe}

For each context $c$, let $C_c$ be its empirical candidate set and let $N_c$
be a set of non-candidate negatives. A non-candidate is not claimed to be
ungrammatical in English. It is simply a token that was not observed as a
center filler for this particular held-out context. The support probe is the
candidate-vs-negative pairwise accuracy
\begin{align}
\label{eq:indirect-supp-probe}
S(\theta)
    =
    \frac{1}{|\mathcal C|}
    \sum_{c\in\mathcal C}
    \frac{1}{|C_c||N_c|}
    \sum_{a\in C_c}
    \sum_{b\in N_c}
    \indicator{s_\theta(a\mid c)>s_\theta(b\mid c)}.
\end{align}
The primary version uses 128 \emph{bank frequency-matched} negatives per
context. The matching is by global token frequency in the held-out context
bank, not by syntax or semantics. Concretely, define
\[
    g(a)=\sum_{c\in\mathcal C} n_c(a),
\]
the total number of times token $a$ appears as an empirical center candidate
anywhere in the context bank. For a context $c$, we select negatives
$b\notin C_c$ from the same context bank with $\log(1+g(b))$ close to the
$\log(1+g(a))$ values of the positive candidates $a\in C_c$. The logarithm is
intentional: context-bank token counts are heavy-tailed, and matching in
log-count space matches candidates and negatives by frequency scale rather than
by raw absolute count.
In words, if a
positive filler is a common token in the context bank, its negative competitor
is also chosen to be common; if the positive is rare, its negative competitor is
chosen to be rare. This prevents the model from passing the support probe by
simply assigning high scores to globally frequent words. We also test other types of negative examples, such as random ones and very
frequent tokens from the whole bank.
The support signal appears at the same stage under all choices, see \Cref{fig:real-dlm-support-negative-controls}. We therefore use frequency-matched negatives in the main text, as a balanced choice: they are neither too hard (like separating against the most frequent negative tokens) nor too easy (like separating against random negative
tokens). 
\begin{figure}[H]
    \centering
    \includegraphics[width=.78\linewidth]{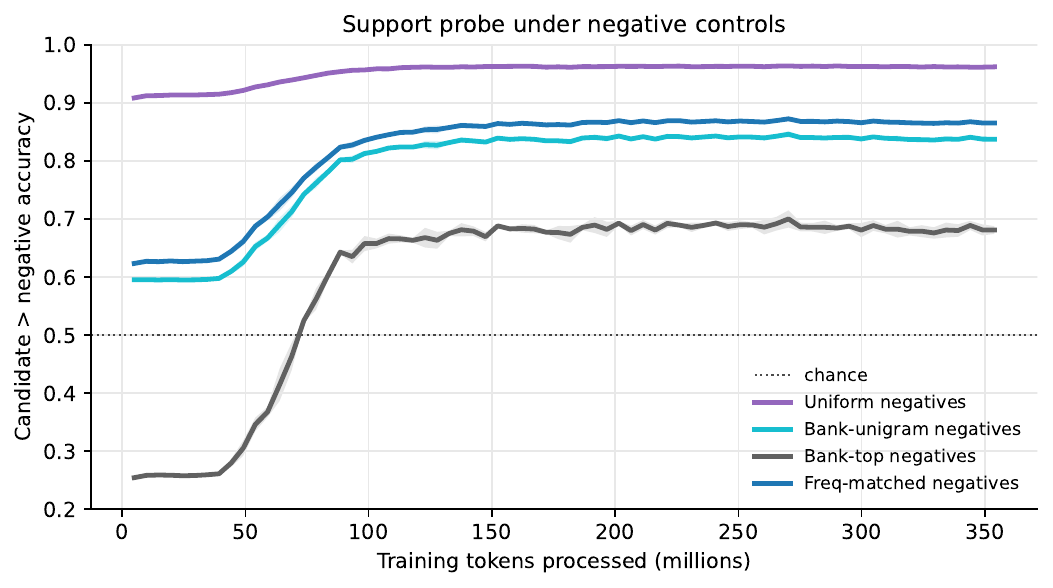}
    \caption{Support probe under different choices of non-candidate negatives.
    Uniform negatives are easiest; negatives drawn from frequent tokens in the
    context bank are harder; frequency-matched negatives are the primary curve
    used in the main text because they compare candidates against
    non-candidates with similar global frequency. The support signal rises at the same time under
    all choices.}
    \label{fig:real-dlm-support-negative-controls}
\end{figure}

\subsection{Frequency probes}

The frequency probes are evaluated only inside the empirical candidate set
$C_c$. Returning to the example above, after we know that \texttt{end},
\texttt{rest}, and \texttt{state} are attested fillers for
\texttt{the [MASK] of}, the frequency probes ignore all non-candidates and ask
whether the model orders those attested fillers according to their held-out
counts. Let $n_c(a)$ be the held-out count of candidate \(a\) in context \(c\).
The pairwise frequency probe is
\begin{align}
\label{eq:indirect-freq-probe}
F_{\mathrm{pair}}(\theta)
    =
    \frac{1}{|\mathcal C|}
    \sum_{c\in\mathcal C}
    \frac{1}{|\mathcal P_c|}
    \sum_{(a,b)\in\mathcal P_c}
    \indicator{s_\theta(a\mid c)>s_\theta(b\mid c)},
    \quad
    \mathcal P_c=\{(a,b):n_c(a)>n_c(b)\}.
\end{align}
The top-1 frequency probe asks whether the highest-scored candidate is the
empirical mode:
\begin{align}
F_{\mathrm{top1}}(\theta)
    =
    \frac{1}{|\mathcal C|}
    \sum_{c\in\mathcal C}
    \indicator{
    \arg\max_{a\in C_c}s_\theta(a\mid c)
    =
    \arg\max_{a\in C_c}n_c(a)}.
\end{align}
Thus the support probe asks whether the model has localized the right set of
plausible center tokens, while the frequency probes ask whether it has learned
their relative empirical probabilities.

\subsection{Transition criterion}

For a trajectory metric $M_t$, we report the first checkpoint reaching a fixed
fraction of its peak gain:
\begin{align}
    \tau_q(M)
    =
    \min\{t:M_t\ge M_0+q(\max_{t'}M_{t'}-M_0)\},
    \qquad q=0.9.
\end{align}
This criterion compares the onset of support localization and frequency
sharpening without requiring either curve to saturate at exactly one.

For the main learning-rate schedule, averaged over three independent seeds,
the frequency-matched support metric reaches $\tau_{0.9}$ at
$116.3\pm5.7$M tokens, while top-1 and pairwise frequency reach it at
$234.3\pm22.2$M and $247.4\pm7.5$M tokens. The seed-wise ranges are
$113.0$--$122.9$M, $211.4$--$255.6$M, and $240.8$--$255.6$M, respectively.
The absolute token counts are specific to this model, data window, context bank,
and transition criterion. Their role is to compare support and frequency under
matched conditions.

\begin{figure}[t]
    \centering
    \includegraphics[width=\linewidth]{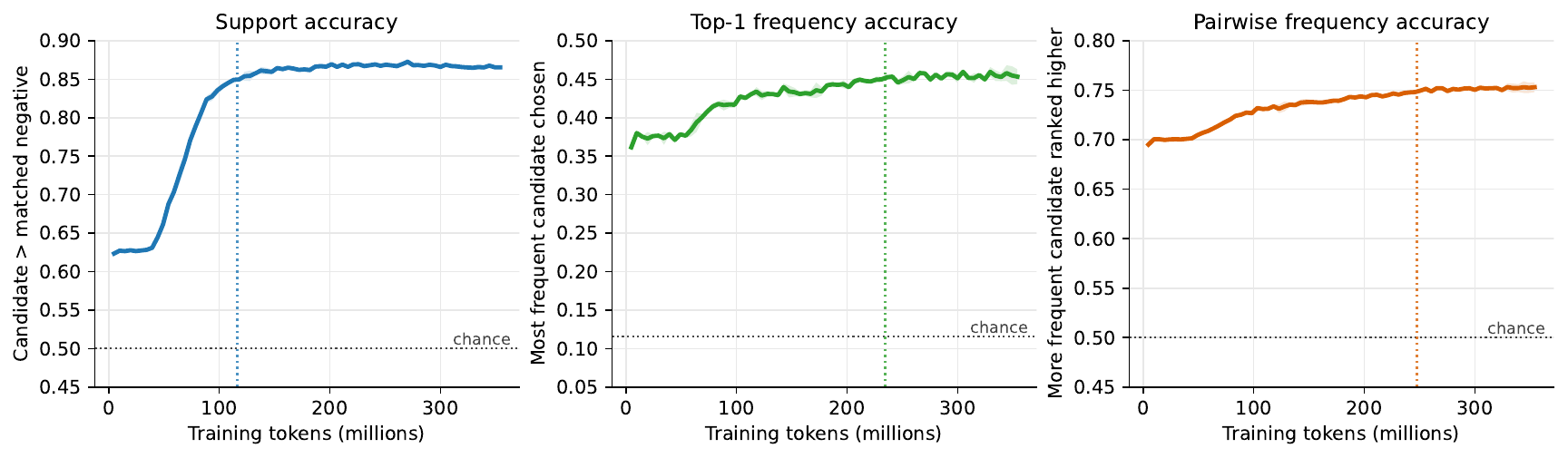}
    \caption{Raw metric levels for the FineWeb masked-DLM probes. The support
    probe rises from $0.624$ to a peak of $0.873$ candidate-vs-negative
    accuracy, with chance at $0.5$. Top-1 frequency recovery rises from
    $0.362$ to $0.464$, compared with a random-candidate baseline of $0.115$.
    Pairwise frequency recovery rises from $0.695$ to $0.756$, with chance at
    $0.5$. Dotted vertical lines mark the $90\%$ peak-gain transition times.}
    \label{fig:real-dlm-raw-levels}
\end{figure}

\subsection{Generalization and memorization diagnostic}

Several recent works study whether diffusion models generalize or memorize
training samples. In particular, \citet{bonnaire2025why} identify two training
timescales in continuous diffusion models: an early time at which high-quality
samples appear, and a later time at which memorization can emerge. Our real-DLM
experiment targets this earlier generalization window. Within that window, we identify a  substructure: 
support localization and within-support frequency matching can be temporally
decoupled.

To see that, we include here a train/validation diagnostic to locate the observed support--frequency separation relative to train-specific memorization.

For each checkpoint, we evaluate the same clean-token predictor on fixed train
and validation windows using fixed masking levels. Let $x$ be a clean token
window, let $\tilde x_\rho$ be the same window after masking a fraction
approximately $\rho$ of positions, and let $M_\rho$ be the masked positions. We
compute the masked-token denoising cross-entropy
\[
    \mathcal L_{\rho}(\theta;x)
    =
    \frac{1}{|M_\rho|}
    \sum_{h\in M_\rho}
    -\log \widehat{\nn}_{\theta,h}(x^h\mid \tilde x_\rho).
\]
This is the clean-token prediction loss family used by the masked-DLM
objective, evaluated at fixed masking levels. We use three masking levels,
masking approximately $15\%$, $50\%$, or almost all tokens in each window. For
a masking level $\rho$, we track
\begin{align}
    G_\rho(t)=
    \mathcal L_{\mathrm{val},\rho}(t)
    -
    \mathcal L_{\mathrm{train},\rho}(t).
\end{align}
A growing positive gap would indicate train-specific memorization. In the
early-token window used for the support and frequency probes, both train and
validation losses improve while the gap remains small. We
do not see the widening positive train advantage expected from train-specific
memorization in the window where support and frequency separate. See \Cref{fig:real-dlm-train-val-gap} for this gap, and \Cref{fig:real-dlm-train-val-nll} for the corresponding absolute train
and validation masked-token denoising cross-entropies, averaged over the three
seeds. 

\begin{figure}[H]
    \centering
    \includegraphics[width=.78\linewidth]{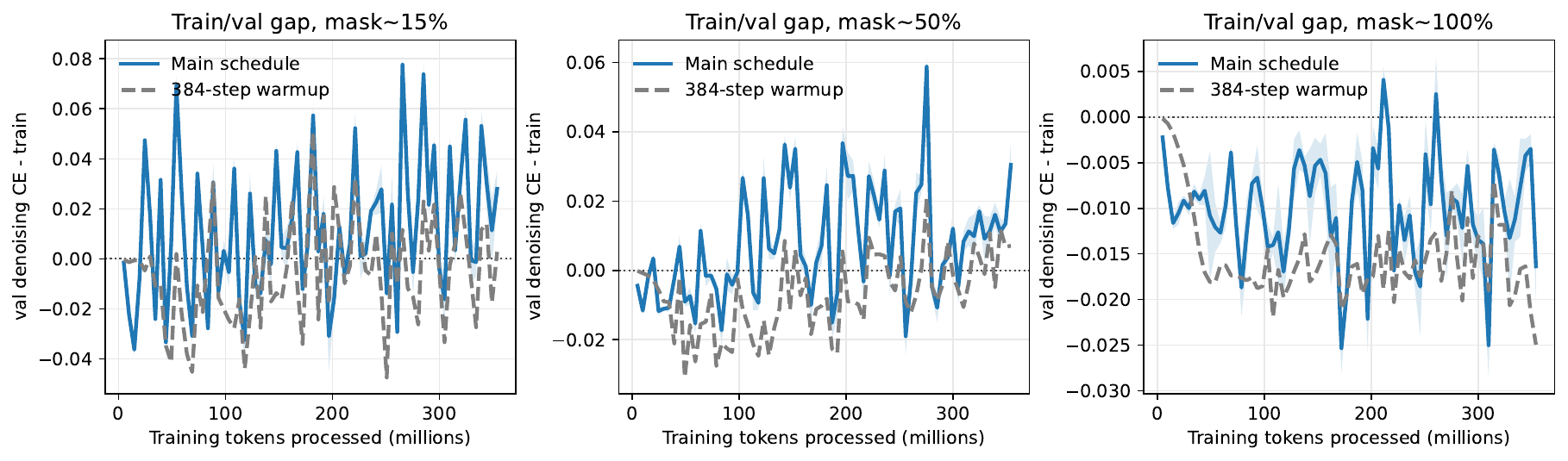}
    \caption{Train/validation diagnostic for the FineWeb masked-DLM trajectory.
    In the support--frequency transition window, train and validation
    masked-token denoising losses improve together and the
    validation-minus-train gap remains small across masking levels.}
    \label{fig:real-dlm-train-val-gap}
\end{figure}

\begin{figure}[H]
    \centering
    \includegraphics[width=.95\linewidth]{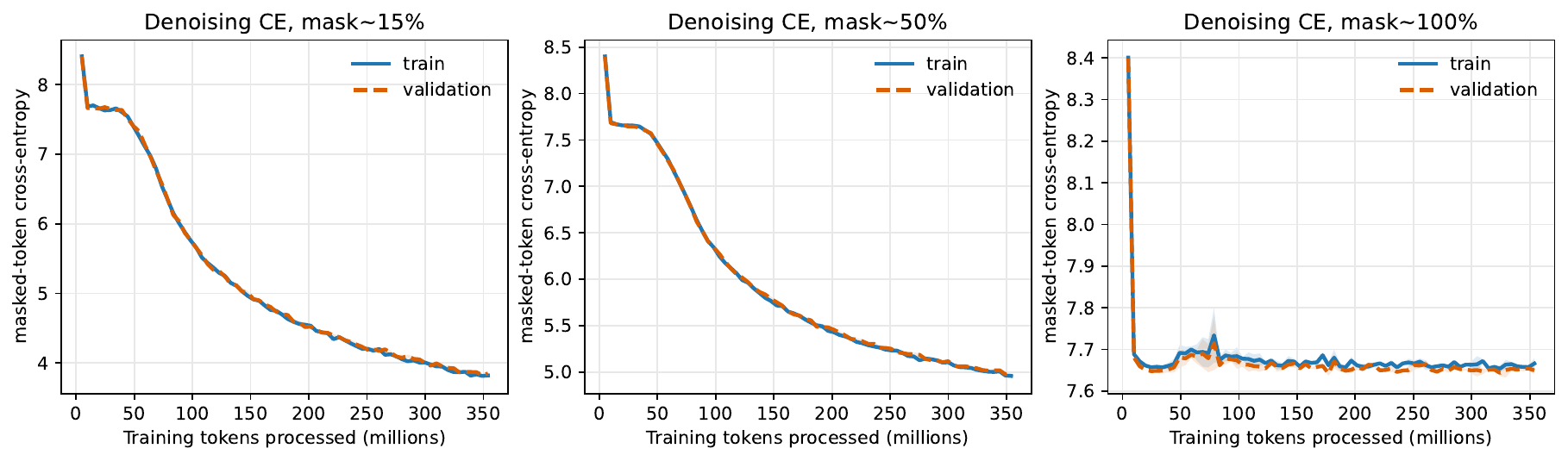}
    \caption{Absolute train and validation masked-token denoising cross-entropies
    for the FineWeb masked-DLM trajectory. Curves are averages over the three seeds at three masking levels.}
    \label{fig:real-dlm-train-val-nll}
\end{figure}

\subsection{Learning-rate schedule robustness}
\label{app:real-dlm-warmup}

Because these transitions occur early in training, we check that the conclusion
is not tied to a single learning-rate schedule. We repeat the probe suite with
the same data, context bank, model, and token window, but insert a 384-step
learning-rate warmup. The support transition moves later under this schedule,
while the frequency transitions do not move by the same amount. In this
single-seed control, the frequency-matched support transition occurs at
$186.8$M tokens, top-1 mode recovery at $191.7$M tokens, and pairwise frequency
ranking at $255.6$M tokens. Thus the exact margin between support and top-1
frequency is schedule-sensitive, but the ordering is not reversed and the
pairwise frequency ranking remains later.

\begin{figure}[H]
    \centering
    \includegraphics[width=\linewidth]{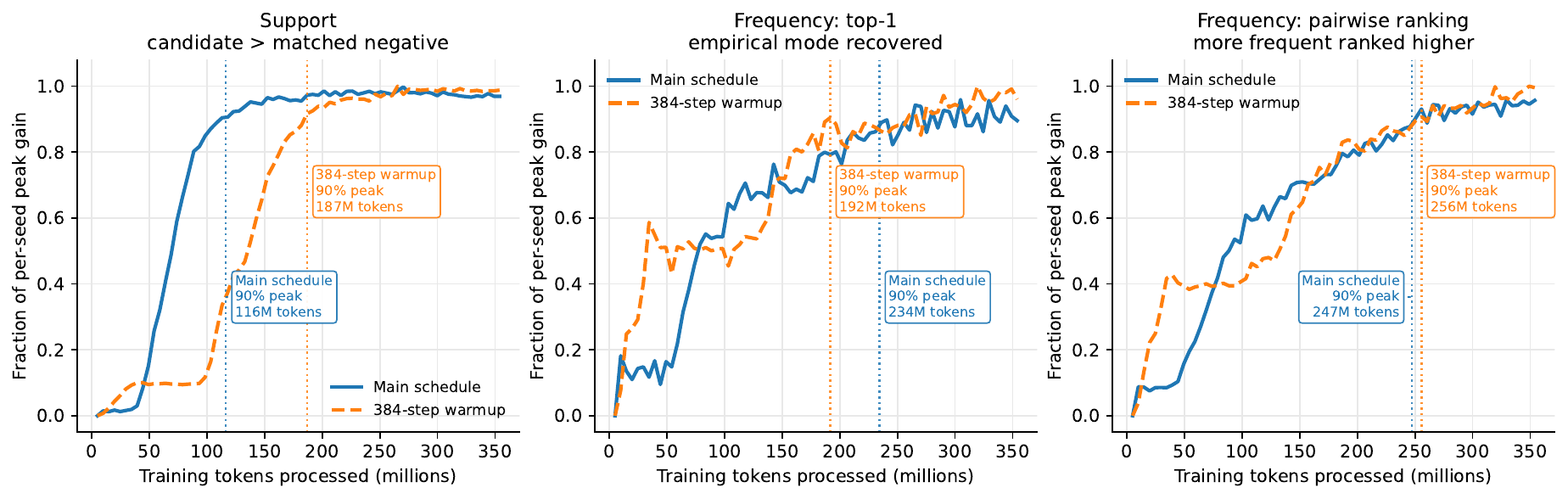}
    \caption{Learning-rate schedule robustness. Labeled dotted lines show the
    $90\%$ peak-gain transition for each curve. Adding a 384-step learning-rate
    warmup delays the support transition and brings the top-1 frequency
    transition close to it, while pairwise frequency ranking remains later.}
    \label{fig:real-dlm-warmup-robustness}
\end{figure}

\subsection{Synthetic counterpart}
\label{app:synthetic-fineweb}

As a check on the interpretation of the probes used on FineWeb, we also run the same experiment in a synthetic setting where the true
distribution $\pData$ is known. In this setting, direct versions of the probes considered before are available, so we can check whether the indirect versions used on FineWeb track their exact oracle counterpart in this synthetic case. \Cref{fig:synthetic-fineweb-counterpart} shows that they do. We now detail the setup below. 

\begin{figure}[H]
    \centering
    \includegraphics[width=\linewidth]{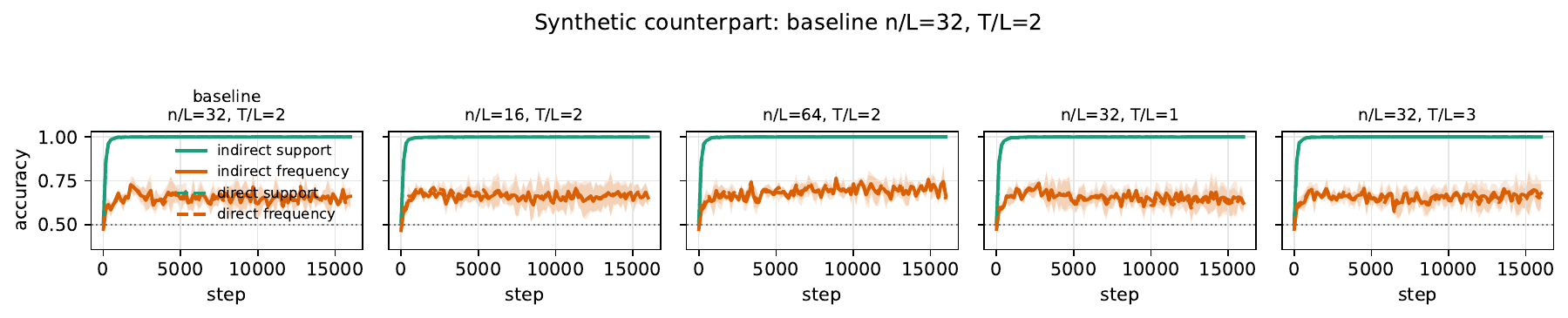}
    \caption{\textbf{Synthetic counterpart to the FineWeb probes in \Cref{eq:indirect-supp-probe,eq:indirect-freq-probe}.}
    Each panel uses length-$64$ sequences over 32 symbols and changes either
    the number of training sequences relative to sequence length ($n/L$) or the
    diffusion horizon relative to sequence length ($T/L$). Solid curves use
    held-out empirical context counts, as in FineWeb, dashed curves use the
    exact random-walk oracle and are almost everywhere coinciding with the indirect versions.}
    \label{fig:synthetic-fineweb-counterpart}
\end{figure}

Instead of defining candidate center tokens by held-out counts as in \Cref{eq:indirect-supp-probe}, 
we define the support probe from the exact conditional support
$\{a:p(a\mid \ell,r)>0\}$, and instead of ranking candidates by empirical
held-out frequencies as in \Cref{eq:indirect-freq-probe}, we rank them by the exact conditional probabilities
$p(a\mid \ell,r)$. 

For a local context $c=(\ell,r)$, let
\begin{align}
    q_c(a)
    :=
    \pData(X^h=a\mid X^{h-1}=\ell,X^{h+1}=r)
    =
    \frac{P(a\mid \ell)P(r\mid a)}
    {\sum_{b}P(b\mid \ell)P(r\mid b)} ,
    \label{eq:synthetic-direct-conditional}
\end{align}
where $P(\cdot\mid\cdot)$ is the known random-walk transition matrix. Let
$C_c^\star=\{a:q_c(a)>0\}$ be the exact candidate set, and let
$N_c^\star$ be the frequency-matched non-candidate set obtained by the same
global-frequency matching rule as in the FineWeb support probe, but excluding
$C_c^\star$. We use the comparison (with $1/2$ in case of tie)
\[
    \psi_\theta^c(a,b)
    =
    \indicator{s_\theta(a\mid c)>s_\theta(b\mid c)}
    +\frac12\indicator{s_\theta(a\mid c)=s_\theta(b\mid c)}.
\]
The direct support probe computed here is
\begin{align}
S_{\mathrm{direct}}(\theta)
    =
    \frac{1}{|\mathcal C|}
    \sum_{c\in\mathcal C}
    \frac{1}{|C_c^\star||N_c^\star|}
    \sum_{a\in C_c^\star}
    \sum_{b\in N_c^\star}
    \psi_\theta^c(a,b).
    \label{eq:synthetic-direct-supp-probe}
\end{align}
The plotted direct frequency probe is the pairwise ranking accuracy
\begin{align}
F_{\mathrm{direct}}(\theta)
    =
    \frac{1}{|\mathcal C|}
    \sum_{c\in\mathcal C}
    \frac{1}{|\mathcal P_c^\star|}
    \sum_{(a,b)\in\mathcal P_c^\star}
    \psi_\theta^c(a,b),
    \quad
    \mathcal P_c^\star
    =
    \{(a,b)\in C_c^\star\times C_c^\star:q_c(a)>q_c(b)\}.
    \label{eq:synthetic-direct-freq-probe}
\end{align}

\Cref{fig:synthetic-fineweb-counterpart} shows
the mean $\pm$ one standard deviation over seeds. The indirect probes based on held-out counts and the direct
oracle probes have the same qualitative behavior. For the
support metric, they coincide on these runs. For the pairwise frequency metric,
their mean absolute difference across checkpoints is about $0.02$ in accuracy. Support saturates early, and
frequency ranking improves later and remains noisier. In this synthetic case
the support probe is particularly easy and saturates much earlier than
frequency.

This synthetic result is only a sanity check in a synthetic case: it shows that, in a
case where the true distribution is known, the held-out-count probes do track the exact
support and frequency versions of the metrics in \Cref{eq:indirect-supp-probe,eq:indirect-freq-probe}.

The data are sequences of length \(H=64\) over the vocabulary
\(\{0,\ldots,K-1\}\) with \(K=32\). The first token is uniform. If the current token is \(v\), the
next token is drawn from the three-token set \(\{v-1,v,v+1\}\), clipped at the
vocabulary boundaries. The baseline transition probabilities favor upward
moves: \(p(v+1\mid v)=0.45\), \(p(v\mid v)=0.35\), and
\(p(v-1\mid v)=0.20\), with a deterministic token-dependent renormalization that
changes the probabilities across positions while leaving the three-token
support unchanged. The baseline setting uses $n=2048$ training sequences
($n/H=32$) and diffusion horizon $T=128$ ($T/H=2$). We also evaluate
$n/H=16$ and $n/H=64$ at fixed $T/H=2$, and $T/H=1$ and $T/H=3$ at fixed
$n/H=32$. Lower $n/H$ means fewer distinct training sequences under the same
optimizer-step budget; changing $T/H$ changes the number of discrete mask-noise
levels.

\end{document}